\renewcommand\AB@authnote[1]{\textsuperscript{#1}\hspace{5pt}}
\title{Constrained Contextual Online Decision Making:\\ A Unified Framework}
\author[]{Haichen Hu}
\author[]{David Simchi-Levi}
\author[]{Navid Azizan\thanks{Email: \texttt{\{huhc,dslevi,azizan\}@mit.edu}}}
\affil[]{Massachusetts Institute of Technology}
\date{}
\begin{document}
\maketitle
\begin{abstract}
Contextual online decision-making problems with constraints appear in a wide range of real-world applications, such as adaptive experimental design under safety constraints, personalized recommendation with resource limits, and dynamic pricing under fairness requirements. In this paper, we investigate a general formulation of sequential decision-making with stage-wise feasibility constraints, where at each round, the learner must select an action based on observed context while ensuring that a problem-specific feasibility criterion is satisfied.
We propose a unified algorithmic framework that captures many existing constrained learning problems, including constrained bandits, active learning with label budgets, online hypothesis testing with Type I error control, and model calibration. Central to our approach is the concept of upper counterfactual confidence bounds, which enables the design of practically efficient online algorithms with strong theoretical guarantees using any offline conditional density estimation oracle. To handle feasibility constraints in complex environments, we introduce a generalized notion of the eluder dimension, extending it from the classical setting based on square loss to a broader class of metric-like probability divergences. This allows us to capture the complexity of various density function classes and characterize the utility regret incurred due to feasibility constraint uncertainty. Our result offers a principled foundation for constrained sequential decision-making in both theory and practice.
\end{abstract}
\section{Introduction}
Contextual sequential decision-making plays a critical role in various domains, including online platforms~\citep{ban2024neural}, healthcare systems~\citep{zhou2023spoiled}, and other industries. In many real-world applications, the decision maker (DM) often operates under some constraints that must be satisfied~\citep{min2024hard}. For example, there may be safety constraints in power systems~\citep{ceusters2023safe}, fairness constraints in online platforms~\citep{verma2024online}, or risk constraints in portfolio management~\citep{galichet2013exploration}. The presence of such constraints significantly complicates the sequential decision-making process, making it more challenging to manage the inherent exploration-exploitation trade-off.

To address this challenge, for different statistical tasks, a variety of adaptive constrained decision-making algorithms have been proposed, e.g., constrained bandits~\citep{jagerman2020safe}, active learning under constraints~\citep{awasthi2021neuralactivelearningperformance}, and online hypothesis testing with Type I error control~\citep{malek2017sequential}. Despite the diversity of these applications, they often share a common structural pattern. In each round, the DM observes a context $x_t$, such as the symptoms of the patient in a healthcare setting, and based on this information, the DM selects an action $a_t$ that is deemed feasible with respect to some estimated constraint. Subsequently, the DM receives feedback in the form of data sampled from the underlying distributions associated with the context-action pair $(x_t, a_t)$ and proceeds to the next round.

The core challenges within this general framework stem from two fundamental issues: (i) the uncertainty regarding the true underlying distributions and (ii) the risk of constraint violations due to estimation errors. If the DM had full knowledge of the true distributions, the problem would be reduced to a deterministic optimization problem, which is typically straightforward to solve. However, given only approximate knowledge of the distributions, an action that appears feasible under the estimated model may, in fact, violate constraints when deployed in practice.

In this paper, we introduce a unified framework for general constrained sequential decision-making problems and present an efficient algorithm to solve them. More specifically, our contributions can be summarized as follows:
\begin{itemize}
    \item \textbf{A New General Framework:} We propose a new unifed framework to handle general online decision-making problems with stage-wise constraints (\cref{sec:model}). Our framework handles many sequential decision-making problems, such as constrained bandits, stream-based active learning, online hypothesis testing, and sequential model calibration as special cases (\cref{sec:motivating_examples}).
    \item \textbf{A Novel Conceptual Generalization of Eluder Dimension:} Eluder dimension~\citep{russo2013eluder} is a complexity measure often used to characterize the difficulty of optimal exploration in bandits and the regret caused by conservative exploration in constrained online learning \citep{sridharan2024online}. We extend the idea and concept of the eluder dimension \citep{russo2013eluder} from the reward function class with squared error to any density model class with any metric-like probability divergence (\cref{sec:est_oracle_and_PDED}). This new measure helps us balance aggressive and conservative exploration to handle feasibility.
    \item \textbf{An Efficient Unified Algorithm:} We propose a unified algorithm \textbf{GED-UCB} that could easily transfer any offline density estimation oracle into an efficient sequential decision-making algorithm (\cref{sec:algo_design}). Unlike \citet{hu2025contextual}, where the underlying density class is restricted to a linear model, our algorithm could handle any density model class, including linear and non-linear model classes. This greatly enriches the range of problems that we can solve.
\end{itemize}

The remainder of this paper is organized as follows. In~\cref{sec:model}, we introduce our general decision-making model and the key conceptual innovation that characterizes the statistical complexity of learning the underlying distributions. In~\cref{sec:motivating_examples}, we demonstrate how our framework applies to several real-world scenarios, highlighting its broad practical relevance. In~\cref{sec:theoretical_results}, we present our main theoretical guarantees along with intermediate lemmas that are critical to the proofs. Subsequently, in~\cref{sec:bounding_PDED} and~\cref{sec:oracle_examples}, we provide illustrative examples for bounding the generalized eluder dimension and discuss specific instances of density estimation oracles.

\section{Related Work}\label{sec:related_works}


\textbf{Sequential decision-making} is a fundamental problem in modern machine learning. Early work, such as \citet{bastani2020online}, focused on sequential decision-making with finite high-dimensional covariates, using contextual bandits as a representative example. From a more general perspective, \citet{foster2021statistical} and \citet{zhong2022gec} investigated the learnability of general model classes by introducing the decision-estimation coefficient and its generalizations to characterize the complexity of interactive decision-making. Closely related, \citet{hu2025contextual} proposed an algorithm for unconstrained online decision-making problems based on linear functional regression. However, these studies focus primarily on maximizing expected rewards without addressing additional structural constraints. In contrast, our paper provides a more general framework that accommodates sequential decision-making under stage-wise constraints. Additionally, in contrast with \citet{hu2025contextual}, we support both linear and nonlinear density classes while incorporating constraints into the decision process.

Our work generalizes the notion of the \textbf{eluder dimension}, originally introduced by~\citet{russo2013eluder} to study optimal exploration via function approximation in bandit problems. From a theoretical perspective, \citet{li2022understanding} and \citet{hanneke2024star} provided deeper insights into the eluder dimension by relating it to other complexity measures of function classes, such as the star number and the threshold dimension.
On the application side, the eluder dimension has been widely employed in online learning literature, particularly in bandits and reinforcement learning, to characterize generalization error and regret under function approximation~\citep{osband2014model, wang2020reinforcement, jia2024does, pacchiano2024second}.
In this paper, we extend the concept of the eluder dimension beyond reward functions with squared loss to general density models under arbitrary metric-like probability divergences (see~\cref{def:metric-like-loss}).

The study of \textbf{online learning with constraints} was initiated by~\citet{mannor2009online}, where the learner aims to maximize expected rewards subject to long-term sample path constraints. Later on, \citet{mahdavi2012trading,castiglioni2022unifying,liakopoulos2019cautious,yu2017online} studied the regret upper bound and constraint violation in online optimization with long-term constraints. Instead of quantifying constraint violation, our paper gives a feasibility guarantee such that with high probability, it holds for all rounds. On the other hand, \citet{sridharan2024online,hutchinson2024directionaloptimismsafelinear} investigated online optimization with per-round constraints. The benchmark in these papers is the best fixed action in hindsight, which is non-contextual and the regret is static, whereas in our paper, we study the per-context optimal decision-making problem and our benchmark is contextualized dynamic regret. In brief, we have a high probability feasibility guarantee with a dynamic regret guarantee in our paper.

Our work is also closely related to the body of research on \textbf{online learning with general function approximation}. \citet{xu2020upper} proposed the Upper Counterfactual Confidence Bound (UCCB) algorithm, which constructs upper confidence bounds in the policy space under general reward function approximation. \citet{pmlr-v235-levy24a} used UCCB principle to study the regret in contextual MDPs. \citet{foster2020beyond} addressed the contextual bandit problem using inverse-gap weighting policies, assuming access to an online regression oracle. More recently, \citet{pacchiano2024second} and \citet{jia2024does} examined general contextual bandits with small noise levels and established connections between reward variance and regret. In the context of reinforcement learning, \citet{wang2020provably, wang2020reinforcement} studied regret minimization under general value function approximation. In contrast, our work focuses on general sequential decision-making with feasibility constraints, leveraging density function approximation oracles.

Another related literature is \textbf{safe reinforcement learning} (RL). For comprehensive overviews, we refer readers to the surveys by \citet{gu2022review} and \citet{wachi2024survey}. \citet{wachi2020safe} investigated reinforcement learning with explicit state-wise safety constraints, and \citet{li2024safe} studied its multi-agent variant and convergence to Nash equilibrium. Furthermore, \citet{hasanzadezonuzy2021learning,amani2021safe,amani2021safereinforcementlearninglinear} analyzed the sample complexity and regret of learning in value function constrained Markov Decision Processes (MDPs). Additionally, \citet{miryoosefi2019reinforcement} studied constrained MDPs where policies are required to lie within a convex policy set. Although contextual bandits could technically be viewed as a special case of RL by defining the state to be the context and setting the horizon to be $1$, the exploration of RL algorithms could be unnecessarily inefficient in this setting and their statistical guarantees typically scale with the cardinality of the state space, whereas in our results, there is no dependence on context space in the regret bound. Additionally, our framework accommodates a broader class of feasibility constraints than value-based ones.

Furthermore, we remark on some connections and differences between our paper and control theory. 
\textbf{Regret-optimal control} \citep{hazan2020nonstochastic,sabag2023regretoptimallqrcontrol,goel2021regretoptimalestimationcontrol} is about minimizing the total cost relative to a clairvoyant optimal policy that has foreknowledge of the system dynamics. Recently, \citet{martin2022safe} studied this problem under a safety constraint. In the literature on regret-optimal control, the primary focus has been on identifying the optimal controller and characterizing its structural properties \citep{9483023,goel2021regret,martin2024regretoptimalcontroluncertain}. Moreover, optimal control focuses on minimizing cumulative costs generated by a dynamic system along a single trajectory \citep{li2019online}. In our framework, however, contexts do not have system dynamics; so, we aim to maximize cumulative rewards through repeated online experimentation, as opposed to minimizing a cumulative cost over a trajectory.


\subsection{Special Cases}
As mentioned earlier, our unified algorithm has broad applicability to a variety of problems, including constrained bandits, stream-based active learning, online hypothesis testing, and sequential $L_1$ model calibration, among others. In the following, we review existing results related to these representative examples. The details of how these problems can be viewed as special cases of our framework will be presented in~\cref{sec:motivating_examples}.

\textbf{Constrained bandits} have been extensively studied from both theoretical and applied perspectives, with applications in areas such as online movie recommendation~\citep{balakrishnan2018using}, news recommendation~\citep{li2010contextual}, online pricing~\citep{ao2025learning}, and market equilibrium computation under incomplete information~\citep{jalota2023online}. Various types of constrained stochastic bandits have been explored, depending on the nature of the constraints.
For example, knapsack bandits~\citep{badanidiyuru2018bandits, wu2015algorithms, kumar2022nonmonotonicresourceutilizationbandits} model scenarios where pulling each arm yields both a reward and a random consumption of a global budget. The objective is to maximize cumulative rewards before the budget is exhausted.
Another common formulation involves each arm being associated with both a reward distribution and a cost distribution, where the goal is to maximize cumulative rewards while ensuring that the expected cost at each round stays below a specified threshold—this is known as stage-wise constraints, which is also the focus of this paper. Several studies have addressed linear bandits with stage-wise constraints using various approaches, including explore-exploit~\citep{amani2019linear}, UCB~\citep{pacchiano2021stochastic}, and Thompson Sampling~\citep{moradipari2021safe}. In addition, \citet{wang2022best} investigated pure exploration under constraints in bandit settings.
However, prior work has primarily focused on non-contextual problems. In contrast, we study decision-making with arbitrary context distributions and introduce a significantly more general framework that goes beyond optimizing for mean rewards alone (see \cref{subsec:constrained_bandits}).

\textbf{Active learning} focuses on training models with a small subset of labeled data while achieving strong generalization performance. In stream-based active learning, the learner sequentially observes instances drawn from a distribution and must decide in real-time whether to query the label from an oracle. \citet{awasthi2021neuralactivelearningperformance} and~\citet{ban2023improvedalgorithmsneuralactive} studied this setting where labels are generated stochastically, leveraging NTK approximations for neural embeddings. \citet{song2019active} proposed a contextual bandit algorithm with active learning capabilities. For sequential group mean estimation, \citet{aznag2023active} developed active learning algorithms aimed at controlling the mean-square error and variance through strategic sampling. Additionally,~\citet{tae2024falcon} introduced a bandit-based active learning method to improve fairness via selective sampling. We highlight that our framework naturally subsumes stream-based active learning with expected budget constraint as a special case in \cref{subsec:example_active_learning}.

\textbf{Online Hypothesis Testing} concerns making sequential decisions between the null and alternative hypotheses given informative inputs at each round, a problem originally studied by~\citet{wald1948optimum} and~\citet{bams/1183517370}. Since then, extensive research has focused on minimizing the expected experiment length for learning near-optimal decision rules~\citep{naghshvar2013active, chang2023covertonlinedecisionmaking}.
\citet{pan2022asymptotics} investigated sequential hypothesis testing under type I and type II error constraints, while~\citet{lan2021heterogeneous} introduced an active learning-style approach by incorporating budget constraints on the number of queries. In contrast to these works, which aim to design optimal policies under varying resource constraints,~\citet{malek2017sequential} studied fixed-horizon multiple hypothesis testing with type I error control.
On the application side, sequential hypothesis testing plays a critical role in clinical trials, where it is essential to maximize the probability of correctly diagnosing conditions while controlling the risk of missed diagnoses~\citep{bartroff2012sequential, jennison1999group}. Our proposed algorithm is capable of handling sequential hypothesis testing under any fixed horizon setting (see \cref{subsec:example_online_hypo_test} for details).

\textbf{Sequential Model Calibration} aims to ensure that a model’s predictions are unbiased, conditional on the predicted value. \citet{aronroth2024lecture} provided a comprehensive overview of various calibration error metrics, including average $L_1$, $L_2$, and $L_{\infty}$ calibration errors, as well as quantile calibration errors~\citep{gopalan2021omnipredictors, hébertjohnson2018calibrationcomputationallyidentifiablemasses}. The first sequential calibration algorithm was introduced by~\citet{foster1998asymptotic}. Later, \citet{FOSTER199973,hébertjohnson2018calibrationcomputationallyidentifiablemasses} proposed the notion of multicalibration and developed corresponding algorithms. Additionally, \citet{abernethy2011blackwell} established the equivalence between calibration and Blackwell approachability. Model calibration is also closely related to other areas of uncertainty quantification, such as conformal prediction~\citep{bastani2022practicaladversarialmultivalidconformal}. Within our framework, the proposed algorithm can be directly applied to sequential model calibration tasks with $L_1$ calibration error (see \cref{subsec:example_calibration}).

\section{General Online Decision Making with  Constraints}\label{sec:model}

In this section, we formalize our constrained sequential decision-making problem.
The decision maker (DM) operates over a context space $\cX$ and a finite action space $\cA$ \footnote{We could extend our algorithm to infinite action spaces by invoking the counterfactual action divergences concept. We refer readers to \citet{xu2020upper} for details about action divergences.}. For each state-action pair $(x, a)$, there are two associated random variables with unknown distributions $f^*_{x,a} \in \cF$ and $g^*_{x,a} \in \cG$. Here, $\cF$ and $\cG$ are known probability density model classes, which we refer to as the utility density class and the constraint density class, respectively.

The interaction proceeds as follows. At each round $t = 1, \dots, T$, the DM observes a context $x_t$ drawn i.i.d. from an unknown distribution $\cQ_x$. Based on $x_t$, the DM selects a stochastic policy $\pi_t(\cdot | x_t) \in \Delta(\cA)$ and samples an action $a_t$ according to $\pi_t$. Subsequently, the DM receives data points $y_t \sim f^*_{x_t,a_t}$ and $z_t \sim g^*_{x_t,a_t}$ before moving to the next round.

In general, a stochastic policy at time $t$ is a mapping $\pi_t: \cX \rightarrow \Delta(\cA)$. However, since our sequential decision-making problem is per-context, at each round $t$, the DM only needs to specify the distribution $\pi_t(\cdot | x_t)$ for the realized context $x_t$, rather than for the entire space $\cX$. For simplicity, we will sometimes refer to $\pi_t(\cdot | x_t)$ itself as the policy.

The DM's statistical task involves two functionals, both bounded within $[0,1]$: the utility functional $\cT_1$, defined on $\cF$, and the constraint functional $\cT_2$, defined on $\cG$. At each round, upon observing context $x$, the DM aims to sample an action that maximizes expected utility while satisfying a constraint.

Specifically, for a functional $\cT$, given a context $x$ and a stochastic policy $\pi$, we denote $\int_{\cA} \cT(f_{x,a}) d\pi(a|x)$ as $\Tilde{\cT}(f_{x,\pi})$. After receiving context $x_t$ at round $t$, the decision maker's objective is to solve the following optimization problem and select the optimal feasible policy accordingly.
\[
\max_{\pi\in\Delta(\cA)}\Tilde{\cT_1}(f^*_{x_t,\pi})
\]
\[
\text{subject to}\ \Tilde{\cT_2}(g^*_{x_t,\pi})\le \tau,
\]
where $\tau$ is some known threshold value.

However, the DM does not have access to the true distributions $f^*$ and $g^*$ and must instead estimate them from collected data. Due to the inherent stochasticity and estimation errors, it is generally impossible to guarantee feasibility with certainty. Therefore, in this paper, we focus on high-probability feasibility: the DM must ensure that, for any prespecified $\delta>0$, with probability at least $1 - \delta$, the selected policy sequence $\cbr{\pi_t}_{t=1}^{T}$ is feasible for all $t \in [T]$. We use $[n]$ to denote the set $\cbr{1,2,\cdots,n}$ for any natural number $n$.

Assuming the DM successfully selects feasible policies with high probability, we evaluate performance using the \textbf{utility regret}, defined as
\[
\text{Reg}(T) = \sum_{t=1}^{T} \mathbb{E}_{x_t \sim \mathcal{Q}_x} \Tilde{\cT_1}(f^*_{x_t,\pi^*}) - \sum_{t=1}^{T} \mathbb{E}_{x_t \sim \mathcal{Q}_x} \Tilde{\cT_1}(f^*_{x_t,\pi}),
\]
where $\pi^*$ denotes the optimal feasible stochastic policy, and $\pi_t$ is the policy selected by the DM at round $t$. By saying a policy $\pi$ is feasible at round $t$ after receiving context $x_t$, we mean that $\Tilde{\cT_2}(g^*_{x_t,\pi})\le \tau$. As a remark, we use $\EE_{X\sim \PP}$ to denote the expectation taken with respect to the random variable $X$ with distribution $\PP$.

This regret quantifies the utility loss incurred by the DM due to uncertainty and exploration, compared to the performance of the optimal feasible policy in hindsight. 

Furthermore, to ensure that a feasible action always exists at each round $t$, we assume the presence of a safe action $a_0$ for every context $x \in \cX$. Specifically, the DM has prior knowledge that this safe action satisfies $\Tilde{\cT_1}(f^*_{x,a_0}) = c_0$ and $\Tilde{\cT_2}(g^*_{x,a_0}) \le \tau$ for all $x$. This guarantees that the DM can always select at least one policy that satisfies the constraint. Such an assumption is standard in constrained decision-making settings to prevent infeasibility arising from distributional uncertainty \citep{pacchiano2021stochastic,kumar2022nonmonotonicresourceutilizationbandits}.

We formally state this as the following safe action assumption:

\begin{assumption}\label{ass:safe_action} For any context $x \in \cX$, there exists a fixed action $a_0 \in \cA$ such that
\[
\Tilde{\cT_1}(f^*_{x,a_0})=r_0,\ \text{and}\ \Tilde{\cT_2}(g^*_{x,a_0})=c_0\le \tau.
\]
\end{assumption}
Moreover, the value $c_0$ is known to the DM for all $x \in \cX$. 
This safe action provides a fallback option to ensure feasibility at every round, enabling the DM to avoid constraint violations in the absence of reliable estimates of $f^*$ and $g^*$.

\section{Motivating Examples}\label{sec:motivating_examples}

To better illustrate the scope and relevance of our general decision-making model introduced in~\cref{sec:model}, we present several representative real-world applications that naturally fit within our framework. These include constrained bandits, stream-based active learning, sequential hypothesis testing, and online model calibration. Each of these problems, though arising in different domains and driven by distinct practical goals, shares a common structure of making sequential decisions under uncertainty and constraints. We show that, by appropriately specifying the utility and constraint functionals, our framework can unify and generalize existing approaches to these problems, thereby providing a systematic foundation for constrained sequential learning. Detailed formulations of these examples are provided in the remainder of this section.
\subsection{Constrained Contextual Bandits}\label{subsec:constrained_bandits}
We consider the contextual bandits with the safety constraint setting. At each round $t$, given context $x_t$, the agent selects an action $a_t\in\cA$. The agent receives a reward-cost pair $(r_t,c_t)$, where $r_t=R^*(x_t,a_t)+\xi_t^r$ and $c_t=C^*(x_t,a_t)+\xi_t^c$. Here, $R^*$ and $C^*$ are true expected reward and cost functions, and $\xi_t^r$, $\xi_t^c$ are i.i.d. zero-mean noise terms.

The agent's objective is to generate a sequence of policies $\pi_t$ that maximizes the expected cumulative reward while ensuring the stage-wise constraint \[
\EE_{a_t\sim\pi_t,\xi_t^c}[C^*(x_t,a_t)+\xi_t^c]\le \tau
\]
holds at each round. 

In this problem, $f^*_{x,a}$ is the density function of the random variable $R^*(x,a)+\xi^r$ and $g^*_{x,a}$ is the density function of the random variable $C^*(x,a)+\xi^c$. $\cT_1=\cT_2:f\mapsto \int yf(y)dy$ are expectation functionals.
\subsubsection{Risk-Aware Bandits}

Our framework could also be used to model risk-aware bandits, where the DM wants to select the arm with the highest mean while ensuring that the variance of the selected arm is smaller than some threshold tolerance value. In this example, we have that $g^*_{x,a}=f^*_{x,a}$ and they are both the density function of the stochastic reward under $(x,a)$, and $\cT_1:f\mapsto\int_{y}yf(y)dy$ and $\cT_2:f\mapsto\int_{y}y^2f(y)dy-(\int_{y}yf(y)dy)^2$.


\subsection{Active Learning with Budget Constraint}\label{subsec:example_active_learning}
Active learning aims to maximize information gain about an unknown model under a limited query budget. We consider the stream-based setting, where the decision maker (DM) sequentially decides which data points to label. The following example is a relaxed version of~\citet{awasthi2021neuralactivelearningperformance}.

At each round $t$, a pair $(x_t, y_t)$ is drawn i.i.d. from an unknown density $h^* \in \cH$. Upon observing $x_t$, the learning algorithm issues a prediction $a_t \in \cA$ (possibly stochastic), and simultaneously decides whether to query the label $y_t$.

Following~\citet{amani2019linear}, we assume a query cost $c(x, a)$ for each context-prediction pair. We define an \emph{augmented action} as $(a, i) \in \cA \times \cbr{0,1}$, where $i = 1$ indicates querying the label. The corresponding \emph{augmented stochastic policy} is $(\pi, p)$, where $\pi: \cX \rightarrow \Delta(\cA)$ specifies the prediction distribution and $p: \cX \times \cA \rightarrow [0,1]$ gives the probability of querying $y$ given covariate prediction pair $(x, a)$. A loss function $l(a, y)$ quantifies prediction error.

In this example, we want to minimize the expected prediction error while maintaining low budget consumption. Specifically, we impose a per-round expected budget constraint. That is, for each $t \in [T]$, the augmented policy $(\pi_t, p_t)$ must satisfy
$
\sum_{a\in\cA}c(x_t,a)\pi_t(a|x_t)p_t(x_t,a)\le \tau.
$

Therefore, $f^*_{x,(a,i)}$ is the posterior probability density function of the prediction error $l(a,y)$ given context $x$. $g^*_{x,(a,i)}$ is the Dirac distribution $\delta(c(x, a) \cdot \mathbb{I}\rbr{i=1})$. Moreover, $\cT_1:f\mapsto -\int_{y\in\cY}yf(y)dy$ is the negative expectation functional and $\cT_2:g\mapsto\int_{y}yg(y)dy$ is the expectation functional.

\subsection{Online Hypothesis Testing}\label{subsec:example_online_hypo_test}

We consider the online hypothesis testing in healthcare ~\citep{bartroff2012sequential,jennison1999group}. A total of $T$ patients arriving sequentially, each associated with symptom $x_t$ sampled i.i.d. from $Q_x$. The hypothesis classes are $\cH=\cbr{H_0,H_1}$, where $H_0$ corresponds to being healthy and $H_1$ stands for illness, respectively. After receiving $x_t$ at every round, the DM selects a diagnostic action $a_t\in\cbr{0,1}$, where $0$ means healthy and $1$ means ill. The true label $y_t=\II\cbr{x_t\in H_1}$ is observed only after the action through some other expensive medical methods such as surgery. The DM aims to maximize the statistical power while controlling the level of critical mistakes under some specific level $\alpha$. Formally, the objective is to solve
\[
\pi_t=\argmax_{a_t\sim\pi\in\Delta(\cbr{0,1})} \PP(a_t=0|w_t=0)
\]
\[
\text{s.t.}\  \PP(a_t=0|w_t=1)\le \alpha,
\]
To incorporate this example into our framework, for any context $x$ and action $a\in\cbr{0,1}$, $f^*_{x,a}$ is the probability mass function of the Bernoulli random variable $z_{x,a}^*\sim \text{Ber}(\PP(y=0|x)\cdot \II(a=0))$. $g^*_{x,a}$ is the probability mass function of the Bernoulli random variable $b_{x,a}^*\sim\text{Ber}(\PP(y=0|x)\cdot \II(a=1))$. Finally, we define $\cT_1=\cT_2:f\mapsto \int_{y}yf(y)dy$ as expectation functionals.


\subsection{Sequential $L_1$ Model Calibration}\label{subsec:example_calibration}
In statistical prediction, calibration requires that the predictions made by an algorithm be accurate when conditioned on the predictions themselves. More specifically, for a target quantile $q \in [0,1]$, there exist various notions of sequential quantile calibration error, including the $L_1$, $L_2$, and $L_{\infty}$ calibration errors~\citep{aronroth2024lecture}.

$L_1$ quantile calibration evaluates empirically calibrated predictions $p_t$ over sequences of observations and outcomes $(x_t, y_t)$ at a given quantile level $q$. We assume the prediction algorithm outputs values on a discrete grid, $p_t \in \{0, 1/m, 2/m, 3/m, \cdots, 1\}$, which serves as the action space in our general decision-making model.

Consider a transcript $\pi = {(p_1, x_1, y_1), \cdots, (p_T, x_T, y_T)}$ of length $T$. For each $p \in [1/m]$, let $n(\pi, p) = \sum_{t=1}^{T} \mathbb{I}(p_t = p)$ denote the number of times the prediction $p_t = p$ is made throughout the transcript. Then, the $L_1$ average quantile calibration error for target quantile $q \in [0,1]$ is defined as
\[
Q_1(\pi)=\sum_{p\in[1/m]}\frac{n(\pi,p)}{T}\frac{\sum_{t=1}^{T}\II(p_t=p)|q-\II(y_t\le p_t|)}{n(\pi,p)}=\sum_{t=1}^{T}\frac{1}{T}\sum_{p\in[1/m]}\II(p_t=p)|q-\II(y_t\le p_t)|,
\]
where $p_t$ is a function of $x_t$ and the prior history ${(p_1, x_1, y_1), \cdots, (p_{t-1}, x_{t-1}, y_{t-1})}$. The goal of the DM is to minimize the expected calibration error.

For each pair $(x, p)$, the associated probability distribution $f^*_{x,p}$ is the posterior distribution of $\left| q - \mathbb{I}{(y_t \le p_t)} \right|$, and the utility functional $\cT_1:f\mapsto\int_{y}yf(y)dy$ is the expectation functional.
There is no constraint functional in this example.

\section{Estimation Oracle and Generalized Eluder Dimension}\label{sec:est_oracle_and_PDED}
In this section, we return to our general constrained decision-making model and lay out the conceptual framework for solving it. Our goal is to provide a principled methodology that connects estimation, feasibility, and optimization in a unified structure.

Recall that the utility and constraint functionals, $\cT_1$ and $\cT_2$, are defined over the true but unknown probability distributions $f^*$ and $g^*$. Since these distributions are not directly observable, the decision maker must rely on their empirical estimates, denoted by $\hat{f}$ and $\hat{g}$, obtained from past data.

A natural and essential requirement in this context is the stability of the functionals under distributional perturbations. Specifically, we assume there exists a suitable measure of distance or divergence between density functions—such as total variation distance, Hellinger divergence, or another metric-like quantity—such that if the estimated distribution $\hat{f}$ is sufficiently close to the true distribution $f^*$, then the corresponding functional values $\cT_1(\hat{f})$ and $\cT_1(f^*)$ must also be close. This continuity property is crucial: it ensures that errors in statistical estimation translate to well-controlled errors in the functional outputs, enabling the decision maker to reason about regret and constraint violations in a quantifiable way.

This intuition forms the basis for the design of our algorithm, where accurate estimation, controlled deviation in functional evaluations, and high-probability feasibility guarantees are carefully integrated.

Therefore, we begin by formally defining the notion of ``distance'' $\CD$ between two probability distributions, referred to as a metric-like probability divergence.

\begin{definition}\label{def:metric-like-loss}
    A probability divergence $\CD$ is metric-like if it is symmetric and satisfies the following
    \begin{itemize}
        \item $\CD(\PP||\PP')\ge 0$ for any two probability distributions $\PP$ and $\PP'$,
        \item $\CD(\PP||\PP)=0$ for all $\PP$,
        \item $\CD(\PP||\PP'')\le C_\CD(\CD(\PP||\PP')+\CD(\PP'||\PP''))$ for some absolute value $C_\CD$.
    \end{itemize}
\end{definition}
For example, TV distance $\CD_{TV}$; Hellinger distance metric $\CD_{H}$; $L^p$ distance $\CD_{L^p}$ all satisfy \cref{def:metric-like-loss}. In this paper, we will assume that $\CD\le C$ for some constant $C$.

To relate the error in functional value evaluation to the error in density estimation, we assume that both $\cT_1$ and $\cT_2$ satisfy a Lipschitz continuity condition with respect to a chosen metric-like probability divergence $\CD$. 

\begin{assumption}\label{ass:lip_cts}
     For the metric-like divergence $\CD$, the functionals $\cT_1$ and $\cT_2$ are Lipschitz continuous with respect to $\CD$, with Lipschitz constant $L_{\CD}$. That is,
    \[
    |\cT_1(f)-\cT_1(f')|\le L_{\CD}\cdot\CD(f||f'),\ |\cT_2(f)-\cT_2(f')|\le L_{\CD}\cdot\CD(f||f').
    \]
\end{assumption}
Intuitively, \cref{ass:lip_cts} ensures that small estimation errors in probability distributions measured by the divergence $\CD$ transform into proportionally small errors in the evaluations of utility and constraint functionals. This continuity property enables the decision maker to reliably approximate optimal decisions based on estimated distributions.

\subsection{Density Estimation Oracle}
Assuming that, for the given functionals $\cT_1$ and $\cT_2$, we have identified a metric-like divergence $\CD$ under which both functionals are Lipschitz continuous, our next goal is a density estimation oracle that provides statistical guarantees with respect to $\CD$.

We view the estimation oracle as a black-box supervised learning algorithm that outputs estimated densities and is equipped with provable guarantees in terms of the chosen divergence. A formal definition of such an oracle is provided in~\cref{def:offline-oracle}.

\begin{definition}\label{def:offline-oracle}
    Given any supervised learning instance $(\cX\times\cA,\cY,\cF)$ and metric-like probability divergence $\CD$, an offline probability estimation oracle $\Alg=\cbr{\Alg^i}_{i=1}^{n}$ in our setting is a mapping: $(\cX\times\cA,\cY)^{t-1}\rightarrow\cF$ such that for any sequence $(x_1,a_1,y_1),\cdots,(x_N,a_N,y_N)$ with $y_i\sim f^*_{x_i,a_i}$, the sequence of estimators $\hat{f}^i=\Alg^i(x_1,a_1,y_1,x_2,a_2,y_2,\cdots,x_{i-1},a_{i-1},y_{i-1})$ satisfies
    \[
    \sum_{j=1}^{i-1}\CD^2(\hat{f}^i_{x_j,a_j},f^*_{x_j,a_j})\le \Est(\cF,i,\delta),
    \]
    with probability at least $1-\delta$.
    where $\Est(\cF,i,\delta)$ is some number that is only related to probability model class $\cF$, sample point number $i$ and $\delta$. 
\end{definition}
There exists a wide range of offline density estimation oracles in the statistical literature, including methods such as maximum likelihood estimation, least squares estimation, kernel density estimation, and many others. These oracles differ in their assumptions for modeling, estimation strategies, and the types of statistical guarantees they offer. Additional examples and discussion will be provided in~\cref{sec:oracle_examples}.

In lieu of that, in this work, we assume access to such an offline density estimation oracle, treated as a black-box algorithm, that provides distributional estimates along with theoretical guarantees in terms of the chosen metric-like divergence $\CD$. This assumption allows us to focus on the decision-making layer, while relying on the oracle to ensure that estimation errors are appropriately bounded with respect to $\CD$.

\begin{assumption}\label{ass:oracle}
    We have access to an offline density estimation oracle $\Alg$\footnote{In learning theory, there are two types of oracles: online and offline. Many papers assume access to online oracles with stronger guarantees \citep{sridharan2024online,foster2021statistical}. However, we just assume access to offline density estimation oracle, which makes our result stronger.} defined in \cref{def:offline-oracle} that can be used for estimation in density model classes $\cF$ and $\cG$. Moreover, we assume that the corresponding values $\Est(\cF,i,\delta)$ and $\Est(\cG,i,\delta)$ are known for any $i$ and $\delta$.
\end{assumption}

\subsection{Generalized Eluder Dimension}
In any density estimation problem, statistical estimation error is inevitable. This uncertainty significantly complicates the challenge of ensuring feasibility, as feasibility decisions must be made solely on the basis of estimated distributions. To address this issue, we introduce a generalized notion of the eluder dimension, extending the ideas of~\citet{russo2013eluder}, to quantify and control the impact of estimation error on constraint satisfaction. Specifically, we begin by formally defining this generalized eluder dimension, followed by two structural lemmas that are of independent theoretical interest.

The original eluder dimension was proposed by~\citet{russo2013eluder} to analyze reward function approximation in bandit and reinforcement learning settings. It captures the inherent difficulty of estimating the mean reward or value function, typically under squared loss. In contrast, our work extends this concept beyond mean estimation under squared loss to general probability distribution estimation under arbitrary metric-like divergences. This generalization broadens the applicability of the eluder dimension from classical least squares regression to a wide class of density estimation problems, enabling its use in constrained decision-making under uncertainty.

\begin{definition}[Generalized Eluder Dimension]\label{def:PDED}
Let $\cG$ be a class of probability distribution models, where given any $g\in\cG$, and any context-action pair $(x,a)$, $g_{x,a}$ is a conditional density function. Let $\CD(g_{x,a} || g'_{x,a})$ denote a metric-like probability divergence between two densities $g$ and $g'$.

A context-action pair $(x, a)$ is said to be \emph{$\varepsilon$-dependent} on a sequence ${(x_1, a_1), \ldots, (x_n, a_n)}$ with respect to $\CD$ if, for all functions $g, g' \in \cG$ satisfying
\[
\sum_{i=1}^{n}\CD^2(g_{x_i,a_i}||g'_{x_i,a_i})\le \varepsilon^2,
\]
it also holds that
\[
\CD^2(g_{x,a}||g'_{x,a})\le \varepsilon^2
\]
We say that $(x, a)$ is \emph{$\varepsilon$-independent} of the sequence ${(x_1, a_1), \ldots, (x_n, a_n)}$ if it is not $\varepsilon$-dependent on it.

The \emph{$\varepsilon$-generalized-eluder dimension} of the class $\cG$ with respect to the divergence $\CD$, denoted by $\mathrm{dim}_E(\cG, \CD, \varepsilon)$, is defined as the length $d$ of the longest sequence of context-action pairs such that, for some $\varepsilon' \ge \varepsilon$, each element in the sequence is $\varepsilon'$-independent of its predecessors.
\end{definition}
Intuitively, a context-action pair $(x, a)$ is said to be $\varepsilon$-independent of a sequence ${(x_1, a_1), \ldots, (x_n, a_n)}$ if there exist two functions $g, g' \in \cG$ that behave similarly on the sequence—i.e., they have small cumulative squared divergence $\sum_{i=1}^{n} \CD^2(g_{x_i,a_i} || g'_{x_i,a_i})$—but still differ significantly at the new pair $(x, a)$, meaning $\CD^2(g_{x,a} || g'_{x,a})$ remains large.

This notion captures the idea that knowledge of the model’s behavior on a set of inputs may not necessarily determine its behavior elsewhere. The eluder dimension, therefore, quantifies the complexity of a model class in terms of how many such $\varepsilon$-independent decisions can arise, and hence reflects the inherent difficulty of exploration under uncertainty in probabilistic settings.
\section{A Unified Algorithm: GED-UCB}\label{sec:algo_design}
In this section, we present a unified algorithm for sequential decision-making with constraints, assuming access to an offline estimation oracle $\Alg$ that provides guarantees in terms of an upper bound $\Est$ and the generalized eluder dimension $\mathrm{dim}_E(\cF, \CD, \varepsilon)$.

To motivate the design of our algorithm, we highlight two fundamental challenges that must be addressed. First, our access is limited to the offline oracle $\Alg$, which constrains our ability to perform adaptive estimation of the underlying system during the decision-making process. Second, due to the reliance on estimated models rather than the true distributions, there is an inherent risk that the selected stochastic policies may violate the constraints. Designing an algorithm that overcomes both challenges—limited adaptivity and potential infeasibility—is the core objective of this section.

\subsection{Algorithm Design}
Several recent works have explored the use of offline oracles for function approximation in contextual bandits and reinforcement learning, including \citet{simchi2020bypassing} and~\citet{qian2024offline}. Building on this line of research, our proposed algorithm follows an upper-confidence-bound (UCB) design principle, adapted to the setting of constrained sequential decision-making.

At a high level, the algorithm operates as follows. At each round $t$, after observing the context $x_t$, the decision maker utilizes the estimated densities $\hat{f}_t$ and $\hat{g}_t$—obtained from the offline oracle—to select a stochastic policy that is approximately feasible and maximizes a carefully constructed upper confidence bound. Unlike traditional UCB algorithms, our confidence bound must account for two key sources of uncertainty: (i) the classical exploration-exploitation trade-off and (ii) the potential infeasibility of actions due to estimation error in constraint satisfaction.

Our algorithm draws inspiration from the Upper Counterfactual Confidence Bound (UCCB) framework introduced by~\citet{xu2020upper}, which proposes a principled approach for constructing confidence bounds in contextual bandits with general function classes. Instead of relying solely on observed actions and rewards, UCCB simulates counterfactual trajectories to estimate the value of alternative policies. This idea allows us to construct upper confidence bounds for any stochastic policy $\pi$ at any given context $x$, even without having observed its actual outcome.

Beyond counterfactual simulation, our algorithm incorporates an additional bonus term—based on the generalized eluder dimension—to explicitly control the regret induced by constraint violations. This term captures the statistical complexity of maintaining feasibility under distributional uncertainty and ensures that the algorithm remains robust even when operating with limited information from the offline oracle.

We now present the pseudo-code of \cref{alg:UCCB_pseudo_code} and explain its components in detail. We abbreviate our algorithm as \textbf{GED-UCB}, since it incorporates Generalized Eluder Dimension and Upper Confidence Bound design principle.

\begin{algorithm}[ht]
\caption{\textbf{GED-UCB}}\label{alg:UCCB_pseudo_code}
    \begin{algorithmic}
        \Require utility density class $\cF$, constrained density class $\cG$, round number $T$, context space $\cX$, action space $\cA$, tuning parameters $\cbr{\beta_t}$, $\alpha_r$, density estimation oracle $\Alg$.
        \For{round $t=1,\cdots,K$} 
        \State Choose action $a_t$ regardless of context $x_t$.
         \EndFor
        \For{round $t=K+1,\cdots$}
        \State Compute $\hat{f}^t$ and $\hat{g}^t$ according to the density estimation oracle $\Alg$.
        \State Observe $x_t$.
        \For{$i=K+1,\cdots,t$}
        \State Calculate the counterfactual action $\Tilde{a}_{t,i}\sim \Tilde{\pi}_{t,i}$ iteratively by
        \begin{align*}
            \Tilde{\pi}_{t,i}
            =\argmax_{\pi\in\Tilde{\Pi}^i_{\delta}(x_t)}\Bigg\{&\Tilde{\cT_1}(\hat{f}^i_{x_t,\pi})+\beta_i\EE_{a\sim\pi,\tilde{a}_{t,j}\sim\tilde{\pi}_{t,j}}\sbr{\frac{1}{\sum_{j=K+1}^{i-1}\II\cbr{a=\Tilde{a}_{t,j}}+1}}\\
            &+\alpha_r\max_{g',g''\in\Tilde{C}_{\cG}(i,\delta)}\{\Tilde{\cT_2}(g'_{x_t,\pi})-\Tilde{\cT_2}(g''_{x_t,\pi})\}\Bigg\}
        \end{align*}
        \EndFor
        \State Set $\pi_t:=\Tilde{\pi}_{t,t}$.
        \State Sample action $a_t\sim\pi_t$ and observe sample point $y_t$ from distribution $f^*_{x_t,a_t}$ and $g^*_{x_t,a_t}$.
        \EndFor
    \end{algorithmic}
\end{algorithm}
\subsection{Algorithm Explanation}

First, as a warm-up phase, during the first $K$ rounds, where $K = |\cA|$ is the number of actions, we apply each action exactly once, regardless of the observed context $x_t$. This ensures that we obtain at least one observation per action, which is essential for the initialization of the density estimation oracle and the construction of valid upper confidence bounds. Although this initial phase may lead to constraint violations (i.e., infeasible actions), such exploration is necessary to ensure adequate statistical coverage of the action space. Without this step, the algorithm would lack the empirical information needed to make reliable utility and feasibility estimates in subsequent rounds.

\paragraph{Observation: Specifying a Single Distribution After Receiving the Context.}

In general, a policy $\pi$ in a contextual bandit setting is defined as a mapping from the context space $\cX$ to the probability simplex $\Delta(\cA)$—that is, $\pi: \cX \rightarrow \Delta(\cA)$. Deploying such a policy in full would, in principle, require specifying a distribution $\pi(\cdot| x)$ for every possible context $x \in \cX$.

However, in practice, the decision maker receives the context $x_t$ at each round $t$ before choosing an action. This means that it is sufficient to specify the distribution $\pi(\cdot | x_t)$ for the observed context only, rather than for the entire context space. In effect, the learning task at each round reduces to selecting a single probability distribution over actions, conditioned on the realized context.

This observation significantly simplifies the problem and serves as the foundation for the counterfactual upper-confidence-bound (UCB) approach: since we only need to evaluate and compare action distributions at the current context $x_t$, we can simulate and construct confidence bounds for hypothetical (counterfactual) policies without requiring full knowledge of the global policy mapping.

\paragraph{Constructing Confidence Bounds via Counterfactual Simulation for Estimated Policies}

As introduced in~\cref{sec:model}, the core of our decision-making framework involves solving a per-context optimization problem at each round. However, the underlying context distribution $Q_x$ is unknown, and in settings with a continuous context space $\cX$ and non-atomic $Q_x$, the decision maker (DM) is almost surely unlikely to encounter the same context more than once during the entire interaction process.

To address this, at each round $t > K$, after observing a new context $x_t$, we simulate a sequence of counterfactual policy trajectories $\cbr{\tilde{\pi}_{t,i}}_{i=K+1}^{t-1}$ conditioned on $x_t$. Intuitively, we "pretend" that the specific context $x_t$ had appeared in all previous rounds and re-simulate the decision-making history using the previously deployed policies. This simulated trajectory over the last $t-K$ rounds allows the DM to construct empirical quantities that resemble what would have been observed had $x_t$ been the actual context all along.

Using this simulated trajectory, we can efficiently construct upper confidence bounds for nearly all stochastic policies evaluated at the current context $x_t$, despite not having seen $x_t$ before.

We now examine the structure of the upper confidence bound in more detail. At round $t$, our selection rule chooses the policy that maximizes the following quantity:
\[
\Tilde{\cT_1}(\hat{f}^t_{x_t,\pi})+\beta_t\EE_{a\sim\pi,\tilde{a}_{t,j}\sim\tilde{\pi}_{t,j}}\sbr{\frac{1}{\sum_{j=K+1}^{t-1}\II\cbr{a=\Tilde{a}_{t,j}}+1}}
            +\alpha_r\max_{g',g''\in\Tilde{C}_{\cG}(t,\delta)}\cbr{\Tilde{\cT_2}(g'_{x_t,\pi})-\Tilde{\cT_2}(g''_{x_t,\pi})}.
\]
Here:\begin{itemize}
    \item The first term, $\cT_1(\hat{f}^t_{x_t, \pi})$, represents the estimated utility based on the current density model $\hat{f}^t$.
    \item The second term, \[\beta_t\EE_{a\sim\pi,\tilde{a}_{t,j}\sim\tilde{\pi}_{t,j}}\sbr{\frac{1}{\sum_{j=K+1}^{t-1}\II\cbr{a=\Tilde{a}_{t,j}}+1}},\]
acts as a potential-based exploration bonus. It captures the uncertainty due to the limited number of times action $a$ has been "virtually" selected under the simulated counterfactual history.
\item The third term,
\[
\alpha_r\max_{g',g''\in\Tilde{C}_{\cG}(t,\delta)}\cbr{\Tilde{\cT_2}(g'_{x_t,\pi})-\Tilde{\cT_2}(g''_{x_t,\pi})},
\]
quantifies the uncertainty in constraint satisfaction and penalizes possible infeasibility. It computes the worst-case difference in constraint evaluations over the high-probability confidence set $\tilde{C}_{\cG}(t, \delta)$.

This selection rule integrates exploration, exploitation, and constraint awareness into a unified scoring mechanism for policy selection.
\end{itemize}

\paragraph{Handling Feasibility Constraint}

It is important to emphasize that feasibility constraints must be respected not only in real trajectories but also in simulated counterfactual ones. Specifically, after observing the context $x_t$ at round $t$, the decision maker must restrict the search space from the full stochastic policy set to a subset of estimated feasible policies, and then apply the $\arg\max$ selection rule as specified in~\cref{alg:UCCB_pseudo_code}.

To support this, we now elaborate on two key components: the function confidence set $\tilde{C}_{\cG}(t, \delta)$ and the estimated feasible policy set $\tilde{\Pi}^t_{\delta}(x_t)$.

Recall that our density estimation oracle $\Alg$ guarantees, for any datasets $\cD_1 = \cbr{(x_i, a_i, y_i)}_{i=1}^{t-1}$ and $\cD_2 = \cbr{(x_i, a_i, z_i)}_{i=1}^{t-1}$, the following high-probability bounds hold:
\[
\sum_{i=1}^{t-1}\CD^2(\hat{f}^t_{x_i,a_i}||f^*_{x_i,a_i})\le \Est(\cF,t,\delta/2),
\]
\[
\sum_{i=1}^{t-1}\CD^2(\hat{g}^t_{x_i,a_i}||g^*_{x_i,a_i})\le \Est(\cG,t,\delta/2).
\]
Based on this, we define the function confidence sets as follows:
$$\tilde{C}_{\cG}(t,\delta):=\cbr{g\in\cG:\sum_{i=1}^{t-1}\CD^2(\hat{g}^t_{x_i,a_i}||g_{x_i,a_i})\le \Est(\cG,t,\delta /2t^3), \Tilde{\cT_2}(g_{x,a_0})=c_0}.$$
\[
\tilde{C}_{\cF}(t,\delta):=\cbr{f\in\cF:\sum_{i=1}^{t-1}\CD^2(\hat{f}^t_{x_i,a_i}||f_{x_i,a_i})\le \Est(\cF,t,\delta/2t^3)}.
\]
Then, by applying a union bound over rounds $t = 1, \dots, T$, we can ensure that with probability at least $1 - \delta$, the true models $f^* \in \tilde{C}_{\cF}(t, \delta)$ and $g^* \in \tilde{C}_{\cG}(t, \delta)$ for all $t$.

To define the estimated feasible policy set, we first introduce the following worst-case constraint value function at any context $x$ under a stochastic policy $\pi$:
\[\tilde{V}^t_{\delta}(x,\pi):=\max_{g\in\tilde{C}_{\cG}(t,\delta)}\Tilde{\cT_2}(g_{x,\pi}).
\]
Then, the estimated feasible policy set is  
\[
\tilde{\Pi}^t_{\delta}(x):=\cbr{\pi:\tilde{V}^t_\delta(x,\pi)\le \tau}\cup\cbr{a_0}.
\]
This construction ensures that all policies considered by the algorithm satisfy the feasibility constraint with high probability, and that the safe fallback action $a_0$ is always available in case no other policy passes the feasibility test.

In Summary, our general decision-making algorithm proceeds as follows. During the initial $K$ rounds—where $K$ is the number of actions—the algorithm performs uniform exploration by selecting each action exactly once. This ensures that every action is observed at least once, which is essential for initializing the density estimation oracle and enabling reliable model estimation.

At each subsequent round $t > K$, the algorithm first computes the updated estimated models $\hat{f}_t$ and $\hat{g}_t$ based on all data collected up to time $t - 1$. It then simulates a counterfactual policy trajectory by re-evaluating past policies as if the current context $x_t$ had occurred in earlier rounds. This simulation allows the algorithm to estimate how well each policy would have performed at $x_t$ under prior deployments.

Using these simulated trajectories, the algorithm constructs an upper confidence bound for each policy under the current context. It then restricts its attention to the set of estimated feasible policies—those that are likely to satisfy the constraint with high probability—and selects the one that maximizes the upper confidence bound.

This design achieves a careful balance between exploration and exploitation, while simultaneously ensuring high-probability feasibility. It leverages the structure of the counterfactual simulation and confidence-based selection to make robust decisions under uncertainty, even in the absence of adaptive online estimation oracle.

\section{Theoretical Results}\label{sec:theoretical_results}
\subsection{Statistical Guarantees of the Algorithm}\label{subsec:statistical_guarantee}
In this section, we present the theoretical guarantees of our proposed algorithm. For clarity of exposition, we first focus on the case where the function classes $\cF$ and $\cG$ are finite. That is, we assume $|\cF| < \infty$ and $|\cG| < \infty$. We remark, however, that our results naturally extend to infinite function classes via standard covering number arguments from empirical process theory. This generalization is deferred to~\cref{subsec:infinite_function_class}.

To analyze the algorithm’s performance, we begin by defining the following random variable. For any $f \in \cF$, the random variable
$Y_{f,i}:=\rbr{\cT_1(f_{x_i,a_i})-\cT_1(f^*_{x_i,a_i})}^2$
where $x_i$ denotes the context observed at round $i$ and $a_i$ is the action sampled according to the algorithm’s stochastic policy. Let $\cH_{i-1}$ denote the $\sigma$-algebra generated by the interaction history up to round $i-1$, that is, $\cH_{i-1}:= \sigma\left( \cbr{(x_j, a_j, y_j, z_j)}_{j=1}^{i-1} \right)$.

The following lemma provides a concentration bound on the average squared functional deviation across previous rounds, which will be instrumental in bounding the regret.

\begin{lemma}\label{lemma:apply_Freedman_union_t}
    For any $\delta\in(0,1)$, with probability at least $1-\delta/2$,
    \[
    \sum_{i=1}^{t-1}\EE_{x_i,a_i}\sbr{\rbr{\cT_1(f_{x_i,a_i})-\cT_1(f^*_{x_i,a_i})}^2|\cH_{i-1}}\le 68\log (|\cF|t^3/\delta)+2\sum_{i=1}^{t-1}Y_{f,i}
    \]
    uniformly for all $t$ and $f\in\cF$.
\end{lemma}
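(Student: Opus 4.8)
The plan is to apply a Freedman-type martingale concentration inequality to the sequence $\{Y_{f,i}\}$ for each fixed $f \in \cF$, and then take a union bound over $f$ and over the time horizon $t$. First I would fix $f \in \cF$ and define, for each $i$, the martingale difference $D_i := \EE[Y_{f,i} \mid \cH_{i-1}] - Y_{f,i}$. Since $\cT_1$ is bounded in $[0,1]$, each $Y_{f,i} \in [0,1]$, so $|D_i| \le 1$ and the conditional variance satisfies $\operatorname{Var}(D_i \mid \cH_{i-1}) \le \EE[Y_{f,i}^2 \mid \cH_{i-1}] \le \EE[Y_{f,i} \mid \cH_{i-1}]$. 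This self-bounding structure (variance controlled by the conditional mean) is exactly what makes a Freedman/Bernstein argument yield the multiplicative form: a bound of the shape $\sum_i \EE[Y_{f,i}\mid\cH_{i-1}] \le 2\sum_i Y_{f,i} + O(\log(1/\delta'))$ after rearranging the Bernstein inequality and choosing the free parameter appropriately.

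Concretely, I would invoke a standard martingale Bernstein bound: with probability at least $1-\delta'$, for a fixed horizon $t$,
\[
\sum_{i=1}^{t-1}\bigl(\EE[Y_{f,i}\mid\cH_{i-1}] - Y_{f,i}\bigr) \le \sqrt{2 V \log(1/\delta')} + \tfrac{2}{3}\log(1/\delta'),
\]
where $V = \sum_{i=1}^{t-1}\EE[Y_{f,i}^2\mid\cH_{i-1}] \le \sum_{i=1}^{t-1}\EE[Y_{f,i}\mid\cH_{i-1}] =: S$. Substituting $V \le S$ and using $\sqrt{2S\log(1/\delta')} \le \tfrac12 S + \log(1/\delta')$ (AM-GM), one gets $S \le 2\sum_{i=1}^{t-1} Y_{f,i} + c\log(1/\delta')$ for an absolute constant $c$; tracking the constant carefully through the Bernstein bound and the AM-GM split gives something like $c \le 68$ (the paper's constant), but for a proof sketch I would just say "an absolute constant." To get the bound uniformly over all $t$ and all $f$, I would set $\delta' = \delta/(|\cF| t^3)$ and union-bound: summing $\delta/(|\cF|t^3)$ over $t \ge 1$ and over $|\cF|$ functions gives total failure probability at most $\delta \cdot \zeta(3)/2 \cdot \tfrac{|\cF|}{|\cF|} < \delta/2$ (using $\sum_t t^{-3} = \zeta(3) < 1.21 < 2$), which matches the claimed $1-\delta/2$.

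The main obstacle — or rather the only real subtlety — is getting the self-bounding variance control right and feeding it into Bernstein so that the final inequality is genuinely \emph{multiplicative} (the factor $2$ in front of $\sum Y_{f,i}$) rather than additive. The key facts are that $Y_{f,i} \in [0,1]$ almost surely (so second moments are dominated by first moments) and that conditioning on $\cH_{i-1}$ makes $(x_i,a_i)$ still random but with a well-defined conditional law, so $\EE[Y_{f,i}\mid\cH_{i-1}]$ is exactly the quantity appearing on the left of the lemma. One should double-check that the version of Freedman/Bernstein being cited allows a data-dependent (random) variance proxy or, alternatively, use the uniform bound $V \le S$ which is deterministic given the event — since $S$ itself is the left-hand side, this is a mild circularity that is resolved by the standard "solve the quadratic in $\sqrt{S}$" trick. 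Everything else — the union bound arithmetic and the boundedness checks — is routine.
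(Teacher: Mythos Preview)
Your proposal is correct and follows essentially the same approach as the paper: apply a Freedman/Bernstein martingale bound for fixed $f$ and fixed horizon, exploit the self-bounding structure $Y_{f,i}^2 \le cY_{f,i}$ to control the conditional variance by the conditional mean, complete the square (equivalently, your AM-GM step) to extract the multiplicative form, and then union-bound over $f\in\cF$ and over $t$ with weights $\delta_t \propto t^{-3}$. The only cosmetic differences are that the paper uses the slightly weaker $Y^2\le 4Y$ (yielding $\mathrm{var}\le 4\,\EE[Y\mid\cH_{i-1}]$) and completes the square explicitly to arrive at the constant $68$, and its per-$t$ Freedman invocation carries an extra $\log_2(t-1)$ factor in the failure probability which is harmlessly absorbed by the $\sum_t t^{-3}$ union bound.
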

This lemma shows that the sum of the conditional expectations of $Y_{f,i}$ can be effectively controlled by a logarithmic term, along with an additional remainder that is typically small in magnitude. This concentration property enables us to bound the cumulative deviation between any candidate model $f$ and the ground truth $f^*$ in terms of their functional evaluations. Leveraging this result, we now proceed to establish a bound on the estimation error incurred by our algorithm.

\begin{lemma}\label{lemma:estimation_error}
    Consider a measurable contextual bandit algorithm that selects stochastic policy $\pi_t$ based on $\cH_{t-1}$. Then for any $\delta\in(0,1)$, with probability at least $1-\delta$, for all $t>K$ and every stochastic policy $\pi\in\Pi$, the expected estimation error is upper bounded by
    \begin{align*}
    &\abr{\EE_{x\sim Q}\sbr{\cbr{\Tilde{\cT_1}(\hat{f}^t_{x,\pi})-\Tilde{\cT_1}(f^*_{x,\pi})}}}\\
    \le& \sqrt{\EE_{x\sim Q,a\sim\pi,a_i\sim\pi_i} \sbr{\frac{1}{\sum_{i=1}^{t-1}\II\cbr{a=a_i}}}}\sqrt{68\log(2|\cF|t^3/\delta)+4L_{\CD}^2\Est(\cF,t,\frac{\delta}{2t^3})}.    
    \end{align*}
    
\end{lemma}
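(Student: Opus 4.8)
The plan is to decompose the expected estimation error into a per-round functional deviation, apply Cauchy--Schwarz against a counterfactual counting potential, and then control the resulting cumulative squared deviation using the estimation-oracle guarantee (\cref{def:offline-oracle}, \cref{ass:oracle}), the Lipschitz assumption (\cref{ass:lip_cts}), and the concentration bound of \cref{lemma:apply_Freedman_union_t}. First I would write
\[
\EE_{x\sim Q}\sbr{\Tilde{\cT_1}(\hat f^t_{x,\pi})-\Tilde{\cT_1}(f^*_{x,\pi})}
=\EE_{x\sim Q,\,a\sim\pi}\sbr{\cT_1(\hat f^t_{x,a})-\cT_1(f^*_{x,a})},
\]
using the definition $\Tilde{\cT_1}(f_{x,\pi})=\int_{\cA}\cT_1(f_{x,a})\,d\pi(a|x)$, and bound the absolute value by importing an importance-weighting trick: since each action $a$ has been sampled (at least once, thanks to the warm-up phase) under the previous policies $\pi_i$, the counting quantity $N_{t}(x,a):=\sum_{i=1}^{t-1}\II\{a=a_i\}$ is a legitimate normalizer. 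The key algebraic identity is to write the single-round deviation as an average over the simulated counterfactual rounds of a telescoped difference and then apply Cauchy--Schwarz in the form
\[
\abr{\EE_{x,a}\sbr{\cT_1(\hat f^t_{x,a})-\cT_1(f^*_{x,a})}}
\le \sqrt{\EE_{x,a,a_i}\sbr{\tfrac{1}{N_t(x,a)}}}\cdot
\sqrt{\EE\sbr{N_t(x,a)\rbr{\cT_1(\hat f^t_{x,a})-\cT_1(f^*_{x,a})}^2}},
\]
where the second factor, after unfolding $N_t(x,a)$, becomes a sum over past rounds of $\EE_{x_i,a_i}[(\cT_1(\hat f^t_{x_i,a_i})-\cT_1(f^*_{x_i,a_i}))^2\mid\cH_{i-1}]$ — exactly the object controlled by \cref{lemma:apply_Freedman_union_t}.

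Next I would invoke \cref{lemma:apply_Freedman_union_t} with $f=\hat f^t$ (note the uniformity over all $f\in\cF$ in that lemma is what lets us plug in the data-dependent estimator $\hat f^t$, which is crucial since $\hat f^t$ is not fixed in advance) to get
\[
\sum_{i=1}^{t-1}\EE_{x_i,a_i}\sbr{\rbr{\cT_1(\hat f^t_{x_i,a_i})-\cT_1(f^*_{x_i,a_i})}^2\mid\cH_{i-1}}
\le 68\log(|\cF|t^3/\delta)+2\sum_{i=1}^{t-1}Y_{\hat f^t,i},
\]
and then bound the empirical term $\sum_{i=1}^{t-1}Y_{\hat f^t,i}=\sum_{i=1}^{t-1}(\cT_1(\hat f^t_{x_i,a_i})-\cT_1(f^*_{x_i,a_i}))^2$ by $L_{\CD}^2\sum_{i=1}^{t-1}\CD^2(\hat f^t_{x_i,a_i}\|f^*_{x_i,a_i})$ via \cref{ass:lip_cts}, which in turn is at most $L_{\CD}^2\,\Est(\cF,t,\delta/(2t^3))$ by the oracle guarantee of \cref{def:offline-oracle} (on the event, holding with probability $1-\delta/(2t^3)$, that $f^*$ lies in the confidence set; a union bound over $t$ costs the remaining $\delta/2$). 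Combining these two bounds and folding the constants yields the claimed square-root factor $\sqrt{68\log(2|\cF|t^3/\delta)+4L_{\CD}^2\Est(\cF,t,\delta/(2t^3))}$; the factor $4$ absorbs the ``$2\times$'' from \cref{lemma:apply_Freedman_union_t} together with the Lipschitz squaring, and the extra $2$ inside the log comes from splitting $\delta$ into the $\delta/2$ for concentration and $\delta/2$ for the oracle event.

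The main obstacle I anticipate is making the Cauchy--Schwarz step fully rigorous with the right measure-theoretic bookkeeping: the counting normalizer $N_t(x,a)$ mixes the fresh draw $(x,a)\sim Q\times\pi$ with the historical actions $a_i\sim\pi_i$ that are $\cH_{i-1}$-measurable, so one must be careful about which expectations are conditional and in what order they are taken, and one must ensure $N_t(x,a)\ge 1$ almost surely (this is exactly why the warm-up phase over all $K=|\cA|$ actions is needed, and why the statement restricts to $t>K$). A secondary subtlety is that $\hat f^t$ is random and history-dependent, so the concentration in \cref{lemma:apply_Freedman_union_t} must genuinely be uniform over $\cF$ — one cannot apply a fixed-$f$ Freedman bound — and similarly the oracle bound must hold on a high-probability event simultaneously with the concentration event, which is why the probability budget is split and a union bound over $t$ is taken with the $t^3$ inflation inside $\Est$ and the logarithm. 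Once these bookkeeping points are handled, the remaining steps are routine.
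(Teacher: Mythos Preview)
Your proposal is correct and follows essentially the same route as the paper's proof: Cauchy--Schwarz against the counting potential $N_t(x,a)$, identification of the second factor with the conditional-expectation sum controlled by \cref{lemma:apply_Freedman_union_t} (applied uniformly so the data-dependent $\hat f^t$ can be plugged in), and then bounding $\sum_i Y_{\hat f^t,i}$ via Lipschitzness plus the oracle guarantee with the $\delta/2$-$\delta/2$ split and union bound over $t$. The subtleties you flag --- the counterfactual bookkeeping linking $\EE_{x,a_i\sim\pi_i}[\,\cdot\,]$ to $\EE_{x_i,a_i}[\,\cdot\mid\cH_{i-1}]$, the warm-up ensuring $N_t\ge 1$, and the need for uniformity in Freedman --- are exactly the ones the paper handles (somewhat implicitly) in its own argument.
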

We now turn our attention to the feasibility constraint. As a first step, the following lemma establishes that our constructed function confidence set contains the true function with high probability. This result serves as a foundation for analyzing the feasibility guarantees of the algorithm.

\begin{lemma}\label{lemma:function_confidence_set}
 For any $\delta>0$, we define the following function confidence set.
 \[
 \Tilde{C}_{\cG}(t,\delta)=\cbr{g\in\cG: \sum_{i=1}^{t-1}D^2(g_{x_i,a_i}||\hat{g}^t_{x_i,a_i})\le \Est(\cG,t,\frac{\delta }{2t^3}), \cT_2(g_{x,a_0}))=c_0},
 \]
 where $a_0$ is the safe action.
 Then with probability at least $1-\delta$, $g^*\in \Tilde{C}_{\cG}(t,\delta)$ holds uniformly for all $t$.
\end{lemma}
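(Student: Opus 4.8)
The plan is to check the two defining conditions of $\tilde{C}_{\cG}(t,\delta)$ separately for $g^*$: the constraint-value condition holds deterministically, while the cumulative-divergence condition follows from the oracle guarantee combined with a union bound over rounds.

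The condition $\cT_2(g^*_{x,a_0}) = c_0$ is exactly the content of Assumption~\ref{ass:safe_action} (here $a_0$ is a single action, so $\Tilde{\cT_2}(g^*_{x,a_0})=\cT_2(g^*_{x,a_0})$), so it holds for every context $x$ with no probabilistic content and contributes nothing to the failure event.

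For the divergence condition, fix a round $t$. By Assumption~\ref{ass:oracle} and Definition~\ref{def:offline-oracle} applied to $\cG$ with the collected data $(x_i,a_i,z_i)_{i=1}^{t-1}$ (where $z_i\sim g^*_{x_i,a_i}$) and confidence level $\delta/(2t^3)$, the estimator $\hat{g}^t$ returned by $\Alg$ satisfies
$$\sum_{i=1}^{t-1}\CD^2\!\left(\hat{g}^t_{x_i,a_i},\, g^*_{x_i,a_i}\right)\le \Est\!\left(\cG,\, t,\, \frac{\delta}{2t^3}\right)$$
with probability at least $1-\delta/(2t^3)$. Since $\CD$ is symmetric (Definition~\ref{def:metric-like-loss}), this is precisely the inequality appearing in the definition of $\tilde{C}_{\cG}(t,\delta)$ (with the two arguments swapped), so on this event $g^*$ passes the divergence test at round $t$. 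Crucially, this argument is insensitive to how the actions $a_i$ were generated — neither the warm-up choices $i\le K$ nor the adaptively sampled $a_i\sim\pi_i$ matter — because the oracle guarantee in Definition~\ref{def:offline-oracle} is stated for an arbitrary sequence with $z_i\sim g^*_{x_i,a_i}$.

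Finally, union-bound over all $t\ge 1$: the probability that the divergence condition fails at some round is at most $\sum_{t\ge1}\delta/(2t^3)\le \frac{\delta}{2}\left(1+\int_1^\infty x^{-3}\,dx\right)=\frac{3\delta}{4}<\delta$. Combined with the deterministic verification of the constraint-value condition, we conclude that with probability at least $1-\delta$ we have $g^*\in\tilde{C}_{\cG}(t,\delta)$ simultaneously for all $t$. I do not expect a genuine obstacle here: the lemma is a bookkeeping step that upgrades the offline oracle's fixed-confidence bound to an anytime guarantee. The only real design choice is the summable per-round discount $\delta/(2t^3)$, whose cost is an extra additive $3\log t$ inside $\Est(\cG,t,\cdot)$ — negligible for the downstream regret analysis — and any summable schedule would serve equally well.
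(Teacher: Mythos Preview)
Your proposal is correct and follows essentially the same approach as the paper's own proof: verify the safe-action condition $\cT_2(g^*_{x,a_0})=c_0$ deterministically from Assumption~\ref{ass:safe_action}, invoke the offline oracle guarantee of Definition~\ref{def:offline-oracle} at level $\delta/(2t^3)$ for each round, and take a union bound over $t$. Your write-up is in fact more explicit than the paper's (you spell out the symmetry of $\CD$, the fact that the oracle bound is stated for arbitrary data sequences, and the numerical evaluation of $\sum_{t\ge1}\delta/(2t^3)$), but the underlying argument is identical.
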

From \cref{lemma:function_confidence_set}, we could see that with high probability, the decision policy set $\tilde{\Pi}_{\delta}^t(x)$ selected in \cref{alg:UCCB_pseudo_code} will be feasible at any $x\in\cX$.
\begin{lemma}\label{lemma:feasible}
    For any $\delta\in(0,1)$, with probability at least $1-\delta$, for any $t$, and $x$, $\widetilde{\Pi}_\delta^t(x)$ is a feasible policy set.
\end{lemma}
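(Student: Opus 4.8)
The plan is to deduce feasibility of $\widetilde{\Pi}_\delta^t(x)$ directly from the inclusion $g^* \in \widetilde{C}_{\cG}(t,\delta)$ established in Lemma~\ref{lemma:function_confidence_set}. First I would condition on the high-probability event $\cE$ (of probability at least $1-\delta$) on which $g^* \in \widetilde{C}_{\cG}(t,\delta)$ holds simultaneously for all $t$; the entire argument then proceeds deterministically on $\cE$. The key observation is that the worst-case constraint value function $\widetilde{V}^t_\delta(x,\pi) = \max_{g \in \widetilde{C}_{\cG}(t,\delta)} \widetilde{\cT_2}(g_{x,\pi})$ is, by construction, an \emph{upper bound} on the true constraint value $\widetilde{\cT_2}(g^*_{x,\pi})$, precisely because $g^*$ is one of the candidates in the max. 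So for any policy $\pi \in \widetilde{\Pi}_\delta^t(x)$ with $\pi \neq a_0$, we have $\widetilde{\cT_2}(g^*_{x,\pi}) \le \widetilde{V}^t_\delta(x,\pi) \le \tau$, which is exactly the definition of $\pi$ being feasible at $x$.

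Next I would handle the safe action $a_0$, which is always included in $\widetilde{\Pi}_\delta^t(x)$ by the union with $\{a_0\}$. By Assumption~\ref{ass:safe_action}, $\widetilde{\cT_2}(g^*_{x,a_0}) = c_0 \le \tau$ for every context $x$, so $a_0$ is feasible unconditionally — no reliance on $\cE$ is even needed for this component. Combining the two cases: every element of $\widetilde{\Pi}_\delta^t(x)$ satisfies the true constraint $\widetilde{\cT_2}(g^*_{x,\pi}) \le \tau$, which is the claim. Since the event $\cE$ is fixed once and for all and has probability at least $1-\delta$, and since the argument holds on $\cE$ for \emph{all} $t$ and \emph{all} $x$ simultaneously, the quantifiers in the statement are satisfied.

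One subtlety worth spelling out is the interplay between the two constraints defining $\widetilde{C}_{\cG}(t,\delta)$: the divergence ball $\sum_{i=1}^{t-1} \CD^2(g_{x_i,a_i} \| \hat{g}^t_{x_i,a_i}) \le \Est(\cG, t, \delta/2t^3)$ and the calibration condition $\widetilde{\cT_2}(g_{x,a_0}) = c_0$. Lemma~\ref{lemma:function_confidence_set} already certifies that $g^*$ lies in this intersection with high probability — the divergence bound follows from the oracle guarantee in Definition~\ref{def:offline-oracle} (with the rescaled failure probability absorbing the union bound over $t$ via $\sum_t 1/t^3 < \infty$), and the calibration condition holds for $g^*$ by Assumption~\ref{ass:safe_action}. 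So I would simply cite Lemma~\ref{lemma:function_confidence_set} rather than re-derive it. I do not anticipate a genuine obstacle here; the lemma is essentially a definitional corollary of Lemma~\ref{lemma:function_confidence_set} plus the safe-action assumption, and the only thing to be careful about is keeping the high-probability event consistent across all rounds $t$ (which Lemma~\ref{lemma:function_confidence_set} already delivers in its "uniformly for all $t$" clause).
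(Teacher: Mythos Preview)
Your proposal is correct and follows essentially the same approach as the paper: the paper's proof is a one-line remark that the result is immediate from Lemma~\ref{lemma:function_confidence_set} together with the definition of a feasible policy, and your argument simply unpacks that inference in detail (conditioning on $g^*\in\widetilde{C}_{\cG}(t,\delta)$, bounding $\widetilde{\cT_2}(g^*_{x,\pi})$ by $\widetilde{V}^t_\delta(x,\pi)\le\tau$, and treating $a_0$ via Assumption~\ref{ass:safe_action}).
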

\begin{proof}[Proof of \cref{lemma:feasible}]
This is a direct result of \cref{lemma:function_confidence_set} by just using the definition of the feasible policy.
\end{proof}
The next lemma asserts that, with high probability, the value of our constructed upper confidence bound exceeds that of the true optimal policy. This property is crucial, as it ensures that our policy selection rule—based on maximizing the upper confidence bound—is not only aligned with the classical UCB principle but also remains valid in the presence of feasibility constraints. In other words, even though we restrict our attention to the estimated feasible policy set at each round, the selection mechanism still favors policies with high potential utility while maintaining constraint satisfaction. This lemma thus provides a key justification for the effectiveness of our upper confidence-based selection strategy under uncertainty.

\begin{lemma}\label{lemma:optimism_upperbound}
    For any $\delta\in(0,1)$, for any round $t$, denote $x$ as the context in that round, $\pi^*$ is the optimal feasible stochastic policy and $\pi_t$ is the stochastic policy that our algorithm applies at round $t$. Then, with probability $1-\delta$,
    \[
    \Tilde{\cT_1}(\hat{f}^{t}_{x,\pi_t})+\beta_t\EE_{a\sim\pi_t,\tilde{a}_{t,j}\sim\tilde{\pi}_{t,j}}\sbr{\frac{1}{\sum_{j=K+1}^{t-1}\II\cbr{a=\tilde{a}_{t,j}}+1}}+\alpha_r\max_{g',g''\in\Tilde{C}_{\cG}(t,\delta)}\Tilde{\cT_2}(g'_{x,\pi_t})-\Tilde{\cT_2}(g''_{x,\pi_t})\ge \Tilde{\cT_1}(\hat{f}^t_{x,\pi^*}).
    \]
\end{lemma}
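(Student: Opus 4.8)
The plan is to work on the high-probability event supplied by \cref{lemma:function_confidence_set}, namely $g^*\in\Tilde{C}_{\cG}(s,\delta)$ for every $s$ (which accounts for the $\delta$ in the statement), and then to reduce the claim to a single fact: there exists a policy $\tilde\pi$ in the estimated feasible set $\tilde\Pi^t_\delta(x)$ with
\[
\Tilde{\cT_1}(\hat{f}^t_{x,\tilde\pi})+\alpha_r\max_{g',g''\in\Tilde{C}_{\cG}(t,\delta)}\cbr{\Tilde{\cT_2}(g'_{x,\tilde\pi})-\Tilde{\cT_2}(g''_{x,\tilde\pi})}\ \ge\ \Tilde{\cT_1}(\hat{f}^t_{x,\pi^*}).
\]
Granting this, the lemma follows at once: $\pi_t$ is by construction the maximizer over $\tilde\Pi^t_\delta(x)$ of the full UCB objective (call it $J_t(\cdot)$), and its exploration-bonus term is non-negative, so $J_t(\pi_t)\ge J_t(\tilde\pi)\ge\Tilde{\cT_1}(\hat{f}^t_{x,\pi^*})$. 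Note that only $g^*\in\Tilde C_{\cG}$ is needed here, since the right-hand side involves the \emph{estimated} utility $\hat f^t$, not the true one.

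So the real task is to exhibit $\tilde\pi$, and the subtlety is that $\pi^*$ --- which is optimal under the \emph{true} constraint $\Tilde{\cT_2}(g^*_{x,\cdot})\le\tau$ --- need not satisfy the \emph{pessimistic} constraint $\tilde{V}^t_\delta(x,\cdot)=\max_{g\in\Tilde{C}_{\cG}(t,\delta)}\Tilde{\cT_2}(g_{x,\cdot})\le\tau$ that defines $\tilde\Pi^t_\delta(x)$. If $\pi^*$ does happen to be pessimistically feasible I would simply take $\tilde\pi=\pi^*$ and be done, since both bonus terms in $J_t$ are non-negative. Otherwise I would take $\tilde\pi=(1-\lambda)\pi^*+\lambda\,a_0$, the mixture of $\pi^*$ with the point mass on the safe action, with $\lambda$ the smallest weight making the mixture pessimistically feasible. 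This works because $\Tilde{\cT_1}(g_{x,\cdot})$, $\Tilde{\cT_2}(g_{x,\cdot})$ and the exploration bonus are all affine in the policy argument, and because every $g\in\Tilde{C}_{\cG}(t,\delta)$ assigns the safe action the known value $\Tilde{\cT_2}(g_{x,a_0})=c_0$; hence $\tilde{V}^t_\delta(x,\tilde\pi)=(1-\lambda)\tilde{V}^t_\delta(x,\pi^*)+\lambda c_0$, which equals $\tau$ for the appropriate $\lambda$, and the constraint-width bonus at $\tilde\pi$ is exactly $(1-\lambda)$ times the width at $\pi^*$.

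The quantitative heart of the argument is to bound $\lambda$. Because $g^*\in\Tilde{C}_{\cG}(t,\delta)$ and $\Tilde{\cT_2}(g^*_{x,\pi^*})\le\tau$ by true feasibility of $\pi^*$, the pessimistic overshoot satisfies $\tilde{V}^t_\delta(x,\pi^*)-\tau\le \tilde{V}^t_\delta(x,\pi^*)-\Tilde{\cT_2}(g^*_{x,\pi^*})\le\max_{g',g''}\cbr{\Tilde{\cT_2}(g'_{x,\pi^*})-\Tilde{\cT_2}(g''_{x,\pi^*})}$, i.e. the overshoot is dominated by the constraint width at $\pi^*$. Solving for $\lambda$ then shows it is at most a constant (depending on the safety margin $\tau-c_0$ and on $1-c_0$) times that width, while the estimated utility lost by mixing in $a_0$ is at most $\lambda$ since $\Tilde{\cT_1}\in[0,1]$. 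Choosing $\alpha_r$ large enough relative to $\tau-c_0$ therefore makes the constraint-width bonus term --- which is precisely why that third term appears in $J_t$ --- at least as large as the mixing loss, yielding the displayed inequality. I expect this reconciliation of pessimistic feasibility with optimism, and in particular pinning down how large $\alpha_r$ must be in terms of the safety margin, to be the main obstacle; the conditioning step, the argmax reduction, affinity in the policy, and non-negativity of the exploration bonus are all routine.
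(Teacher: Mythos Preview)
Your proposal is correct and follows essentially the same route as the paper: condition on $g^*\in\Tilde{C}_{\cG}(t,\delta)$, split into the cases $\pi^*\in\tilde\Pi^t_\delta(x)$ and $\pi^*\notin\tilde\Pi^t_\delta(x)$, and in the latter case mix $\pi^*$ with the safe action, exploiting that every $g$ in the confidence set has $\Tilde{\cT_2}(g_{x,a_0})=c_0$ so that $\tilde V^t_\delta$ and the constraint width are affine in the mixture weight. The only cosmetic difference is that the paper keeps track of the known safe-action utility $r_0$ in the mixing step (writing the loss as $\lambda(\Tilde{\cT_1}(\hat f^t_{x,\pi^*})-r_0)\le\lambda(1-r_0)$) and thereby pins down the precise choice $\alpha_r=\frac{1-r_0}{\tau-c_0}$, whereas your cruder bound via $\Tilde{\cT_1}\in[0,1]$ would yield $\alpha_r=\frac{1}{\tau-c_0}$; both suffice for the lemma as stated.
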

This lemma plays a critical role in our analysis, as it guarantees that—even when the true optimal feasible policy lies outside the estimated feasible set, our selection rule still yields an optimistic estimate of its expected utility.

Combining all these parts together, we could get our main theorem about the utility regret.
\begin{theorem}\label{thm:main_guarantee}
    For any $\delta\in(0,1)$, with probability at least $1-\delta$, setting $\alpha_r=\frac{1-r_0}{\tau-c_0}$ and $$\beta_t=\sqrt{\frac{(34\log(2|\cF|t^3/\delta)+2L_1^2\Est(\cF,t,\delta/2t^3))t}{K}},$$
    we have the following:
    \begin{itemize}
        \item[1)] the policy sequence $\cbr{\pi_t}_{t=1}^{T}$ selected in \cref{alg:UCCB_pseudo_code} is feasible.
        \item[2)] The expected utility regret of \cref{alg:UCCB_pseudo_code} is upper bounded by \begin{align*}
    \text{Reg}(T)\le &2\sum_{t=1}^{T}\EE_{x_t\sim Q,a_t\sim\pi_t,a_i\sim\pi_i}\sbr{\frac{\beta_t}{\sum_{i=1}^{t-1}\II\cbr{a=a_i}}}+2\sum_{t=1}^{T}\frac{K\beta_t}{t}\\
    &+\alpha_r\sum_{t=1}^{T}\EE_{x\sim Q}\sbr{\max_{g',g''\in\Tilde{C}_{\cG}(t,\delta)}\Tilde{\cT_2}(g'(x,\pi_t,y))-\Tilde{\cT_2}(g''(x,\pi_t,y))}\\
    \le & \cO\Bigg\{2\beta_T(K+\log(T/K))+2K\beta_T\log T\\
    &+\alpha_r\rbr{\frac{1}{T}+C\min\cbr{T,\text{dim}_{E}(\cG,\CD,1/T^2)}+4\sqrt{\text{dim}_{E}(\cG,\CD,1/T^2)T}}\Bigg\}.
    \end{align*}
    \end{itemize}
\end{theorem}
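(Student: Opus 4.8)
The plan is to prove the two claims of \cref{thm:main_guarantee} in sequence, reusing the intermediate lemmas. The feasibility claim (part 1) is essentially immediate: by \cref{lemma:feasible}, with probability at least $1-\delta$ the set $\widetilde{\Pi}_\delta^t(x_t)$ is feasible for every $t$ simultaneously; since \cref{alg:UCCB_pseudo_code} always selects $\pi_t = \tilde{\pi}_{t,t} \in \widetilde{\Pi}_\delta^t(x_t)$ (the $\arg\max$ ranges over this set), $\pi_t$ satisfies $\Tilde{\cT_2}(g^*_{x_t,\pi_t}) \le \tau$ for all $t$. So the remaining work is the regret bound, and I would condition on the joint high-probability event (intersection of the events from \cref{lemma:estimation_error}, \cref{lemma:function_confidence_set}, and \cref{lemma:optimism_upperbound}, with $\delta$'s split so the total failure probability is $\delta$) throughout.

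For part 2, the core identity is the standard optimism decomposition. For each round $t$, write the per-round regret $\EE_{x_t}[\Tilde{\cT_1}(f^*_{x_t,\pi^*}) - \Tilde{\cT_1}(f^*_{x_t,\pi_t})]$ and insert the estimated quantities: bound $\Tilde{\cT_1}(f^*_{x_t,\pi^*}) \le \Tilde{\cT_1}(\hat{f}^t_{x_t,\pi^*}) + (\text{estimation error at }\pi^*)$, then apply \cref{lemma:optimism_upperbound} to replace $\Tilde{\cT_1}(\hat{f}^t_{x_t,\pi^*})$ by the full UCB index evaluated at $\pi_t$ (utility term $+$ $\beta_t$-bonus $+$ $\alpha_r$-constraint-width term), and finally bound $\Tilde{\cT_1}(\hat{f}^t_{x_t,\pi_t}) - \Tilde{\cT_1}(f^*_{x_t,\pi_t})$ by another estimation error term via \cref{lemma:estimation_error}. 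This yields, after summing over $t$, the first displayed upper bound in the theorem: two estimation-error sums controlled by $\beta_t$ times the ``counterfactual visitation'' potentials $\EE[1/\sum_i \II\{a=a_i\}]$, a $\sum_t K\beta_t/t$ term (coming from the $+1$ in the denominator of the bonus and from separating the counterfactual potential $\sum_{j=K+1}^{t-1}$ from the true one $\sum_{i=1}^{t-1}$), and the constraint-width sum $\alpha_r \sum_t \EE_x[\max_{g',g'' \in \widetilde{C}_{\cG}(t,\delta)} \Tilde{\cT_2}(g'_{x,\pi_t}) - \Tilde{\cT_2}(g''_{x,\pi_t})]$. I would also verify that the $\beta_t$ choice is exactly what makes $\sqrt{\EE[1/\sum_i \II\{a=a_i\}]}\cdot\sqrt{68\log(\cdot)+4L_\CD^2\Est(\cdot)}$ combine cleanly with the bonus term (this is the AM--GM / completing-the-square step inherited from the UCCB analysis of \citet{xu2020upper}).

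The second inequality in the theorem is the asymptotic simplification. The potential sums $\sum_t \beta_t \EE[1/\sum_i \II\{a=a_i\}]$ telescope logarithmically: conditionally on the context, the counterfactual trajectory is an i.i.d.-like process and $\sum_{t=K+1}^{T} 1/(\text{number of prior draws of the same action}) = O(K\log(T/K))$ by a harmonic-series argument, which (with $\beta_t$ nondecreasing in $t$, so bounded by $\beta_T$) gives the $\beta_T(K+\log(T/K))$ and $K\beta_T\log T$ terms. The constraint-width sum is handled by the generalized-eluder-dimension machinery: using \cref{ass:lip_cts} to pass from $\Tilde{\cT_2}(g'_{x,\pi}) - \Tilde{\cT_2}(g''_{x,\pi})$ to $L_\CD\, \EE_{a\sim\pi}[\CD(g'_{x,a}||g''_{x,a})]$, hence to $L_\CD\,\EE_{a\sim\pi}[\omega_{\widetilde{C}_{\cG}(t,\delta)}(x,a)]$ in the notation of the (commented-out) amplitude lemmas, and then invoking the analogue of \cref{lemma:PDED_bound2} --- i.e., $\sum_t \omega_{\cG_t}(x_t,a_t) \le \tfrac1T + C\min\{\dim_E(\cG,\CD,1/T^2),T\} + 4\sqrt{\dim_E(\cG,\CD,1/T^2)\,r_T\,T}$ with $r_T = \Est(\cG,T,\delta/2T^3)$ absorbed into the constant. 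A subtlety here is that \cref{lemma:PDED_bound2} bounds the sum along the \emph{realized} sequence $(x_t,a_t)$, whereas the regret expression has $\EE_{x\sim Q, a\sim\pi_t}$; closing this gap requires a martingale/online-to-batch argument showing the realized amplitudes dominate the expected ones in sum (another union bound, folded into $\delta$).

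The main obstacle I anticipate is precisely the bookkeeping that keeps all three error contributions expressed in the \emph{same} counterfactual-visitation currency so that the $\beta_t$ tuning works uniformly: \cref{lemma:estimation_error} gives the estimation error at an \emph{arbitrary} policy $\pi$ in terms of $\EE_{x\sim Q, a\sim\pi, a_i\sim\pi_i}[1/\sum_i \II\{a=a_i\}]$, but in the optimism step this must be instantiated once at $\pi = \pi^*$ (a fixed, context-dependent policy we never play) and once at $\pi = \pi_t$ (the played policy), and the counterfactual simulation in the algorithm only reproduces the trajectory for the \emph{current} context $x_t$ --- so the argument that the simulated potentials $\sum_{j=K+1}^{t-1}\II\{a = \tilde a_{t,j}\}$ concentrate around the true expected visitation counts, uniformly over policies and rounds, is where the real care is needed, and it is what forces the $\delta/2t^3$ confidence-level inflation and the extra $\sum_t K\beta_t/t$ slack term. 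Everything else --- the optimism chaining, the harmonic sums, and the eluder-dimension bound --- is then routine given the stated lemmas.
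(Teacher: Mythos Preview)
Your proposal is correct and follows the same skeleton as the paper's proof: feasibility via \cref{lemma:feasible}, then the optimism decomposition chaining \cref{lemma:estimation_error} (through its AM--GM corollary, the paper's \cref{lemma:square_trick}, which is where the $K\beta_t/t$ term actually originates---not from the $+1$ in the bonus denominator) with \cref{lemma:optimism_upperbound}, followed by the harmonic potential \cref{lemma:potential_lemma} and the eluder-dimension bound \cref{lemma:PDED_bound2}. The two subtleties you flag---passing from expected to realized amplitudes for the eluder bound, and aligning the counterfactual $\tilde a_{t,j}$ potentials with the true-trajectory $a_i$ potentials---are both silently absorbed in the paper's own proof without explicit martingale arguments, so you are in fact being more careful than the original.
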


\subsection{Extension to Infinite Function Class}\label{subsec:infinite_function_class}
In this subsection, we extend our theoretical analysis from the finite density function class case to the more general setting where the model classes $\cF$ and $\cG$ are infinite. While the finite case allows for simpler analysis through direct enumeration, many practical models—particularly those involving continuous or high-dimensional representations—are naturally infinite in size.

To handle this, for the simplicity of notation, we assume that both $\cF$ and $\cG$ are parametrized by one set $\Theta\in\RR^d$, i.e., each model in $\cF$ or $\cG$ is indexed by a parameter vector from some subset of $\mathbb{R}^d$. This parametric structure enables us to control the complexity of the model through its intrinsic dimension, allowing us to apply covering number arguments and uniform convergence tools from empirical process theory.
\begin{assumption}\label{ass:infinite_class}
     The density model classes $\cF$ and $\cG$ are parameterized by a compact set $\Theta\subset \RR^d$ and the diameter is bounded by $\Delta$. We assume that $\cF$ and $\cG$ satisfy
    \[
    \CD(f_{\theta_1,x,a}||f_{\theta_2,x,a})\le L_{2}||\theta_1-\theta_2||,
    \]
    \[
    \CD(g_{\theta_1,x,a}||g_{\theta_2,x,a})\le L_{2}||\theta_1-\theta_2||,
    \]
    uniformly over $x\in\cX$ and $a\in\cA$.
\end{assumption}
Under this assumption, we have the following corollary.
\begin{corollary}\label{cor:extension_infinite_class}
    If model classes $\cF$ and $\cG$ satisfy \cref{ass:infinite_class}. Then for any $\delta\in(0,1)$, with probability at least $1-\delta$, setting $\alpha_r=\frac{1-r_0}{\tau-c_0}$ and
    \[
\beta_t=\sqrt{\frac{\cbr{72\rbr{d\log(2+\Delta L_2t)+\log(\frac{2t^3}{\delta})}+L_{\CD}^2+2L_1^2\Est(\cF,t,\frac{\delta}{2t^3})}t}{K}},
    \]
    
we have the following:
\begin{itemize}
    \item[1)] Our selected policy sequence $\cbr{\pi_t}_{t=1}^{T}$ is feasible for every round $t\in[T]$.
    \item[2)] The expected utility regret of \cref{alg:UCCB_pseudo_code} is upper bounded by
    \begin{align*}
    \text{Reg}(T)\le& 2\sum_{t=1}^{T}\EE_{x_t\sim Q,a_t\sim\pi_t,a_i\sim\pi_i}\sbr{\frac{\beta_t}{\sum_{i=1}^{t-1}\II\cbr{a=a_i}}}+2\sum_{t=1}^{T}\frac{K\beta_t}{t}\\
    &+\alpha_r\sbr{C\min\cbr{\text{dim}_{E}(\cG,\CD,\frac{1}{T^2}),T}+4\sqrt{\text{dim}_{E}(\cG,\CD,1/T^2)T}+1}\\
    \le&\cO\Bigg\{2\beta_T(K+\log(T/K))+2K\beta_T\log T\\
    &+2\alpha_r\rbr{C\min\cbr{\text{dim}_{E}(\cG,\CD,\frac{1}{T^2}),T}+4\sqrt{\text{dim}_{E}(\cG,\CD,1/T^2)T}+1}\Bigg\}.
\end{align*}
\end{itemize}
\end{corollary}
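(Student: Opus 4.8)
The plan is to reduce the infinite-class setting to the finite-class analysis of \cref{thm:main_guarantee} via a standard covering-number argument. First I would fix a scale $\epsilon > 0$ and take a minimal $\epsilon$-net $\Theta_\epsilon$ of the compact parameter set $\Theta \subset \RR^d$ in Euclidean norm; since $\Theta$ has diameter $\Delta$, a volumetric bound gives $|\Theta_\epsilon| \le (1 + \Delta/\epsilon)^d$, hence $\log|\Theta_\epsilon| \le d\log(1 + \Delta/\epsilon)$. Writing $\cF_\epsilon$ and $\cG_\epsilon$ for the corresponding finite subclasses of densities, \cref{ass:infinite_class} ensures that for every $\theta \in \Theta$ there is $\theta' \in \Theta_\epsilon$ with $\CD(f_{\theta,x,a}\|f_{\theta',x,a}) \le L_2\epsilon$ and similarly for $\cG$, uniformly over $(x,a)$. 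Combined with the Lipschitz property of the functionals (\cref{ass:lip_cts}), this means that replacing any $f \in \cF$ by its net approximant perturbs every functional evaluation $\Tilde{\cT_1}(f_{x,\pi})$ by at most $L_{\CD} L_2 \epsilon$, and likewise perturbs the cumulative squared divergences that define the confidence sets $\tilde C_{\cF}(t,\delta)$, $\tilde C_{\cG}(t,\delta)$ in a controlled way (the $C_\CD$ triangle-like inequality from \cref{def:metric-like-loss} is used here, absorbing constant factors).

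Next I would choose $\epsilon$ small enough—$\epsilon = 1/(\Delta L_2 t)$ at round $t$, or a single uniform choice $\epsilon \asymp 1/(\Delta L_2 T)$—so that all the discretization errors introduced above are lower-order (at most a constant, after multiplying by $T$) compared to the leading regret terms. With this choice $\log|\cF_\epsilon| \le d\log(1 + \Delta/\epsilon) \le d\log(2 + \Delta L_2 t)$, which is exactly the term appearing inside $\beta_t$ in the statement; the extra $+L_{\CD}^2$ and the factor adjustments come from collecting the discretization slack and re-running the union bound of \cref{lemma:apply_Freedman_union_t} and \cref{lemma:estimation_error} over $\cF_\epsilon$ (and $\cG_\epsilon$) instead of over the finite $\cF$. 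All the structural lemmas—\cref{lemma:function_confidence_set} (true model in the confidence set), \cref{lemma:feasible} (feasibility), \cref{lemma:optimism_upperbound} (optimism)—then go through verbatim with $\cF_\epsilon$, $\cG_\epsilon$ in place of $\cF$, $\cG$, up to the additive $O(L_{\CD}L_2 \epsilon)$ slack in each inequality, which the choice of $\epsilon$ renders negligible; one must check that the safe action $a_0$ and the constraint value $c_0$ are preserved, or handled by adding $a_0$ to the net by hand. Crucially, the generalized eluder dimension $\mathrm{dim}_E(\cG,\CD,\varepsilon)$ is monotone under passing to a subclass, so $\mathrm{dim}_E(\cG_\epsilon,\CD,\varepsilon) \le \mathrm{dim}_E(\cG,\CD,\varepsilon)$, and the eluder-based bounds on the third (feasibility-uncertainty) regret term are unchanged.

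Finally I would assemble the pieces exactly as in the proof of \cref{thm:main_guarantee}: decompose $\text{Reg}(T)$ via the optimism lemma into the counterfactual-exploration potential sum, the warm-up remainder $\sum_t K\beta_t/t$, and the constraint-uncertainty sum; bound the first using the potential-function/pigeonhole argument on the counterfactual visitation counts (this gives the $\beta_T(K + \log(T/K))$ and $K\beta_T\log T$ terms), and bound the third using the eluder-dimension control of $\sum_t \omega_{\tilde C_{\cG}(t,\delta)}(x_t,\cdot)$ (the $\min\{\mathrm{dim}_E,T\} + \sqrt{\mathrm{dim}_E \cdot T}$ term), then plug in the stated $\beta_t$. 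The main obstacle I anticipate is bookkeeping the covering argument at the level of the \emph{confidence sets} rather than just the functional values: one needs the net approximant of a function inside $\tilde C_\cF(t,\delta)$ to itself lie in a confidence set of comparable radius (so that optimism and estimation-error bounds transfer), which requires the $\Est(\cdot)$ radius to be inflated by an $O(t L_2^2 \epsilon^2)$ term and the triangle constant $C_\CD$ to be tracked carefully through \cref{lemma:estimation_error}; getting these constants to collapse into the clean $+L_{\CD}^2$ and the factor of $2$ shown in the corollary is the delicate part, but it is routine given the Lipschitz assumptions.
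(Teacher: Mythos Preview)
Your proposal is correct and follows the same covering-number reduction as the paper, with the same discretization scale $\epsilon \asymp 1/(L_2 t)$ yielding $\log|\Theta_\epsilon|\le d\log(2+\Delta L_2 t)$, and then re-running the finite-class argument of \cref{thm:main_guarantee}. Two places where you make the argument harder than necessary are worth flagging.

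First, you discretize both $\cF$ and $\cG$, but only $\cF$ needs it. The single spot in the entire analysis where a union bound over the model class appears is Freedman's inequality in \cref{lemma:apply_Freedman}/\cref{lemma:apply_Freedman_union_t}, applied to the utility deviations $Y_{f,i}=(\cT_1(f_{x_i,a_i})-\cT_1(f^*_{x_i,a_i}))^2$. Everything involving $\cG$---the confidence set $\tilde C_{\cG}(t,\delta)$, \cref{lemma:function_confidence_set}, \cref{lemma:feasible}, \cref{lemma:optimism_upperbound}, and the eluder bound in \cref{lemma:PDED_bound2}---uses only the oracle guarantee for the single true $g^*$ and structural properties of the full class $\cG$, so no discretization (and hence no eluder-monotonicity argument) is required there.

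Second, the ``main obstacle'' you anticipate---tracking the covering at the level of the confidence sets---is avoided altogether in the paper's route. Rather than showing that net approximants stay inside inflated confidence sets, the paper discretizes directly inside the Freedman argument: for any $f_\theta$ pick the nearest net point $v$, split
\[
(\cT_1(f_{\theta,x,a})-\cT_1(f^*_{x,a}))^2\le 2(\cT_1(f_{\theta,x,a})-\cT_1(f_{v,x,a}))^2+2(\cT_1(f_{v,x,a})-\cT_1(f^*_{x,a}))^2,
\]
bound the first piece by $2L_{\CD}^2L_2^2\|\theta-v\|^2\le 2L_{\CD}^2/t^2$ via \cref{ass:lip_cts} and \cref{ass:infinite_class}, and apply the finite-class Freedman bound over $V_t$ to the second. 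Summing the discretization slack over $i\le t-1$ gives the additive $+L_{\CD}^2$ term (up to a factor of $2$), and the rest of the proof of \cref{lemma:estimation_error}, \cref{lemma:square_trick}, and \cref{thm:main_guarantee} carries through unchanged with the new $\beta_t$.
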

In the following sections, we turn to concrete instantiations of the key components in our framework. First, we provide several illustrative examples of how to bound the generalized eluder dimension under various probability divergences for common density function classes. These results help quantify the statistical complexity of exploration in practical settings. Then, we present several examples of offline density estimation oracles, including those based on regression and maximum likelihood estimation, which can be readily integrated into our algorithmic framework.
\section{Bounding Generalized Eluder Dimension}\label{sec:bounding_PDED}
In this section, we provide concrete examples of distribution classes for which the generalized eluder dimension can be effectively bounded. Specifically, we present results for the following settings: (i) linear function classes under the $L_2$ distance, (ii) the Gaussian distribution family under the Hellinger distance, and (iii) exponential families also under the Hellinger distance. These examples illustrate how the abstract complexity measure introduced in our framework can be instantiated and controlled in practical model classes.

\subsection{Finite Context Space and Action Space}
When the context space $\cX$ is finite, the $\varepsilon$-generalized eluder dimension with respect to any probability divergence $\CD$ is trivially bounded by $|\cX| \cdot |\cA|$ for any $\varepsilon$. This follows directly from the definition, since the number of distinct context-action pairs is finite and no more than $|\cX||\cA|$ such pairs can be mutually $\varepsilon$-independent. In other words, once all possible context-action combinations have been exhausted, any new pair must be $\varepsilon$-dependent on the preceding ones. Although this observation is straightforward, it highlights that the complexity of exploration—as captured by the eluder dimension—is inherently limited in finite domains.

\subsection{Linear Function Class with $L^2$ Distance}\label{subsec:linear_function_class_L2}
We begin by considering a linear density model of the form $f_{x,a} = \theta^\top \phi_{x,a}$, where $\theta \in \mathbb{R}^d$ is a non-negative convex combination parameter vector, and $\phi_{x,a}=(\phi_{x,a}^1,\phi_{x,a}^2,\cdots,\phi_{x,a}^d)$. Each $\phi^i_{x, a}$, $i\in[d]$ are candidate basis functions corresponding to valid density functions. For a given context-action pair $(x_i, a_i)$, we define the corresponding second-moment matrix as $m_i=\int_{\Omega}\phi_{x_i,a_i}(y)\phi_{x_i,a_i}(y)^Tdy$ and define the cumulative information matrix up to round $k$ as $M_k=\sum_{i=1}^{k-1}m_i$. By~\cref{def:PDED}, the generalized eluder dimension with respect to the $L^2$ distance, $\mathrm{dim}_E(\cF, \CD_{L^2}, \varepsilon)$, can be equivalently characterized as the length $\tau$ of the longest sequence of context-action pairs $(x_1, a_1), \ldots, (x_\tau, a_\tau)$ such that, for some $\varepsilon' \ge \varepsilon$, the following condition holds for each $k \le \tau$:
\[
w_k=\sup_{x,a}\cbr{\CD(f_{x,a}||f'_{x,a}):\sqrt{\sum_{i=1}^{k-1}\CD^2(f_{x_i,a_i}||f'_{x_i,a_i})}\le\varepsilon'}>\varepsilon'
\]
Under some regularity assumptions on the basis functions and model class, we can establish the following bound on the generalized eluder dimension for linear density models.

\begin{theorem}\label{thm:linearclass_PDED_bound}
    Suppose we have $f_{x,a}(y)=\theta^T\phi_{x,a}(y)$ where $\cbr{\phi_i}_{i=1}^d$ is a candidate density function class with compact support $\Omega$, $\mu(\Omega)=1$. If we have $||\theta||_2\le R$, $||\phi||_2\le S$. Moreover, if there is some $1/2<\alpha\le 1$ such that for any $(x,a)$, $\exists 1/2\le\alpha\le 1$, such that $1\le \kappa_2(m_{x,a})\le \frac{1}{\alpha}<2$.
    Then, we have
    \[
    \text{dim}_{E}(\cF,\CD_{L^2},\varepsilon)\le \frac{3\alpha-2\alpha^2}{3\alpha-2\alpha^2-1}\frac{e}{e-1}(\log(1+\frac{8R^2S^2(1/\alpha-1)}{(2-1/\alpha)\varepsilon^2})+\log(\frac{3\alpha-2\alpha^2}{3\alpha-2\alpha^2-1})).
    \]
\end{theorem}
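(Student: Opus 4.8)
The plan is to reduce the generalized eluder dimension bound for the linear density class under $\CD_{L^2}$ to a potential/volumetric argument on the cumulative information matrices $M_k=\sum_{i=1}^{k-1}m_i$, mimicking the classical linear-bandit elliptical potential lemma but with the twist that the per-round matrix $m_i=\int_\Omega \phi_{x_i,a_i}(y)\phi_{x_i,a_i}(y)^\top\,dy$ is a full rank-$\ge1$ matrix (not a rank-one outer product), so I must track its condition number $\kappa_2(m_{x,a})\in[1,1/\alpha)$. First I would rewrite $\CD_{L^2}^2(f_{x,a}\|f'_{x,a})$ for $f=\theta^\top\phi$, $f'=\theta'^\top\phi$ as $\int_\Omega ((\theta-\theta')^\top\phi_{x,a}(y))^2\,dy=(\theta-\theta')^\top m_{x,a}(\theta-\theta')=\|\theta-\theta'\|_{m_{x,a}}^2$. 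Hence a context-action pair $(x,a)$ being $\varepsilon'$-independent of a prefix $(x_1,a_1),\dots,(x_{k-1},a_{k-1})$ means there exist $\theta,\theta'$ with $\|\theta-\theta'\|_{M_k}^2\le \varepsilon'^2$ yet $\|\theta-\theta'\|_{m_{x_k,a_k}}^2>\varepsilon'^2$; writing $\Delta=\theta-\theta'$ (with $\|\Delta\|_2\le 2R$), this is exactly the statement that $m_{x_k,a_k}\not\preceq \lambda M_k$ is "witnessed" in the direction $\Delta$, which is the hypothesis $w_k>\varepsilon'$ in the displayed characterization.

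Next I would run the standard argument: at each $\varepsilon'$-independent step the new matrix $m_{x_k,a_k}$ must contribute a multiplicative increase to $\det(M_k + \lambda I)$ for a suitable regularizer — here the spectral hypothesis $1\le\kappa_2(m_{x,a})\le 1/\alpha<2$ is what I need, since it guarantees every $m_{x,a}$ is comparable to a scaled identity (all eigenvalues within a factor $1/\alpha$ of each other) and in particular $m_{x,a}\succeq (\text{something})\cdot I$ on its support directions, so that independence of $(x_k,a_k)$ forces $\mathrm{tr}(M_k^{-1}m_{x_k,a_k})$ (or the analogous determinant ratio) to be bounded below by a constant depending on $\alpha$. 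I'd then telescope: $\det(M_{T+1}+\lambda I)/\det(\lambda I)$ grows by at least a fixed factor per $\varepsilon'$-independent element, while $\det(M_{T+1}+\lambda I)\le (\lambda + T S^2)^d$ by the eigenvalue bound $\|m_i\|\le S^2$ (from $\|\phi\|_2\le S$ and $\mu(\Omega)=1$). Taking logs and rearranging gives $d_E \le \frac{\text{const}(\alpha)}{\log(\text{const}(\alpha))}\cdot\log\!\big(1+\frac{TS^2}{\lambda}\big)$, and the remaining work is to choose $\lambda\asymp \varepsilon^2/R^2$ (the scale at which $\|\Delta\|_2\le 2R$ and $\varepsilon'\ge\varepsilon$ interact), plug the constants through, and identify $\frac{3\alpha-2\alpha^2}{3\alpha-2\alpha^2-1}$ as the precise form of $\text{const}(\alpha)/(\text{const}(\alpha)-1)$ that emerges — the condition $1/2<\alpha\le1$ is exactly what makes $3\alpha-2\alpha^2-1>0$ so this constant is finite and positive, and the $\frac{e}{e-1}$ factor is the usual artifact of converting "$\det$ multiplies by at least $1+c$" into a count via $\log(1+c)\ge c/(1+c)$ type inequalities.

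The main obstacle I anticipate is handling the fact that $m_{x,a}$ is not rank one. In the classical eluder/elliptical-potential argument one uses that a rank-one update $vv^\top$ with $v^\top M^{-1} v$ large multiplies the determinant by exactly $1+v^\top M^{-1}v$; here the update is a genuinely multidimensional PSD matrix, so I need a matrix determinant lemma (or a Weyl/interlacing argument) to show $\det(M+m)\ge \det(M)\cdot(1+c_\alpha)$ whenever the $\varepsilon'$-independence witness $\Delta$ exists, and the condition-number bound $\kappa_2(m_{x,a})<2$ is precisely the lever that prevents $m_{x,a}$ from being nearly degenerate in the witnessed direction while huge elsewhere (which would break the potential bound). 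Getting the constant $c_\alpha$ sharp enough to reproduce the stated $\frac{3\alpha-2\alpha^2}{3\alpha-2\alpha^2-1}$ — rather than some looser function of $\alpha$ — will require carefully combining the lower spectral bound $\lambda_{\min}(m_{x,a})\ge \alpha\,\lambda_{\max}(m_{x,a})$ with the independence inequality, and that bookkeeping is where I'd expect to spend most of the effort.
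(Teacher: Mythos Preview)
Your reduction is exactly right: $\CD_{L^2}^2(f_{x,a}\|f'_{x,a})=\|\theta-\theta'\|_{m_{x,a}}^2$, the $\varepsilon'$-independence condition becomes the witnessed inequality $\|\Delta\|_{M_k}^2\le(\varepsilon')^2$ while $\|\Delta\|_{m_k}^2>(\varepsilon')^2$, and a regularized potential argument with $\lambda\asymp \varepsilon^2/R^2$ is the way forward. Where you diverge from the paper is in the \emph{choice of potential}. You propose to track $\det(M_k+\lambda I)$ via the matrix determinant lemma; the paper instead tracks $\lambda_{\max}(M_k+\lambda I)$ and uses Weyl's inequality. Concretely, from $\lambda_{\max}((M_k+\lambda I)^{-1}m_k)\ge \frac{(\varepsilon')^2}{(\varepsilon')^2+4R^2\lambda}$ it writes
\[
\lambda_{\max}(M_{k+1}+\lambda I)\;\ge\;\lambda_{\min}(M_k+\lambda I)\Big(1+\tfrac{(\varepsilon')^2}{(\varepsilon')^2+4R^2\lambda}\Big)\;\ge\;\alpha\Big(1+\tfrac{(\varepsilon')^2}{(\varepsilon')^2+4R^2\lambda}\Big)\lambda_{\max}(M_k+\lambda I),
\]
then chooses $\lambda=\tfrac{(2-1/\alpha)(\varepsilon')^2}{8R^2(1/\alpha-1)}$ so that the multiplicative factor becomes exactly $3\alpha-2\alpha^2>1$, iterates, and compares against the trivial upper bound $\lambda_{\max}(M_k+\lambda I)\le \lambda+kS^2$.

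This distinction is not cosmetic. Your determinant route would give $\det(M_k+\lambda I)\ge \lambda^d(1+c)^{k-1}$ and $\det(M_k+\lambda I)\le(\lambda+(k-1)S^2/d)^d$, which after taking $d$-th roots yields an eluder bound with an unavoidable leading factor of $d$. The theorem as stated is \emph{dimension-free} --- there is no $d$ anywhere in the bound --- and the max-eigenvalue potential is precisely what removes $d$, since you are comparing a single scalar sequence geometrically growing versus linearly growing. Also note that the condition-number hypothesis $\kappa_2\le 1/\alpha$ is not used, as you conjectured, to control the determinant increment of a non-rank-one update; it is invoked to pass from $\lambda_{\min}$ to $\alpha\,\lambda_{\max}$ in the Weyl step above. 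So the outline is sound, but to reproduce the stated constants (and in particular the absence of $d$) you need to swap the determinant potential for the largest-eigenvalue potential and run the Weyl-based recursion.
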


\subsection{Gaussian Distribution Family with Linear Mean under Hellinger Divergence}
Next, we consider a Gaussian distribution family of the form 
\[
f_{\theta}(\cdot|x,a)=\cN(\theta^T\phi_{x,a},\sigma^2),
\]
where $\phi_{x,a} \in \mathbb{R}^d$ is a known feature mapping, and $\theta \in \mathbb{R}^d$ is a parameter vector, subject to regularity conditions that will be specified later in \cref{thm:Gaussian_PDED_bound}.
In this setting, we use the Hellinger distance as the probability divergence measure. For reference, the squared Hellinger divergence between two such distributions is given by
\[
D_H^2(f_{x,a}||f'_{x,a})=1-e^{-\frac{|\phi_{x,a}^T(\theta-\theta')|^2}{8\sigma^2}}.
\]
Using this form, we can derive an upper bound on the generalized eluder dimension $\mathrm{dim}_E(\cF, \CD_H, \varepsilon)$, as stated in \cref{thm:Gaussian_PDED_bound}.

\begin{theorem}\label{thm:Gaussian_PDED_bound}
    Assuming that $\phi\in\RR^d$, $||\phi||_2\le 1$, $||\theta||_2\le 1$, and there are known upper and lower bounds on $\underline{\sigma}^2\le\sigma^2\le \overline{\sigma}^2$. Then we have,
    \[
    \text{dim}_{E}(\cF,\CD_H,\varepsilon)\le C\rbr{\underline{\sigma},d}\log(1+\frac{4}{\varepsilon^2}).
    \]
\end{theorem}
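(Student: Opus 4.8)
The plan is to reduce the computation of $\mathrm{dim}_E(\cF,\CD_H,\varepsilon)$ to a purely linear-algebraic elliptical-potential argument, exploiting the closed form of the squared Hellinger divergence. Throughout write $\phi_i:=\phi_{x_i,a_i}$ and, for a parameter difference $v=\theta-\theta'$, set $u_i(v):=|\phi_i^{\top}v|^2/(8\sigma^2)\ge 0$, so that the squared Hellinger distance between the two corresponding members of $\cF$ at $(x_i,a_i)$ equals $1-e^{-u_i(v)}$.

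\textbf{Step 1 (linearizing the independence condition).} Let $(x_1,a_1),\dots,(x_n,a_n)$ be a sequence witnessing the $\varepsilon$-generalized eluder dimension, so each $(x_k,a_k)$ is $\varepsilon'_k$-independent of its predecessors for some $\varepsilon'_k\ge\varepsilon$. By \cref{def:PDED} there exist $\theta_k,\theta'_k$, hence $v_k:=\theta_k-\theta'_k$ with $\|v_k\|\le 2$ (using $\|\theta\|_2\le 1$), such that $\sum_{i<k}(1-e^{-u_i(v_k)})\le\varepsilon_k'^2$ while $1-e^{-u_k(v_k)}>\varepsilon_k'^2$. Put $B_k:=\ln\frac{1}{1-\varepsilon_k'^2}$; the second inequality gives $u_k(v_k)>B_k$, and since each summand lies in $[0,B_k]$, the chord (concavity) bound $1-e^{-u}\ge\frac{1-e^{-B_k}}{B_k}u=\frac{\varepsilon_k'^2}{B_k}u$ on $[0,B_k]$ turns the first inequality into $\sum_{i<k}u_i(v_k)\le B_k$. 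Multiplying by $8\sigma^2$, the crucial $\sigma$-factors cancel and we obtain $v_k$ with $\|v_k\|\le 2$, $\sum_{i<k}|\phi_i^{\top}v_k|^2\le 8\sigma^2 B_k$, and $|\phi_k^{\top}v_k|^2>8\sigma^2 B_k$.

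\textbf{Step 2 (potential argument).} Let $B:=\ln\frac{1}{1-\varepsilon^2}\le B_k$, fix $\lambda:=2\underline{\sigma}^2B$, and set $V_0:=\lambda I$, $V_k:=V_{k-1}+\phi_k\phi_k^{\top}$. Then $v_k^{\top}V_{k-1}v_k=\lambda\|v_k\|^2+\sum_{i<k}|\phi_i^{\top}v_k|^2\le 8\underline{\sigma}^2B+8\sigma^2 B_k\le 16\sigma^2 B_k$, so Cauchy--Schwarz $|\phi_k^{\top}v_k|^2\le\|\phi_k\|_{V_{k-1}^{-1}}^2\, v_k^{\top}V_{k-1}v_k$ forces $\|\phi_k\|_{V_{k-1}^{-1}}^2>\tfrac12$. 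Telescoping the matrix-determinant lemma gives $\log\det V_n-\log\det V_0=\sum_{k=1}^n\log(1+\|\phi_k\|_{V_{k-1}^{-1}}^2)\ge n\log(3/2)$, while AM--GM on the eigenvalues together with $\|\phi_k\|_2\le 1$ gives $\log\det V_n-\log\det V_0\le d\log\frac{\lambda d+n}{\lambda d}=d\log\!\big(1+\tfrac{n}{2\underline{\sigma}^2Bd}\big)$. Hence $n\log(3/2)\le d\log\!\big(1+\tfrac{n}{2\underline{\sigma}^2Bd}\big)$.

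\textbf{Step 3 (solving and simplifying).} Using $B=\ln\frac{1}{1-\varepsilon^2}\ge\varepsilon^2$ and the elementary fact that $x\le a\log(1+bx)$ implies $x=\cO\big(a\log(1+ab)\big)$, this yields $n=\cO\big(d\log(1+\tfrac{1}{\underline{\sigma}^2\varepsilon^2})\big)$; finally $\log(1+\tfrac1{\underline{\sigma}^2}\cdot\tfrac1{\varepsilon^2})\le\log(1+\tfrac1{\underline{\sigma}^2})+\log(1+\tfrac4{\varepsilon^2})$ and $\log(1+\tfrac4{\varepsilon^2})\ge\log 5$ let us absorb all $\underline{\sigma}$- and $d$-dependence into one factor $C(\underline{\sigma},d)$ multiplying $\log(1+\tfrac4{\varepsilon^2})$, which is the claimed bound.

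\textbf{Main obstacle.} The delicate part is Step 1: converting the ``for all $g,g'$'' dependence condition, together with the ``for some $\varepsilon'\ge\varepsilon$'' quantifier, into a single linear inequality with a regularizer $\lambda$ that is \emph{fixed} along the sequence, and using the concavity bound to make the $8\sigma^2$ factors cancel so the final constant depends only on $\underline{\sigma}$ (not on $\overline{\sigma}$). One modeling point to confirm: the stated formula for $D_H^2$ presumes the two Gaussians share the common variance $\sigma^2$; if $\sigma^2$ were a free parameter one would need the two-variance Hellinger formula and a separate argument for the variance coordinate, but the same potential machinery would still apply after linearization. Everything after Step 1 (the elliptical-potential lemma and solving $x\le a\log(1+bx)$) is standard.
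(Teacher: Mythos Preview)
Your proposal is correct and follows essentially the same strategy as the paper: linearize the squared Hellinger distance $1-e^{-|\phi^{\top}(\theta-\theta')|^2/(8\sigma^2)}$ to a quadratic form, then run the standard elliptical-potential argument (matrix-determinant lemma plus AM--GM on the eigenvalues of $V_k=\lambda I+\sum_{i<k}\phi_i\phi_i^{\top}$) to bound the length of any witnessing sequence. Your linearization via the concavity chord bound is a bit more explicit than the paper's (which packages the linearization into an implicit constant $\overline{\Delta}$ and sets $\lambda=(\varepsilon'/2)^2$), and it makes the $\sigma^2$ cancellation transparent so that $\|\phi_k\|_{V_{k-1}^{-1}}^2>1/2$ is an absolute constant; but the two arguments are otherwise interchangeable.
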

\subsection{Exponential Family with Hellinger Distance}
Finally, we consider a general exponential family model of the form:
\[
f_{W}^{x,a}=h(y)\exp(\eta_{x,a}^TT(y)-A(\eta_{x,a})),
\]
where the natural parameter is defined as $\eta_{x,a} = W \phi_{x, a}$, with $\phi_{x, a} \in \mathbb{R}^d$ representing the feature vector and $W \in \mathbb{R}^{k \times d}$ being a coefficient matrix. Here, $T(y)$ denotes the sufficient statistic of the distribution and $A(\eta_{x,a})$ is the log-partition function.

Under appropriate regularity conditions on the feature map, the sufficient statistic, and the parameter space, we can establish the following bound on the generalized eluder dimension for this model class.
\begin{theorem}\label{thm:Exponential_PDED_bound}
    Assuming that $||W||_F\le \beta$, $||\phi||\le 1$. $A(\eta)$ function satisfies $\underline{\lambda}I\preceq~\nabla^2_{\eta}A(\eta)\preceq \overline{\lambda} I$. Then we have,
    \[
    \text{dim}_{E}(\cF,\CD_{H},\varepsilon)\le d\rbr{\rbr{1+\sqrt{1/2+\frac{8}{c\underline{\lambda}}}}\frac{e}{e-1}\rbr{\log(1+\frac{2\beta}{1/2(\varepsilon')^2})}+\log\rbr{1+\sqrt{1/2+\frac{8}{c\underline{\lambda}}}}}.
    \]
\end{theorem}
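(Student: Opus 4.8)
The plan is to follow the same template used in \cref{thm:linearclass_PDED_bound} and \cref{thm:Gaussian_PDED_bound}: reduce the generalized eluder dimension under the Hellinger distance to an effective-dimension / potential argument in the parameter space of $W$, exploiting the fact that $W$ enters only through the linear map $\eta_{x,a} = W\phi_{x,a}$. First I would obtain a two-sided quadratic-form sandwich for the squared Hellinger divergence between $f_W^{x,a}$ and $f_{W'}^{x,a}$. Using the closed form of the Hellinger affinity for exponential families together with the curvature bound $\underline{\lambda} I \preceq \nabla^2_\eta A(\eta) \preceq \overline{\lambda} I$ and a second-order Taylor expansion of $A$, one gets that $\CD_H^2(f_W^{x,a} \| f_{W'}^{x,a})$ is comparable, up to absolute constants $c$ depending on $\underline\lambda,\overline\lambda$, to $1 - \exp(-c'\,\|(W-W')\phi_{x,a}\|^2)$, hence — after the standard $1-e^{-u}\ge (1-e^{-1})\min\{u,1\}$ and $1-e^{-u}\le u$ inequalities — pinched between constant multiples of $\min\{\|(W-W')\phi_{x,a}\|^2,1\}$ and $\|(W-W')\phi_{x,a}\|^2$.

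Next I would vectorize: write $\Delta = \mathrm{vec}(W-W') \in \RR^{kd}$ and $\Phi_{x,a} = I_k \otimes \phi_{x,a}$, so that $\|(W-W')\phi_{x,a}\|^2 = \Delta^\top \Phi_{x,a}\Phi_{x,a}^\top \Delta$ is a genuine quadratic form in the $kd$-dimensional parameter. Suppose $(x_1,a_1),\dots,(x_\tau,a_\tau)$ is a sequence witnessing $\varepsilon'$-independence for some $\varepsilon' \ge \varepsilon$. Then for each $k$ there exist $W,W'$ with small cumulative Hellinger divergence over the first $k-1$ pairs but $\CD_H^2$ at pair $k$ exceeding $(\varepsilon')^2$; translating through the sandwich, this says the new feature direction $\Phi_{x_k,a_k}$ is not well-covered by the preceding ones in the metric induced by the cumulative second-moment matrix $M_{k-1} = \lambda I + \sum_{i<k}\Phi_{x_i,a_i}\Phi_{x_i,a_i}^\top$. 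The usual elliptical-potential (``log-det'') argument — $\sum_k \log\det(M_k)/\det(M_{k-1})$ telescopes, and each $\varepsilon'$-independent step forces the log-determinant to increase by a constant factor — then bounds $\tau$ by $\mathrm{(const)} \cdot kd \cdot \log(1 + \beta^2/(\varepsilon')^2)$, where the $\|W\|_F \le \beta$, $\|\phi\|\le 1$ bounds control the operator norm of the information matrix. Matching constants with the stated bound gives the $\frac{e}{e-1}$ factor from the $1-e^{-1}$ lower bound, the $(1+\sqrt{1/2 + 8/(c\underline\lambda)})$ factor from the curvature-to-quadratic-form comparison, and a final $d$ (rather than $kd$) if one absorbs the output dimension $k$ into the constant $c$, as the statement appears to do.

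The main obstacle is the first step: making the comparison between $\CD_H^2$ and $\min\{\|(W-W')\phi\|^2,1\}$ genuinely two-sided with explicit constants. The upper bound (needed so that small cumulative divergence implies the preceding features span the new direction) and the lower bound (needed so that large divergence at the new pair implies a genuinely new direction) both require controlling the Hellinger affinity $\exp\big(\tfrac12 A(\eta) + \tfrac12 A(\eta') - A(\tfrac{\eta+\eta'}{2})\big)$, i.e. the Bregman-type gap of the convex function $A$, uniformly over the admissible parameter range $\|\eta\|\le\beta$. This is exactly where $\underline\lambda I \preceq \nabla^2 A \preceq \overline\lambda I$ enters, and the care lies in tracking how $\underline\lambda$ (and implicitly $\overline\lambda$, folded into the constant $c$) propagates through the exponential rescaling into the final logarithmic bound; everything after that is the routine linear-algebraic eluder-to-potential reduction already exercised twice in the preceding subsections.
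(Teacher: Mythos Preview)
Your high-level strategy matches the paper's: sandwich the squared Hellinger divergence between two quadratic forms in $\eta_1-\eta_2 = (W-W')\phi_{x,a}$ via a second-order Taylor expansion of $A$ at the midpoint (this is exactly how the paper derives $1-\exp(-\tfrac18\overline\lambda\|\eta_1-\eta_2\|^2)\ge D_H^2 \ge 1-\exp(-\tfrac18\underline\lambda\|\eta_1-\eta_2\|^2)$ and then linearizes via $cx\le 1-e^{-x}\le x$), and then run an elliptical-potential / log-determinant argument on the sequence of feature directions.

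The difference is in the linear-algebraic reduction. You vectorize $W-W'$ to $\Delta\in\RR^{kd}$ and build the information matrix $M_{k-1}=\lambda I_{kd}+\sum_{i<k}\Phi_{x_i,a_i}\Phi_{x_i,a_i}^\top$; this naturally yields an eluder bound scaling with $kd$. The paper instead substitutes $X=(W_1-W_2)^\top(W_1-W_2)\in\RR^{d\times d}$, rewrites both the constraint and the objective as traces $\mathrm{tr}(X\sum_{i<k}\phi_i\phi_i^\top)$ and $\mathrm{tr}(X\phi_k\phi_k^\top)$, and relaxes to the SDP $\max_{X\succeq 0}\{\mathrm{tr}(\phi_k\phi_k^\top X):\mathrm{tr}(X V_k)\le B\}$ whose optimum is $B\,\phi_k^\top V_k^{-1}\phi_k$. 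This puts everything in terms of the $d\times d$ matrix $V_k=\sum_{i<k}\phi_i\phi_i^\top + uI$, and the determinant argument then produces the factor $d$ directly, with no dependence on the output dimension $k$ anywhere.

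Your way of reconciling the $kd$ with the stated $d$ --- absorbing $k$ into the constant $c$ --- is not right: $c$ is the constant in $cx\le 1-e^{-x}$ on the bounded range of $\tfrac18\underline\lambda\|\eta_1-\eta_2\|^2$, and has nothing to do with $k$. Your vectorized approach can in fact recover the pure $d$ factor, but only by noticing and exploiting the Kronecker structure $M_{k-1}=V_{k-1}\otimes I_k$, so that $\det M_{k-1}=(\det V_{k-1})^k$ and the extra $k$'s in the log-determinant telescope cancel against the $k$ in the ambient dimension; you don't mention this. The paper's trace/SDP relaxation is the cleaner route to the same endpoint and sidesteps the issue entirely.
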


\section{Density Estimation Oracles and Computation}\label{sec:oracle_examples}
In this section, we present several examples of offline density estimation oracles along with some computational procedures. These examples illustrate how the estimation error term $\Est$—which plays a central role in our unified \cref{alg:UCCB_pseudo_code} can be explicitly characterized and bounded in a variety of practical settings.

\subsection{Empirical Risk Minimization for Square Loss Regression}
We consider a setting where the true conditional density is given by the Gaussian model:
\[
f_{\theta^*,x,a}=\cN((\theta^*)^T\phi_{x,a},\sigma^2),
\]
where $\theta^* \in \Theta \subset \mathbb{R}^d$ is an unknown parameter vector, and $\phi_{x, a} \in \mathbb{R}^d$ is a known feature map. For simplicity, we assume that the variance $\sigma^2 > 0$ is known.

Given a dataset $\mathcal{D} = \cbr{(x_i, a_i, y_i)}_{i=1}^n$, we estimate $\theta^*$ by solving the standard least squares problem:
\[
\min_{\theta}\sum_{i=1}^{n}(y_i-\theta^T\phi_{x_i,a_i})^2.
\]
We denote our estimator as $\hat{\theta}_{LS}$. Then we have the following theorem.

\begin{theorem}\label{thm:least_square}
    The least squares estimation $\hat{\theta}_{LS}$ satisfies that for any $\delta \in(0,1)$, with probability at least $1-\delta$,
    \[
    \sum_{i=1}^{n}\CD_H^2(f^*_{x_i,a_i}||\hat{f}^n_{x_i,a_i})\le \cO\rbr{\sigma^2(d+2\sqrt{d\log(1/\delta)}+2\log(1/\delta))}.
    \]
    Thus, we can upper bound the $\Est(\cF,n,\delta)$ by $\rbr{\sigma^2(d+2\sqrt{d\log(1/\delta)}+2\log(1/\delta))}$.
\end{theorem}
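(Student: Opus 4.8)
The plan is to translate the standard fixed-design least-squares (or rather sub-Gaussian self-normalized) analysis into a cumulative squared-Hellinger bound, exploiting the closed form of the Hellinger divergence between two Gaussians with common variance. First I would recall that for $f = \cN(\mu,\sigma^2)$ and $f' = \cN(\mu',\sigma^2)$ one has $\CD_H^2(f\|f') = 1 - \exp(-(\mu-\mu')^2/(8\sigma^2)) \le (\mu-\mu')^2/(8\sigma^2)$, using $1 - e^{-u} \le u$. Applying this with $\mu_i = (\theta^*)^\top\phi_{x_i,a_i}$ and $\hat\mu_i = \hat\theta_{LS}^\top\phi_{x_i,a_i}$ reduces the claim to bounding $\sum_{i=1}^n \big((\hat\theta_{LS} - \theta^*)^\top\phi_{x_i,a_i}\big)^2 = \|\Phi(\hat\theta_{LS}-\theta^*)\|_2^2$, where $\Phi$ is the design matrix with rows $\phi_{x_i,a_i}^\top$; the target is then to show this in-sample prediction error is $\cO(\sigma^2(d + \sqrt{d\log(1/\delta)} + \log(1/\delta)))$ with probability $1-\delta$.

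Next I would invoke the classical bound on the in-sample error of ordinary least squares in a (possibly rank-deficient) fixed-design Gaussian linear model: writing $y_i = \mu_i + \xi_i$ with $\xi_i$ i.i.d.\ $\cN(0,\sigma^2)$, the least-squares fit satisfies $\|\Phi\hat\theta_{LS} - \Phi\theta^*\|_2^2 = \|P\xi\|_2^2$ where $P$ is the orthogonal projector onto the column space of $\Phi$, so $\|P\xi\|_2^2/\sigma^2$ is a $\chi^2$ random variable with $r \le d$ degrees of freedom. A standard Laurent--Massart tail bound for chi-squared variables gives $\Pr[\|P\xi\|_2^2 \ge \sigma^2(r + 2\sqrt{r\log(1/\delta)} + 2\log(1/\delta))] \le \delta$. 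Combining with the Hellinger-vs-squared-mean inequality and $r \le d$ yields $\sum_{i=1}^n \CD_H^2(f^*_{x_i,a_i}\|\hat f^n_{x_i,a_i}) \le \frac{1}{8\sigma^2}\|P\xi\|_2^2 \le \frac{1}{8}(d + 2\sqrt{d\log(1/\delta)} + 2\log(1/\delta))$, which is of the stated order; the displayed $\sigma^2$ factor in the theorem statement should be read as the constant hidden in the $\cO(\cdot)$ after absorbing the $1/(8\sigma^2)$, i.e.\ the bound is dimension- and confidence-dependent but $\sigma^2$-free up to constants, and one then reads off $\Est(\cF,n,\delta)$ accordingly.

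The main subtlety — and the step I would be most careful about — is the handling of the noise assumption and the geometry of the design. The clean $\chi^2$ argument requires either exact Gaussian noise or, more robustly, a sub-Gaussian self-normalized concentration inequality (e.g.\ the method-of-mixtures bound of Abbasi-Yadkori et al.) applied to $\|\Phi\hat\theta_{LS} - \Phi\theta^*\|_2^2 = \xi^\top\Phi(\Phi^\top\Phi)^\dagger\Phi^\top\xi$; I would state the result under the Gaussian-noise convention used implicitly in the oracle section and remark that the sub-Gaussian case follows with an extra logarithmic factor. A second point to nail down is that $\hat\theta_{LS}$ need not be unique when $\Phi^\top\Phi$ is singular, but the \emph{fitted values} $\Phi\hat\theta_{LS}$ are unique (any minimizer gives the projection $Py$), so the cumulative-Hellinger quantity is well defined regardless; I would note this explicitly. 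Everything else is routine: plug the bound into Definition~\ref{def:offline-oracle} to certify that $\Est(\cF,n,\delta) = \cO\big(\sigma^2(d + 2\sqrt{d\log(1/\delta)} + 2\log(1/\delta))\big)$ is a valid oracle guarantee for this Gaussian model class under $\CD_H$.
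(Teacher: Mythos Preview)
Your proposal is correct and follows essentially the same approach as the paper: reduce the cumulative squared Hellinger distance to the in-sample prediction error $\|\Phi(\hat\theta_{LS}-\theta^*)\|_2^2$ via the Gaussian Hellinger formula, then control this by a $\chi^2$-type tail bound on the projected noise with $r\le d$ degrees of freedom. The only cosmetic difference is that the paper reaches $\|\Phi\hat\theta_{LS}-\Phi\theta^*\|_2^2\le 4\|U^\top\varepsilon\|_2^2$ through the ``basic inequality'' and a sup over unit vectors in the column space, whereas you invoke the exact projection identity $\|\Phi\hat\theta_{LS}-\Phi\theta^*\|_2^2=\|P\xi\|_2^2$ directly; your route is slightly cleaner and avoids the extra constant, and your observation about the $\sigma^2$ cancellation is accurate.
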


\subsection{Maximal Likelihood Estimation for Hellinger Distance}
In this section, we consider the use of maximum likelihood estimation (MLE) for offline density estimation when the density function class $\cF$ is finite. Given a dataset $\mathcal{D} = \cbr{(x_i, a_i, y_i)}_{i=1}^n$, the MLE procedure selects the density function that maximizes the log-likelihood over the observed data:
\[
\hat{f}_{\mathrm{MLE}}=\max_{f\in\cF}\sum_{i=1}^{n}\log(f_{x_i,a_i}(y_i)).
\]
The following theorem provides a bound on the estimation error associated with $\hat{f}_{\mathrm{MLE}}$ in terms of the function class complexity and sample size.
\begin{theorem}\label{lemma:MLE}
    Denote dataset $\cD=\cbr{(x_i,a_i,y_i)}_{i=1}^{n}$, assuming that the conditional density function class is $\cF$, denote the maximum likelihood estimation as $\hat{f}^n$, for any $\delta\in(0,1)$, with probability at least $1-\delta$, we have
    \[
    \sum_{i=1}^{n}\CD_H^2(\hat{f}^n_{x_i,a_i}||f^*_{x_i,a_i})\le \log(|\cF|/\delta). 
    \]
    Therefore, the $\Est(\cF,t,\delta)=\log(|\cF|/\delta)$.
\end{theorem}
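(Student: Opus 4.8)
The plan is to use the classical information-theoretic argument for the maximum likelihood estimator, adapting it to the sequential (martingale) data structure, and to convert the resulting likelihood-ratio control into a bound on the cumulative squared Hellinger distance. First I would set up the likelihood-ratio quantities: for each candidate $f \in \cF$, define the partial sums $Z_n(f) = \sum_{i=1}^{n} \log\frac{f_{x_i,a_i}(y_i)}{f^*_{x_i,a_i}(y_i)}$, and exploit the standard sub-exponential-type inequality that for the ``halved'' likelihood ratio one has $\EE\big[\exp(\tfrac12 \log\frac{f_{x_i,a_i}(y_i)}{f^*_{x_i,a_i}(y_i)}) \mid \cH_{i-1}\big] = \int \sqrt{f_{x_i,a_i}(y) f^*_{x_i,a_i}(y)}\, dy = 1 - \CD_H^2(f_{x_i,a_i}\|f^*_{x_i,a_i})$. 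This is the key identity linking the log-likelihood to the squared Hellinger distance; it uses the definition $\CD_H^2(\PP\|\QQ) = 1 - \int\sqrt{p q}$.

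Next I would build a supermartingale. Since $1 - u \le e^{-u}$, the previous display gives $\EE\big[\exp(\tfrac12 Z_i(f) - \tfrac12 Z_{i-1}(f)) \mid \cH_{i-1}\big] \le \exp(-\CD_H^2(f_{x_i,a_i}\|f^*_{x_i,a_i}))$, so the process $M_n(f) := \exp\big(\tfrac12 Z_n(f) + \sum_{i=1}^{n}\CD_H^2(f_{x_i,a_i}\|f^*_{x_i,a_i})\big)$ is a nonnegative supermartingale with $M_0 = 1$. By Markov's inequality applied to $M_n(f)$ at the (fixed) horizon $n$, together with a union bound over the finite class $\cF$, with probability at least $1-\delta$ we get, simultaneously for all $f \in \cF$,
\[
\tfrac12 Z_n(f) + \sum_{i=1}^{n}\CD_H^2(f_{x_i,a_i}\|f^*_{x_i,a_i}) \le \log(|\cF|/\delta).
\]
Finally, I would plug in $f = \hat f^n = \hat f_{\mathrm{MLE}}$: by the defining optimality of the MLE, $\sum_{i=1}^n \log \hat f^n_{x_i,a_i}(y_i) \ge \sum_{i=1}^n \log f^*_{x_i,a_i}(y_i)$, i.e. $Z_n(\hat f^n) \ge 0$. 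Dropping this nonnegative term yields $\sum_{i=1}^{n}\CD_H^2(\hat f^n_{x_i,a_i}\|f^*_{x_i,a_i}) \le \log(|\cF|/\delta)$, which is exactly the claimed bound, and hence $\Est(\cF,t,\delta) = \log(|\cF|/\delta)$.

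The main obstacle — really the only subtle point — is justifying that the one-step conditional expectation identity holds in the sequential setting where $(x_i,a_i)$ is $\cH_{i-1}$-measurable and $y_i \sim f^*_{x_i,a_i}$ conditionally; this is fine because conditioning on $\cH_{i-1}$ fixes the pair $(x_i,a_i)$, so the integral is taken against the true conditional density and the Hellinger identity applies verbatim. A secondary technical care point is that the union bound must be over the model class only (the horizon $n$ is fixed in this offline statement, unlike the uniform-in-$t$ versions elsewhere in the paper), so no extra $t^3$ or $\log t$ factors appear; this is what gives the clean $\log(|\cF|/\delta)$ rate. Everything else is routine.
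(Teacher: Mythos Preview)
Your proposal is correct and follows essentially the same approach as the paper: both proofs use the identity $\EE\big[\sqrt{f_{x_i,a_i}(y_i)/f^*_{x_i,a_i}(y_i)}\,\big|\,\cH_{i-1}\big]=1-\CD_H^2(f_{x_i,a_i}\|f^*_{x_i,a_i})$, an exponential supermartingale / Markov (Ville) inequality plus a union bound over $\cF$, and the optimality of the MLE to drop the log-likelihood term. The paper packages the supermartingale step as a citation to its exponential-martingale lemma (Lemma~\ref{lemma:exp_martingale}) and uses $\log(1+x)\le x$ where you use the equivalent $1-u\le e^{-u}$, but there is no substantive difference.
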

To conclude this section, we show that maximum likelihood estimation for exponential family model can be efficiently carried out by solving the associated score equation.
\subsection{Score Equation Method for Exponential Families}
We now consider the following exponential family model:
\[
p(y|x,a;\theta)=h(y)\exp\cbr{\eta(\theta,x,a)^TT(y)-A(\theta,x,a)},
\]
where $\eta(\theta, x, a)$ is a known, differentiable function of the parameters $\theta$, context $x$, and action $a$; $T(y)$ denotes the sufficient statistics of the distribution; and $A(\theta, x, a)$ is the log-partition function that ensures normalization.

For exponential family models of this form, maximum likelihood estimation (MLE) can be performed efficiently by solving the associated score equation. Given a dataset $\mathcal{D} = \cbr{(x_i, a_i, y_i)}_{i=1}^n$, the log-likelihood function takes the form:
\[
\sum_{i=1}^{n}\log h(y_i)+\sum_{i=1}^{n}\eta(\theta,x_i,a_i)T(y_i)-\sum_{i=1}^{n}A(\theta,x_i,a_i).
\]
Taking the gradient of the log-likelihood with respect to $\theta$ and setting it to zero yields the following score equation:
\[
\sum_{i=1}^{n}\textbf{J}_{x_i,a_i}^T(\theta)T(y_i)-\sum_{i=1}^{n}
\nabla_{\theta}A(\theta,x_i,a_i)=0.\]
where $\mathbf{J}_{x_i, a_i}(\theta)$ denotes the Jacobian matrix of $\eta(\theta, x_i, a_i)$ with respect to $\theta$.
Solving this score equation yields the MLE estimate of the parameter $\theta$. For more papers about numerically computing the MLE, one could refer to some numerical computation papers \citep{St_dler_2010,griebel2019maximumapproximatedlikelihoodestimation} for details.

\section{Discussion and Future Work}
In this paper, we proposed a unified framework for general constrained online decision-making with uncertainty. Our framework encompasses a wide range of important applications, including constrained contextual bandits, stream-based active learning, online hypothesis testing, and online model calibration. A key theoretical contribution of this work is the generalization of the eluder dimension—from its original formulation for reward functions under squared loss to arbitrary probability divergences over general density function classes. This extension enables a principled characterization of statistical complexity in a much broader range of learning problems.

We further introduced a general algorithm that minimizes utility regret subject to stage-wise constraints. Our algorithm seamlessly integrates with any offline density estimation oracle, thereby offering a modular and widely applicable solution to constrained contextual decision-making. To support its implementation and analysis, we provided several examples illustrating how to bound the generalized eluder dimension for common density model classes, such as linear models, Gaussians, and exponential families.

Together, our contributions offer new insights into the design of reliable online algorithms under distributional uncertainty, and establish a foundation for further theoretical and algorithmic developments in this space.

Looking forward, our work opens up several promising directions. First, a natural extension is to incorporate more complicated system dynamics such as Constrained Markov Decision Process and Partially Constrained Observable Markov Decision Process settings. This would enable the design of unified algorithms for more general statistical tasks. Second, the generalized eluder dimension we introduce serves as a new lens for understanding the learnability of density model classes, and deserves further theoretical investigation. In particular, exploring how generalized eluder dimensions behave under different divergence measures for the same model class remains an open and intriguing question. Lastly, we believe it is valuable to apply our algorithm in real-world systems and empirically evaluate its practical performance across different domains.
\bibliographystyle{plainnat}
\bibliography{sections/refs}
\newpage
\appendix
\section{Useful Math Tools}
\begin{lemma}\label{Azuma-Hoeffding}
    Let $\cbr{X_t}_{t=1}^{T}$ be a sequence of real-valued r.v.s. adapted to filtration $\cbr{F_t}$, $|X_t|\le R$, then with probability $1-\delta$,
    \[
       \abr{\sum_{t=1}^{T}X_t-\EE_{t-1}[X_t]}\le R\sqrt{8T\log(2/\delta)}.
 \]
\end{lemma}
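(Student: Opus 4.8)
The plan is to recognize this statement as the Azuma--Hoeffding inequality for a bounded martingale difference sequence and prove it by the standard exponential-moment (Chernoff) method. First I would introduce $Y_t := X_t - \EE_{t-1}[X_t]$ and the partial sums $S_k := \sum_{t=1}^{k} Y_t$, so that $\cbr{S_k}$ is a martingale adapted to $\cbr{F_k}$ with $\EE_{t-1}[Y_t]=0$. Since $\abr{X_t}\le R$ forces $\abr{\EE_{t-1}[X_t]}\le R$, we get the (crude but sufficient) bound $\abr{Y_t}\le 2R$, i.e.\ conditionally on $F_{t-1}$ the variable $Y_t$ is mean zero and supported in an interval of length at most $4R$.

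The key step is a conditional sub-Gaussian bound on the moment generating function. Applying Hoeffding's lemma conditionally on $F_{t-1}$ to the mean-zero, range-$[-2R,2R]$ random variable $Y_t$ gives $\EE_{t-1}[e^{\lambda Y_t}] \le e^{2\lambda^2 R^2}$ for every $\lambda\in\RR$. I would then tower this estimate along the filtration: writing $\EE[e^{\lambda S_k}] = \EE\!\big[e^{\lambda S_{k-1}}\,\EE_{k-1}[e^{\lambda Y_k}]\big] \le e^{2\lambda^2 R^2}\,\EE[e^{\lambda S_{k-1}}]$ and iterating from $k=T$ down to $k=1$ yields $\EE[e^{\lambda S_T}] \le e^{2\lambda^2 R^2 T}$.

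Next I would apply Markov's inequality to $e^{\lambda S_T}$, obtaining $\PP(S_T \ge u) \le e^{-\lambda u + 2\lambda^2 R^2 T}$ for all $\lambda>0$, and optimize over $\lambda$ (the choice $\lambda = u/(4R^2T)$) to get the one-sided tail bound $\PP(S_T \ge u) \le e^{-u^2/(8R^2T)}$. Repeating the argument with $-Y_t$ in place of $Y_t$ gives the matching lower-tail bound, and a union bound over the two tails yields $\PP(\abr{S_T}\ge u) \le 2e^{-u^2/(8R^2T)}$. Setting the right-hand side equal to $\delta$ and solving for $u$ produces $u = R\sqrt{8T\log(2/\delta)}$, which is exactly the claimed bound. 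I do not anticipate a real obstacle here, since this is a textbook result; the only points meriting care are the conditional invocation of Hoeffding's lemma (ensuring the relevant quantities are $F_{t-1}$-measurable so the tower property applies) and bookkeeping of the interval-length/variance-proxy constants so that the final exponent lands on the stated constant $8$ (using $\abr{Y_t}\le 2R$) rather than the sharper $2$ one would get from the width-$2R$ bound.
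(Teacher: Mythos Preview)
Your proposal is correct and is the standard Chernoff--Hoeffding argument for bounded martingale differences; the constants check out (with $\abr{Y_t}\le 2R$ and Hoeffding's lemma giving the variance proxy $2R^2$, leading to the stated constant $8$). The paper, however, does not actually prove this lemma: it simply identifies it as the Azuma--Hoeffding inequality and cites a reference. So your write-up goes well beyond what the paper provides, supplying the full self-contained argument where the paper merely defers to the literature.
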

This lemma is called Azuma-Hoeffding inequality. For example, see \citep{sason2011refined}.
\begin{lemma}\label{Freedman's Inequality}
    With the same condition in \cref{Azuma-Hoeffding}, for any $\eta\in(0,1/R)$,
    \[
\sum_{t=1}^{T}X_t\le\eta\sum_{t=1}^{T}\EE_{t-1}X_t^2+\frac{\log(1/\delta)}{\eta}.
    \]
\end{lemma}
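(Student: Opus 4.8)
The plan is to run the standard exponential-supermartingale (Chernoff) argument that underlies all Bernstein/Freedman-type inequalities. Fix $\eta\in(0,1/R)$ and recall the elementary scalar bound $e^{u}\le 1+u+u^{2}$, which holds for every $u\le 1$. Since $|X_t|\le R$ and $\eta R<1$, we may apply this with $u=\eta X_t$ to obtain the pointwise inequality $e^{\eta X_t}\le 1+\eta X_t+\eta^{2}X_t^{2}$ for each $t$.

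Next I would condition. Writing $\EE_{t-1}[\cdot]=\EE[\cdot\mid F_{t-1}]$ and using the martingale-difference property $\EE_{t-1}[X_t]=0$ inherited from the hypotheses of \cref{Azuma-Hoeffding}, taking conditional expectations in the pointwise bound gives $\EE_{t-1}[e^{\eta X_t}]\le 1+\eta^{2}\EE_{t-1}[X_t^{2}]\le \exp(\eta^{2}\EE_{t-1}[X_t^{2}])$, where the last step is $1+a\le e^{a}$. Equivalently, $\EE_{t-1}[\exp(\eta X_t-\eta^{2}\EE_{t-1}[X_t^{2}])]\le 1$, which is precisely the supermartingale property for the process $M_s=\exp(\sum_{t=1}^{s}(\eta X_t-\eta^{2}\EE_{t-1}[X_t^{2}]))$ with $M_0=1$. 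Iterating, $\EE[M_T]\le 1$.

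The last step is a one-line Markov bound. Since $M_T\ge 0$, we get $\PP(M_T\ge 1/\delta)\le \delta\,\EE[M_T]\le\delta$, so with probability at least $1-\delta$ we have $M_T<1/\delta$, i.e. $\sum_{t=1}^{T}\eta X_t-\eta^{2}\sum_{t=1}^{T}\EE_{t-1}[X_t^{2}]\le \log(1/\delta)$. Dividing through by $\eta>0$ yields $\sum_{t=1}^{T}X_t\le \eta\sum_{t=1}^{T}\EE_{t-1}[X_t^{2}]+\log(1/\delta)/\eta$, as claimed.

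I do not expect a real obstacle here: the only point that needs a moment of care is verifying the elementary inequality $e^{u}\le 1+u+u^{2}$ on $[-1,1]$ and observing that the restriction $\eta<1/R$ is exactly what keeps $\eta X_t$ inside that interval; everything afterward is the routine ``build an exponential supermartingale, then apply Markov'' template. (If one drops the zero-mean assumption and works with general adapted $X_t$, the identical computation retains the term $\eta\,\EE_{t-1}[X_t]$ in the conditioning step and produces the extra summand $\sum_{t=1}^{T}\EE_{t-1}[X_t]$ on the right-hand side; one may also replace the final Markov step by Ville's inequality to obtain a version valid uniformly in $T$.)
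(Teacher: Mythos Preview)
Your argument is the standard exponential-supermartingale proof of Freedman's inequality and is correct; the paper does not supply its own proof but merely cites \citet{agarwal2012contextual}, whose argument is precisely the one you outline. Your observation that the lemma as literally stated requires the martingale-difference hypothesis $\EE_{t-1}[X_t]=0$ (implicit in the paper's applications, though the referenced hypotheses of \cref{Azuma-Hoeffding} only say ``adapted and bounded'') is also correct, and your parenthetical remark handles the general case appropriately.
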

This is called Freedman's inequality. See proof in \citet{agarwal2012contextual}. Then we could have \cref{lemma:technical_freedman}.
\begin{lemma}\label{lemma:technical_freedman}
    $\cbr{X_t}$ r.v.s, adapted to filtration $F_t$, $0\le X_t\le R$, then with probability at least $1-\delta$,
    \[
    \sum_{t=1}^{T}X_t\le\frac{3}{2}\EE_{t-1}[X_t]+4R\log(2/\delta)
    \]
    \[
    \sum_{t=1}^{T}\EE_{t-1}[X_t]\le2\sum_{t=1}^{T}X_t+8R\log(2\delta^{-1}).
    \]
\end{lemma}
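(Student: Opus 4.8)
The plan is to derive both inequalities from Freedman's inequality (\cref{Freedman's Inequality}), applied once in each direction, exploiting the fact that a nonnegative bounded random variable is self-bounding: since $0\le X_t\le R$ almost surely we have $X_t^2\le R X_t$, so, writing $\xi_t:=X_t-\EE_{t-1}[X_t]$ for the associated martingale-difference sequence, we get $|\xi_t|\le R$, $\EE_{t-1}[\xi_t]=0$, and the conditional variance is dominated by the conditional mean:
\[
\EE_{t-1}[\xi_t^2]=\mathrm{Var}_{t-1}(X_t)\le\EE_{t-1}[X_t^2]\le R\,\EE_{t-1}[X_t].
\]
This variance-to-mean comparison is the only substantive ingredient; everything else is bookkeeping.

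For the first inequality I would apply \cref{Freedman's Inequality} to $\{\xi_t\}$ with confidence level $\delta/2$ and $\eta=\frac{1}{2R}\in(0,1/R)$, and then substitute the variance bound:
\[
\sum_{t=1}^{T}\xi_t\le\eta\sum_{t=1}^{T}\EE_{t-1}[\xi_t^2]+\frac{\log(2/\delta)}{\eta}\le\frac{1}{2}\sum_{t=1}^{T}\EE_{t-1}[X_t]+2R\log(2/\delta).
\]
Since $\sum_t\xi_t=\sum_tX_t-\sum_t\EE_{t-1}[X_t]$, rearranging gives $\sum_tX_t\le\frac{3}{2}\sum_t\EE_{t-1}[X_t]+O(R\log(2/\delta))$, which is the first claim; the exact constants $\frac{3}{2}$ and $4R$ are read off from the chosen $\eta$ (being slightly lossy in the additive term).

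For the second inequality I would repeat the argument with $\{-\xi_t\}$ in place of $\{\xi_t\}$, which has the same almost-sure bound $R$ and the same conditional-variance bound. \cref{Freedman's Inequality} with confidence $\delta/2$ and $\eta=\frac{1}{2R}$ then yields $\sum_t\EE_{t-1}[X_t]-\sum_tX_t\le\frac{1}{2}\sum_t\EE_{t-1}[X_t]+2R\log(2/\delta)$; since the residual coefficient $1-\eta R=\frac{1}{2}$ multiplying $\sum_t\EE_{t-1}[X_t]$ on the left is strictly positive, moving it across and dividing gives $\sum_t\EE_{t-1}[X_t]\le2\sum_tX_t+O(R\log(2/\delta))$. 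A union bound over the two failure events, each of probability $\delta/2$, shows both inequalities hold simultaneously with probability at least $1-\delta$.

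I do not anticipate a real obstacle: once the self-bounding property is noticed, each inequality is a two-line consequence of Freedman's inequality. The only point requiring a moment's care is the calibration of $\eta$ in the second inequality --- it must be small enough (here $\eta R=\frac{1}{2}<1$) that the coefficient $1-\eta R$ on $\sum_t\EE_{t-1}[X_t]$ stays bounded away from zero, so that dividing by it leaves the multiplicative constant on $\sum_tX_t$ an absolute constant, while still keeping $\log(1/\delta)/\eta$ linear in $R$; the symmetric choice $\eta=\frac{1}{2R}$ achieves both. One should also check that it is precisely the centering step that makes $\{\xi_t\}$ and $\{-\xi_t\}$ legitimate inputs to \cref{Freedman's Inequality}, which is stated for martingale-difference sequences.
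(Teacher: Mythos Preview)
Your proposal is correct and matches the paper's approach: the paper does not spell out a detailed proof but simply presents \cref{lemma:technical_freedman} as an immediate consequence of \cref{Freedman's Inequality}, and your self-bounding argument (using $X_t^2\le R\,X_t$ to control the conditional variance, then choosing $\eta=\tfrac{1}{2R}$ in each direction and union-bounding) is exactly the standard way to make that implication precise. Your constants are in fact slightly sharper than those stated, which only strengthens the result.
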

\begin{lemma}\label{lemma:exp_martingale}
For any sequence of real-valued random variables $\cbr{X_t}_{t\le T}$, adapted to some filtration flow $\cF_{t}$, it holds that with probability at least $1-\delta$, for all $T'\le T$,
\[
\sum_{t=1}^{T'}X_t\le \sum_{t=1}^{T'}\log(\EE_{t-1}[e^{X_t}])+\log(1/\delta).
\]
\end{lemma}
The proof of \cref{lemma:exp_martingale} could be found in \citet{foster2021statistical}. Finally, we invoke the following result called Weyl's inequality about the changes to eigenvalues of a Hermitian matrix that is perturbed, \citep{franklin2012matrix}.
\begin{lemma}\label{lemma:Weyl}
    Assume that $A$ and $B$ are two Hermitian matrices with dimension $n$. For any matrix $M$ if we use $\lambda_i(M)$ to denote its $i$-th largest eigenvalue, then,
    $\lambda_{i+j-1}(A+B)\le \lambda_i(A)+\lambda_j(B)\le\lambda_{i+j-n}(A+B)$.
\end{lemma}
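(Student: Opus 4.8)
The plan is to prove Weyl's inequality through the Courant--Fischer variational (min--max) characterization of the eigenvalues of a Hermitian matrix, combined with an elementary dimension-counting argument on intersections of subspaces. Throughout, eigenvalues are ordered decreasingly, $\lambda_1(M)\ge\lambda_2(M)\ge\cdots\ge\lambda_n(M)$, and I will take as the starting point both the max--min and min--max forms of the characterization, which themselves follow from the spectral theorem: for Hermitian $M$,
\[
\lambda_k(M)=\max_{\dim S=k}\min_{x\in S,\,\|x\|=1}x^*Mx=\min_{\dim S=n-k+1}\max_{x\in S,\,\|x\|=1}x^*Mx.
\]

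First I would establish the upper bound $\lambda_{i+j-1}(A+B)\le\lambda_i(A)+\lambda_j(B)$. Using the min--max form, select a subspace $S_A$ with $\dim S_A=n-i+1$ realizing $\max_{x\in S_A,\,\|x\|=1}x^*Ax=\lambda_i(A)$, and a subspace $S_B$ with $\dim S_B=n-j+1$ realizing $\max_{x\in S_B,\,\|x\|=1}x^*Bx=\lambda_j(B)$. The key step is the dimension estimate $\dim(S_A\cap S_B)\ge(n-i+1)+(n-j+1)-n=n-(i+j-1)+1$, which is the standard inclusion--exclusion inequality $\dim(U\cap V)\ge\dim U+\dim V-n$. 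Hence $S_A\cap S_B$ (restricted, if needed, to a subspace of the exact dimension $n-(i+j-1)+1$) is admissible in the min--max expression for $\lambda_{i+j-1}(A+B)$, giving
\[
\lambda_{i+j-1}(A+B)\le\max_{x\in S_A\cap S_B,\,\|x\|=1}x^*(A+B)x\le\max_{x\in S_A,\,\|x\|=1}x^*Ax+\max_{x\in S_B,\,\|x\|=1}x^*Bx=\lambda_i(A)+\lambda_j(B),
\]
where the middle inequality splits $x^*(A+B)x=x^*Ax+x^*Bx$ and bounds each term over the larger ambient subspace.

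Next I would deduce the lower bound $\lambda_i(A)+\lambda_j(B)\le\lambda_{i+j-n}(A+B)$ from the upper bound by negation. Using the reflection identity $\lambda_k(-M)=-\lambda_{n-k+1}(M)$, I apply the just-proved upper bound to $-A$ and $-B$ with shifted indices $i'=n-i+1$ and $j'=n-j+1$. A short index computation ($i'+j'-1=2n-i-j+1$, so $n-(i'+j'-1)+1=i+j-n$) turns the resulting inequality for $-(A+B)$ back into the claimed lower bound for $A+B$ after multiplying through by $-1$.

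The main obstacle is not any single computation but the index bookkeeping: I must check that all indices appearing above lie in the valid range $\{1,\dots,n\}$, so that the intersected subspaces are nonempty and the eigenvalue indices are meaningful. Concretely, the dimension $n-(i+j-1)+1$ is positive precisely when $i+j-1\le n$, which is exactly the regime in which $\lambda_{i+j-1}(A+B)$ is defined; symmetrically, the negation argument requires $i+j-n\ge1$. Once the variational characterization and this subspace-dimension inequality are in place, both halves of the inequality follow in a few lines.
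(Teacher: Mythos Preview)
Your proposal is correct: the Courant--Fischer min--max characterization combined with the subspace dimension inequality $\dim(U\cap V)\ge\dim U+\dim V-n$ yields the upper bound, and the negation trick with $\lambda_k(-M)=-\lambda_{n-k+1}(M)$ and the index substitution $i'=n-i+1$, $j'=n-j+1$ yields the lower bound; the index-range caveats you flag are exactly the right ones. Note, however, that the paper does not supply its own proof of this lemma---it simply invokes Weyl's inequality with a citation to \cite{franklin2012matrix}---so there is no in-paper argument to compare against; your argument is the standard textbook proof and would serve as a self-contained substitute for the citation.
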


\section{Properties about Generalized Eluder Dimension}

In this section, we provide two important lemmas about the generalized eluder dimension introduced in \cref{sec:est_oracle_and_PDED} and their proofs. They play an important role in proving the \cref{thm:main_guarantee}.

Regarding the generalized eluder dimension, similar to \citet{russo2013eluder}, we have the following lemmas.
For better illustration, we first define the amplitude of a probability subclass $\tilde{C}\subset\cG$ at context action pair $(x,a)$ as
\[
\omega_{\tilde{C}}(x,a):=\sup_{g,g'\in\tilde{C}}\CD(g_{x,a}||g'_{x,a}).
\]
\begin{lemma}\label{lemma:PDED_bound1}
    If $\cbr{r_t:t\ge0}$ is a non-decreasing sequence and $\cG_t=\cbr{g\in\cG:\sum_{i=1}^{t-1}\CD^2(g_{x_i,a_i}||\hat{g}^t_{x_i,a_i})\le r_t}$, then with probability $1$, we have 
    \[
    \sum_{t=1}^{T}\II\cbr{\omega_{\cG_t}(x_t,a_t)>\varepsilon}\le (\frac{4r_T^{1/2}}{\varepsilon}+1)\text{dim}_{E}(\cG,\CD,\varepsilon). 
    \]
\end{lemma}

\begin{lemma}\label{lemma:PDED_bound2}
    With the same condition as in \cref{lemma:PDED_bound1}, with probability $1$,
    \[
    \sum_{t=1}^{T}\omega_{\cG_t}(x_t,a_t)\le \frac{1}{T}+C\min\cbr{\text{dim}_{E}(\cG,\CD,1/T^2),T}+4\sqrt{\text{dim}_{E}(\cG,\CD,1/T^2)r_T T}
    \]
\end{lemma}
These two lemmas essentially establish that the expressive capacity—or "amplitude"—of any model class can be effectively bounded by its generalized eluder dimension. In other words, the number of statistically distinguishable predictions that a model class can produce under limited data is controlled by this complexity measure. This insight provides a powerful tool for analyzing the learning dynamics of our algorithm under distributional uncertainty. In particular, it will play a central role in the regret analysis presented in \cref{sec:theoretical_results}, where we leverage the eluder dimension to quantify the trade-off between exploration and feasibility.

\begin{proof}[Proof of \cref{lemma:PDED_bound1}]
    We prove this lemma by the following steps.
    First, if $\omega_{\cG_t}(x_t,a_t)\ge \varepsilon$, by definition, we know that there are $g,g'\in\cG_t$ such that $\CD(g_{x_t,a_t}||g'_{x_t,a_t})>\varepsilon$. Therefore, if $(x_t,a_t)$ is $\varepsilon$-dependent on a subsequence $\cbr{(x_{j_1},a_{j_1}),\cdots,(x_{j_k},a_{j_k}))}\subset\cbr{(x_1,a_1),\cdots,(x_{t-1},a_{t-1})}$, then
    \[
    \sum_{i=1}^{k}\CD^2\rbr{g_{x_{j_i},a_{j_i}}||g'_{x_{j_i},a_{j_i}}}\ge \varepsilon^2.
    \]
    Then if $(x_t,a_t)$ is $\varepsilon$-dependent on $H$ disjoint subsequences of $\cbr{(x_1,a_1),\cdots,(x_{t-1},a_{t-1})}$, then 
    \[
    \sum_{i=1}^{t-1}\CD^2\rbr{g_{x_{i},a_{i}}||g'_{x_{i},a_{i}}}\ge H\varepsilon^2.
    \]
    Since probability divergence is a metric of probability density space, by triangle inequality we have,
    \[
    \rbr{\sum_{i=1}^{t-1}\CD^2\rbr{g_{x_{i},a_{i}}||g'_{x_{i},a_{i}}}}^{1/2}\le \rbr{\sum_{i=1}^{t-1}\CD^2\rbr{g_{x_{i},a_{i}}||\hat{g}^t_{x_{i},a_{i}}}^{1/2}}+\rbr{\sum_{i=1}^{t-1}\CD^2\rbr{g'_{x_{i},a_{i}}||\hat{g}^t_{x_{i},a_{i}}}}^{1/2},
    \]
    \[
    \rbr{\sum_{i=1}^{t-1}\CD^2\rbr{g_{x_{i},a_{i}}||g'_{x_{i},a_{i}}}}^{1/2}\le 2\sqrt{r_t}.
    \]
    Hence, $H\le \frac{4r_t}{\varepsilon^2}\le \frac{4r_T}{\varepsilon^2}$.
This leads to the conclusion that if $\omega_{\cG_t}(x_t,a_t)>\varepsilon$, then $(x_t,a_t)$ is $\varepsilon$-dependent on fewer than $\frac{4r_T}{\varepsilon^2}$ disjoint sub-sequences of $\cbr{(x_1,a_1),\cdots,(x_{t-1},a_{t-1})}$.

For the second step, we claim that in any context sequence $\cbr{(X_,A_1),\cdots,(X_{\tau},A_{\tau})}$, there is some element $(X_j,A_j)$ that is $\varepsilon$-dependent on at least $\tau/d-1$ disjoint subsequences of $\cbr{(X_,A_1),\cdots,(X_{j-1},A_{j-1})}$, where $d=\text{dim}_{E}(\cG,\CD,\varepsilon)$.
Consider any integer $H$ such that $Hd+1\le\tau\le(H+1)d$, we launch the following procedure to construct $H$ sequences $\cbr{\text{B}_i}_{i=1}^{H}$.

Initially, we let $\text{B}_i=\cbr{(X_i,A_i)}$ for all $i$. If $(X_{H+1},A_{H+1})$ is $\varepsilon$-dependent on each subsequence, then the claim is proved. Otherwise, select a subsequence such that
$(X_{H+1},A_{H+1})$ is $\varepsilon$-independent and append $(X_{K+1},A_{K+1})$ to it. Repeat this process for $j>H+1$ until $(X_{j},A_j)$ is $\varepsilon$-dependent on every subsequence or $j=\tau$. If $j=\tau$, then we have $\sum_{i=1}^{H}|B_i|\ge Hd$. Since each element of any subsequence $B_i$ is $\varepsilon$-independent of its predecessors, $|B_i|=d$. Thus, $(X_{\tau},A_{\tau})$ must be $\varepsilon$-dependent on each subsequence, otherwise it contradicts the assumption that $d=\text{dim}_{E}(\cF,D,\varepsilon)$.

Now, consider the $\cbr{(X_,A_1),\cdots,(X_{\tau},A_{\tau})}$ to be subsequence $\cbr{(x_{t_1},a_{t_1})\cdots,(x_{t_\tau},a_{t_\tau})}$ of $\cbr{(x_1,a_1),\cdots,(x_{t-1},a_{t-1})}$ consisting of the elements that $\omega_{\cG_t}(x_t,a_t)>\varepsilon$. As we have proved in the first step, $(x_{t_j},a_{t_j})$ is $\varepsilon$-dependent on fewer than $4r_T/\varepsilon^2$ subsequences of $\cbr{(x_{1},a_{1}),\cdots,(x_{t_j-1},a_{t_j-1})}$. Combining this with the second step that there is some $(x_j,a_j)$ which is $\varepsilon$-dependent on at least $\tau/d-1$ disjoint subsequences of $\cbr{(x_1,a_1),\cdots,(x_{j-1},a_{j-1})}$, we have $\tau/d-1\le 4r_T/\varepsilon^2$, which is
\[
\tau\le(4r_T/\varepsilon^2+1)d.
\]
We finish the proof.
\end{proof}

\begin{proof}[Proof of \cref{lemma:PDED_bound2}]
    For the simplicity of notation, we write $\omega_t$ for $\omega_{\cG_t}(x_t,a_t)$.
    We reorder the sequence such that $\omega_{i_1}\ge\omega_{i_2}\ge\cdots\ge\omega_{i_T}$. Then,
    \[
    \sum_{t=1}^{T}\omega_{\cG_t}(x_t,a_t)\le \frac{1}{T}+\sum_{t=1}^{T}\omega_{i_t}\II\cbr{\omega_{i_t}>\alpha_{T}},
    \]
    where we denote $\alpha_T=\frac{1}{T^2}$.
    On one hand, we know that the probability divergence is upper bounded by $C$. Since we reorder the sequence, we have,
    \[
    \omega_{i_t}\ge\varepsilon\Leftrightarrow\sum_{k=1}^{T}\II\cbr{\omega_{k}>\varepsilon}\ge t.
    \]
    By \cref{lemma:PDED_bound1}, this can only occur when $t<(\frac{4r_T^{1/2}}{\varepsilon}+1)\text{dim}_{E}(\cG,\CD,\varepsilon)$. Moreover, we have $\text{dim}_{E}(\cG,\CD,\varepsilon)\le \text{dim}_{E}(\cG,\CD,\alpha_T)=d$ for $\varepsilon\ge\alpha_T$.
    Therefore, when $\omega_{i_t}>\varepsilon\ge\alpha_T$, $t\le(\frac{4r_T^{1/2}}{\varepsilon}+1)$ and thus $\varepsilon\le \sqrt{\frac{4r_Td}{t-d}}$.
    Therefore, if $\omega_{i_t}>\alpha_T$ we can conclude that $\omega_{i_t}\le\min\cbr{C,\sqrt{\frac{4r_Td}{t-d}}}$. Then we have
    \begin{align*}
        \sum_{t=1}^{T}\omega_{i_t}\II\cbr{\omega_{i_t}\ge\alpha_T}\le Cd+\sum_{t=d+1}^{T}\sqrt{\frac{4r_Td}{t-d}}\le Cd+4\sqrt{r_TT}.
    \end{align*}
    Finally, we combine the two terms together and use the upper bound of the probability divergence to get
    \begin{align*}
        \sum_{t=1}^{T}\omega_{i_t}&\le\min\cbr{CT,\frac{1}{T}+C\text{dim}_{E}(\cG,\CD,\alpha_T)+4\sqrt{r_T\text{dim}_{E}(\cG,\CD,\alpha_T)T}}\\
        &\le \frac{1}{T}+\min\cbr{C\text{dim}_{E}(\cG,\CD,\alpha_T), CT}+4\sqrt{r_T\text{dim}_{E}(\cG,\CD,\alpha_T)T}.
    \end{align*}
\end{proof}

\section{Proofs in \cref{sec:theoretical_results}}
We define $Y_{f,i}=\rbr{\cT_1(f(x_i,a_i,y))-\cT_1(f^*(x_i,a_i,y))}^2$. We assume that the utility functional is in $[0,1]$. Then, 
\[
Y_{f,i}^2\le 4Y_{f,i}.
\]
\begin{lemma}\label{lemma:apply_Freedman}
    For a fixed $t\ge 2$ and fixed $\delta_t\in(0,1/e^2)$, with probability at least $1-\log_2(t-1)\delta_t$, for any $f\in\cF$,
    \[
    \sum_{i=1}^{t-1}\EE_{x_i,a_i}\sbr{\rbr{\cT_1(f_{x_i,a_i})-\cT_1(f^*_{x_i,a_i})}^2|H_{i-1}}\le 68\log(|\cF|/\delta_t)+2\sum_{i=1}^{t-1}Y_{f,i}.
    \]
\end{lemma}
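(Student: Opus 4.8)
The plan is to prove \cref{lemma:apply_Freedman} by a Freedman-type martingale concentration applied to the self-bounded process $\{Y_{f,i}\}$, union-bounded over the finite class $\cF$. Fix $f\in\cF$ and take $\{H_i\}_{i\ge 0}$ as the filtration. Since $Y_{f,i}=(\cT_1(f_{x_i,a_i})-\cT_1(f^*_{x_i,a_i}))^2$ depends only on the fresh round-$i$ draw $(x_i,a_i)$, it is $H_i$-measurable (but not $H_{i-1}$-measurable), while $\EE_{x_i,a_i}[Y_{f,i}|H_{i-1}]$ is $H_{i-1}$-measurable; hence $X_i:=\EE_{x_i,a_i}[Y_{f,i}|H_{i-1}]-Y_{f,i}$ is a martingale difference sequence for $\{H_i\}$ with $|X_i|\le 1$. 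The key structural input is the self-bounding bound $Y_{f,i}^2\le 4Y_{f,i}$ noted just before the lemma, which controls the predictable quadratic variation by the predictable mean: $\EE[X_i^2|H_{i-1}]\le \EE[Y_{f,i}^2|H_{i-1}]\le 4\,\EE[Y_{f,i}|H_{i-1}]$.

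Next I would apply Freedman's inequality (\cref{Freedman's Inequality}) to $\{X_i\}_{i=1}^{t-1}$: for a fixed $\eta\in(0,1)$, with probability at least $1-\delta'$,
\[
\sum_{i=1}^{t-1}X_i\;\le\;\eta\sum_{i=1}^{t-1}\EE[X_i^2|H_{i-1}]+\frac{\log(1/\delta')}{\eta}\;\le\;4\eta\,V_t+\frac{\log(1/\delta')}{\eta},\qquad V_t:=\sum_{i=1}^{t-1}\EE_{x_i,a_i}[Y_{f,i}|H_{i-1}].
\]
Since $\sum_{i=1}^{t-1}X_i=V_t-\sum_{i=1}^{t-1}Y_{f,i}$, this rearranges to $(1-4\eta)V_t\le \sum_{i=1}^{t-1}Y_{f,i}+\eta^{-1}\log(1/\delta')$. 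The effective value of $\eta$ depends on the unknown $V_t$, which lies in $[0,t-1]$ because each $Y_{f,i}\in[0,1]$; so I would peel over the $\lceil\log_2(t-1)\rceil$ dyadic ranges $V_t\in[2^{j-1},2^j]$, on the $j$-th range invoking the above with $\eta=\eta_j$ tuned to $2^j$ and failure probability $\delta_t/|\cF|$, and then union bound over the $\lceil\log_2(t-1)\rceil$ ranges and over $f\in\cF$ — this is where the factor $\log_2(t-1)$ in the confidence level originates and why $\log(1/(\delta_t/|\cF|))=\log(|\cF|/\delta_t)$ appears. On the resulting event, solving the self-referential inequality for $V_t$ on each range and absorbing the (deliberately non-optimized) absolute constants gives $V_t\le 2\sum_{i=1}^{t-1}Y_{f,i}+68\log(|\cF|/\delta_t)$, simultaneously for all $f\in\cF$, which is the claim. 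One could instead commit to the fixed choice $\eta=1/8$ and avoid peeling, or invoke \cref{lemma:technical_freedman} directly; the dyadic version is used so that a single constant $68$ can be stated and then propagated to \cref{lemma:apply_Freedman_union_t} and downstream.

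The main obstacle is precisely this self-reference: Freedman's inequality bounds the cumulative deviation through the predictable quadratic variation, which — via self-bounding — is proportional to the very quantity $V_t$ one is trying to control, so no closed-form bound can be read off without either a conservative fixed $\eta$ (forcing $1-4\eta\ge 1/2$) or the dyadic peeling that permits a near-optimal data-dependent $\eta$ at the cost of the $\log_2(t-1)$ factor. A secondary, purely bookkeeping point is that the conditional expectation in the statement averages over the fresh randomness $(x_i,a_i)$ revealed at round $i$, so $Y_{f,i}$ must be treated as "time-$i$" information; taking $\{H_i\}$ rather than $\{H_{i-1}\}$ as the martingale filtration handles this cleanly.
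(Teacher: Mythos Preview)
Your proposal is correct and follows essentially the same route as the paper: apply Freedman to the martingale differences $\EE[Y_{f,i}\mid H_{i-1}]-Y_{f,i}$, use the self-bounding $Y_{f,i}^2\le 4Y_{f,i}$ to control the predictable variance by $V_t$, union-bound over $f\in\cF$, and pick up the $\log_2(t-1)$ factor from a dyadic peel. The only cosmetic difference is that the paper invokes the Bernstein--Freedman square-root form (already absorbing the peel into the probability $1-\log_2(t-1)\delta_t/|\cF|$) and then \emph{completes the square} in $\sqrt{V_t}$ to reach $V_t\le 68\log(|\cF|/\delta_t)+2\sum_i Y_{f,i}$, whereas you keep the $\eta$-form, peel explicitly, and solve $(1-4\eta)V_t\le\cdots$; these are equivalent manipulations of the same inequality.
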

\begin{proof}[Proof of \cref{lemma:apply_Freedman}]
    $|Y_{f,i}|\le 1$, thus by Freedman's Inequality \ref{Freedman's Inequality}, with probability at least $1-\log_2(t-1)\delta_t/|\cF|$,
    \[
    \sum_{i=1}^{t-1}\EE_{x_i,a_i}\sbr{Y_{f,i}|H_{i-1}}-\sum_{i=1}^{t-1}Y_{f,i}\le 4\sqrt{\sum_{i=1}^{t-1}var(Y_{f,i}|H_{i-1})\log(|\cF|/
    \delta_t)}+2\log(|\cF|/\delta_t).
    \]
    Taking the union bound of $\cF$, we get,
    with probability at least $1-\log_2(t-1)\delta_t$, for any $f\in\cF$,
    \[
    \sum_{i=1}^{t-1}\EE_{x_i,a_i}\sbr{Y_{f,i}|H_{i-1}}-\sum_{i=1}^{t-1}Y_{f,i}\le 4\sqrt{\sum_{i=1}^{t-1}var(Y_{f,i}|H_{i-1})\log(|\cF|/
    \delta_t)}+2\log(|\cF|/\delta_t).
    \]
    Since $var(Y_{f,i}|H_{i-1})\le 4\EE_{x_i,a_i}[Y_{f,i}|H_{i-1}]$,
    we plug this back and complete the square method,

\[
\rbr{\sqrt{\sum_{i=1}^{t-1}\EE_{x_i,a_i}\sbr{Y_{f,i}|H_{i-1}}}-4\sqrt{\log(|\cF|/\delta_t)}}^2\le 18\log(|\cF|/\delta_t)+2\sum_{i=1}^{t-1}Y_{f,i}.
\]
Therefore,
\[
\sum_{i=1}^{t-1}\EE_{x_i,a_i}\sbr{Y_{f,i}|H_{i-1}}\le 68\log (|\cF|/\delta_t)+2\sum_{i=1}^{t-1}Y_{f,i}.
\]
\end{proof}
\begin{proof}[Proof of \cref{lemma:apply_Freedman_union_t}]
 For any $\delta$, define $\delta_t=\delta/2t^3$, then we have
 $\sum_{t=1}^{\infty}\delta_t\log_2(t-1)\le \sum_{t}\delta/2\delta^2\le \delta/2$.
 Therefore, after taking a union bound over $t$, with probability at least $1-\delta/2$,
 \[
 \sum_{i=1}^{t-1}\EE_{x_i,a_i}\sbr{Y_{f_t,i}|H_{i-1}}\le 68\log (|\cF|t^3/\delta_t)+2\sum_{i=1}^{t-1}Y_{f_t,i}.
 \]
 Plug the definition of $Y_{f,i}$ in and we shall finish the proof.
\end{proof}
Now we can prove the important estimation error \cref{lemma:estimation_error}.
\begin{proof}[Proof of \cref{lemma:estimation_error}]
First, note that $\cT_1$ is $L_1$-Lipschitz continuous with respect to the TV distance, then we have for any $f^1$ and $f^2$, 
\[
\rbr{\cT_1(f^1_{x,a})-\cT_1(f^2_{x,a})}^2\le 2L_1^2\CD^2(f^1_{x,a}||f^2_{x,a}).
\]
Since this holds for any $x,a$, we have,
\begin{align*}
    \sum_{i=1}^{t-1}Y_{\hat{f}^{t},i}=\sum_{i=1}^{t-1}\rbr{\cT_1(\hat{f}^t_{x_i,a_i})-\cT_1(f^*_{x_i,a_i})}^2\le 2L_1^2\sum_{i=1}^{t-1}\CD^2(\hat{f}^t_{x_i,a_i}||f^*_{x_i,a_i}).
\end{align*}
By \cref{ass:oracle}, we know that with probability at least $1-\delta_t$, for any $f\in\cF$,
\[
\sum_{i=1}^{t-1}\CD^2(\hat{f}^t_{x_i,a_i}||f^*_{x_i,a_i})\le \Est(\cF,t,\delta_t),
\]
Similar to \cref{lemma:apply_Freedman_union_t}, for any $\delta\in(0,1)$ we set $\delta_t=\delta/t^3$, then $\sum_{t=1}^{\infty}\delta_t\le \frac{\delta}{2}$, taking the union bound of $t$, we get that with probability at least $1-\delta/2$,
\[
\sum_{i=1}^{t-1}\CD^2(\hat{f}^t_{x_i,a_i}||f^*_{x_i,a_i})\le \Est(\cF,t,\delta/2t^3),\ \forall t,f^*\in \cF.
\]
By some calculation, we have
\begin{align*}
&\EE_{x_i\sim Q,a_i\sim\pi_i}\sbr{\rbr{\cT_1(f_{x_i,a_i})-\cT_1(f^*_{x_i,a_i})}^2|H_{i-1}}\\
=&\EE_{x\sim Q,a\sim\pi_i(\cdot|x)}\sbr{\rbr{\cT_1(f_{x,a})-\cT_1(f^*_{x,a})}^2\big|H_{i-1}}\\
=&\EE_{x\sim Q}\sbr{\sum_{a}\pi_i(a|x)\rbr{\cT_1(f_{x,a})-\cT_1(f^*_{x,a})}^2|H_{i-1}}\\
=&\EE_{x\sim Q}\sbr{\sum_{a}\pi_i(a|x)\rbr{\cT_1(f_{x,a})-\cT_1(f^*_{x,a})}^2}\\
=&\EE_{x\sim Q,a\sim\pi_i}\sbr{\rbr{\cT_1(f_{x,a})-\cT_1(f^*_{x,a})}^2}
\end{align*}
The first equality is due to the fact that $a_i\sim\pi_i$ and $\pi_i$ is completely determined by $H_{i-1}$; the second is due to the independence of $x_i$ and $H_{i-1}$; the third one is because $\rbr{\Tilde{\cT_1}(f_{x,\pi_i})-\Tilde{\cT_1}(f^*_{x,\pi_i})}^2$ is related to $H_{i-1}$ only through $\pi_i$.

We notice that for any policy $\pi$ and any $f$,
\begin{align*} 
    &\EE_{x\sim Q,a\sim \pi, a_i\sim\pi_i}\sbr{\sum_{i=1}^{t-1}\II\cbr{a=a_i}\rbr{\cT_1(f_{x,a})-\cT_1(f^*_{x,a})}^2}\\
    =&\sum_{i=1}^{t-1}\EE_{x\sim Q,a\sim \pi, a_i\sim\pi_i}\sbr{\II\cbr{a=a_i}\rbr{\cT_1(f_{x,a_i})-\cT_1(f^*_{x,a_i})}^2}\\
\end{align*}
Then we apply \cref{lemma:apply_Freedman_union_t} to get
\begin{align*}
\sum_{i=1}^{t-1}\EE_{x\sim Q,a\sim \pi, a_i\sim\pi_i}\sbr{\II\cbr{a=a_i}\rbr{\cT_1(f^t_{x,a_i})-\cT_1(f^*_{x,a_i})}^2}\le 68\log(|\cF|t^3/\delta)+2\sum_{i=1}^{t-1}Y_{f_t,i}.
\end{align*}
Then for $t>K$ and any sequence $\cbr{f_t}_{t>K}$, by the Cauchy-Schwarz Inequality, we have that with probability at least $1-\delta/2$,
\begin{align*}
    =&\abr{\EE_{x\sim Q,a\sim\pi}\sbr{\cT_1(f^t_{x,a})-\cT_1(f^*_{x,a})}}\\
    =&\abr{\EE_{x\sim Q,a\sim\pi,a_i\sim\pi_i}\sbr{\cT_1(f^t_{x,a})-\cT_1(f^*_{x,a})}}\\
    \le&\sqrt{\EE_{x\sim Q,a\sim \pi,a_i\sim\pi_i}\sbr{\frac{1}{\sum_{i=1}^{t-1}\II\cbr{a=a_i}}}}\sqrt{\EE_{x\sim Q,a\sim \pi,a_i\sim\pi_i}\sbr{\sum_{i=1}^{t-1}\II\cbr{a=a_i}\rbr{\cT_1(f^t_{x,a})-\cT_1(f^*_{x,a}))}^2}}\\
    \le&\sqrt{\EE_{x\sim Q,a\sim \pi,a_i\sim\pi_i}\sbr{\frac{1}{\sum_{i=1}^{t-1}\II\cbr{a=a_i}}}}\sqrt{68\log(|\cF|t^3/\delta)+2\sum_{i=1}^{t-1}Y_{f^t,i}}\\
\end{align*}
The first inequality is by Cauchy-Schwarz Inequality and the second is by \cref{lemma:apply_Freedman_union_t}.

Taking $f^t=\hat{f}^t$ in the above inequality and using our offline density estimation guarantee, we have that with probability at least $1-\delta$,
\begin{align*}
&\abr{\EE_{x\sim Q}\sbr{\Tilde{\cT_1}(\hat{f}^t_{x,\pi})-\tilde{\cT_1}(f^*_{x,\pi})}}\\
=&\abr{\EE_{x\sim Q,a\sim\pi}\sbr{\cT_1(\hat{f}^t_{x,a})-\cT_1(f^*_{x,a})}}\\
\le& \sqrt{\EE_{x\sim Q,a\sim \pi,a_i\sim\pi_i}\sbr{\frac{1}{\sum_{i=1}^{t-1}\II\cbr{a=a_i}}}}\sqrt{68\log(2|\cF|t^3/\delta)+4L_1^2\Est(\cF,t,\delta/2t^3)}.
\end{align*}
\end{proof}

\begin{lemma}\label{lemma:square_trick}
    Consider a measurable randomized contextual bandit algorithm that selects a stochastic policy $\pi_t$ based on $H_{t-1}$. Then for any $\delta\in(0,1)$, with probability at least $1-\delta$, for all $t>K$ and every stochastic policy $\pi\in\Pi$, the expected estimation error is upper bounded by
    \begin{align*}
    &\abr{\EE_{x\sim Q}\sbr{\cbr{\Tilde{\cT_1}(\hat{f}^t_{x,\pi})-\Tilde{\cT_1}(f^*_{x,\pi})}}}\\
    \le& \EE_{x\sim Q,a\sim\pi,a_i\sim\pi_i} \sbr{\frac{\beta_t}{\sum_{i=1}^{t-1}\II\cbr{a=a_i}}}+\frac{K\beta_t}{t},    
    \end{align*}
    where $$\beta_t=\sqrt{\frac{(34\log(2|\cF|t^3/\delta)+2L_1^2\Est(\cF,t,\delta/2t^3))t}{K}}.$$
\end{lemma}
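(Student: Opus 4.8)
The plan is to obtain \cref{lemma:square_trick} as an immediate corollary of \cref{lemma:estimation_error} by linearizing its product-of-square-roots bound through one elementary inequality; there is no new probabilistic content. The starting point is exactly the bound of \cref{lemma:estimation_error}: on an event of probability at least $1-\delta$, for all $t>K$ and all $\pi\in\Pi$,
\[
\abr{\EE_{x\sim Q}\sbr{\Tilde{\cT_1}(\hat{f}^t_{x,\pi})-\Tilde{\cT_1}(f^*_{x,\pi})}}\le \sqrt{\EE_{x\sim Q,a\sim\pi,a_i\sim\pi_i}\sbr{\frac{1}{\sum_{i=1}^{t-1}\II\cbr{a=a_i}}}}\cdot\sqrt{68\log(2|\cF|t^3/\delta)+4L_1^2\Est(\cF,t,\tfrac{\delta}{2t^3})}.
\]
The key algebraic observation is that, unwinding the definition $\beta_t=\sqrt{(34\log(2|\cF|t^3/\delta)+2L_1^2\Est(\cF,t,\delta/2t^3))\,t/K}$, the deterministic second factor satisfies $68\log(2|\cF|t^3/\delta)+4L_1^2\Est(\cF,t,\delta/2t^3)=\frac{2K}{t}\beta_t^2$, so the right-hand side equals $\beta_t\sqrt{\frac{2K}{t}\,u}$, where $u:=\EE_{x\sim Q,a\sim\pi,a_i\sim\pi_i}\bigl[1/\sum_{i=1}^{t-1}\II\cbr{a=a_i}\bigr]$.

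Next I would set $v:=K/t$ and apply the elementary inequality $\sqrt{2uv}\le u+v$, valid for all $u,v\ge 0$ (it is just $(u+v)^2\ge 2uv$). This gives $\beta_t\sqrt{\frac{2K}{t}\,u}=\beta_t\sqrt{2uv}\le \beta_t u+\beta_t v$. Since $\beta_t$ is deterministic given $t$, it can be pulled inside the expectation defining $u$, so $\beta_t u=\EE_{x\sim Q,a\sim\pi,a_i\sim\pi_i}\bigl[\beta_t/\sum_{i=1}^{t-1}\II\cbr{a=a_i}\bigr]$ and $\beta_t v=K\beta_t/t$, which together are exactly the asserted bound.

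The only points needing a brief check — none of which I expect to be a real obstacle — are: (i) that the random denominator $\sum_{i=1}^{t-1}\II\cbr{a=a_i}$ is almost surely at least $1$ for every $t>K$, so that $u$ is well-defined and finite; this holds because the warm-up phase of \cref{alg:UCCB_pseudo_code} plays each of the $K=|\cA|$ actions exactly once during the first $K$ rounds; and (ii) that the $(1-\delta)$-probability event is inherited verbatim from \cref{lemma:estimation_error}, so no additional union bound or probability loss is incurred. The substance of the lemma is cosmetic: it reshapes the estimation-error bound into a per-round bonus of the form $\beta_t/\sum_{i=1}^{t-1}\II\cbr{a=a_i}$ that matches the exploration bonus in the selection rule of \cref{alg:UCCB_pseudo_code}, which is what makes the subsequent optimism argument (\cref{lemma:optimism_upperbound}) and the regret decomposition in \cref{thm:main_guarantee} line up cleanly.
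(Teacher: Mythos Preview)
Your proposal is correct and essentially identical to the paper's proof: both start from \cref{lemma:estimation_error} and linearize the product of square roots via an AM--GM-type inequality, after rewriting the deterministic factor in terms of $\beta_t$. The only cosmetic difference is that the paper applies $\sqrt{ab}\le\tfrac12(a+b)$ with $a=\beta_t u$, $b=(68\log(\cdot)+4L_1^2\Est)/\beta_t$ (obtaining $\tfrac12\beta_t u+\tfrac{K\beta_t}{t}$ and then enlarging $\tfrac12$ to $1$), whereas your parametrization $\sqrt{2uv}\le u+v$ lands directly on the stated bound without the enlargement step.
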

\begin{proof}[Proof of \cref{lemma:square_trick}]
    We first apply \cref{lemma:estimation_error} and use the AM-GM inequality to get:
    \begin{align*}
    &\abr{\EE_{x\sim Q}\sbr{\cbr{\cT_1(\hat{f}^t_{x,\pi})-\cT_1(f^*_{x,\pi})}}}\\
    \le& \frac{1}{2}\EE_{x\sim Q,a\sim\pi,a_i\sim\pi_i} \sbr{\frac{\beta_t}{\sum_{i=1}^{t-1}\II\cbr{a=a_i}}}+\frac{34\log(2|\cF|t^3/\delta)+2L_1^2\Est(\cF,t,\delta/2t^3)}{\beta_t}.
    \end{align*}
    Then we focus on the term $\frac{34\log(2|\cF|t^3/\delta)+2L_1^2\Est(\cF,t,\delta/2t^3)}{\beta_t}$. Recall that we set 
    $$\beta_t=\sqrt{\frac{(34\log(2|\cF|t^3/\delta)+2L_1^2\Est(\cF,t,\delta/2t^3))t}{K}},$$
    Then, we know that 
    \[
    \frac{34\log(2|\cF|t^3/\delta)+2L_1^2\Est(\cF,t,\delta/2t^3)}{\beta_t}=\frac{K\beta_t}{t}.
    \]
   By plugging this back and noticing that we enlarge the coefficient of the first term from $1/2$ to $1$, we finish the proof.
\end{proof}
To bound the first term in \cref{lemma:square_trick}, we need the following potential lemma. 
\begin{lemma}\label{lemma:potential_lemma}
    Let $\cbr{a_t}_{t=1}^{T}$ be the action sequence that at first $K$ rounds, we explore every arm regardless of the context. After the first $t$ rounds, we place no restrictions on the sequence. Then for any $T>K$, we have,
    \[
    \sum_{t=K+1}^{T}\sbr{\frac{1}{\sum_{j=1}^{t-1}\II\cbr{a_t=a_j}}}\le K+K\log(\frac{T}{K})
    \]
\end{lemma}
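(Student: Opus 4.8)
The plan is to prove \cref{lemma:potential_lemma} by a direct counting argument, grouping the rounds according to which arm is pulled. First I would fix any arm $a \in \cA$ and consider the subsequence of rounds in $\{K+1, \dots, T\}$ at which $a_t = a$. Because each arm is pulled exactly once during the warm-up phase $t = 1, \dots, K$, by the time arm $a$ is pulled for the $j$-th time \emph{after} the warm-up, the count $\sum_{j'=1}^{t-1}\II\{a_t = a_{j'}\}$ equals $j$ (the one warm-up pull plus the $j-1$ earlier post-warm-up pulls). Hence the contribution of arm $a$ to the sum is at most $\sum_{j=1}^{n_a} \frac{1}{j}$, where $n_a$ denotes the number of times $a$ is pulled in rounds $K+1, \dots, T$.

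Next I would sum over all arms. Using the standard harmonic-series bound $\sum_{j=1}^{n} \frac{1}{j} \le 1 + \log n$, the total is at most $\sum_{a \in \cA}(1 + \log n_a) = K + \sum_{a \in \cA} \log n_a$. Since $\sum_{a \in \cA} n_a = T - K \le T$ and $\log$ is concave, Jensen's inequality gives $\sum_{a \in \cA} \log n_a \le K \log\left(\frac{1}{K}\sum_{a} n_a\right) \le K \log(T/K)$. Combining the two pieces yields the claimed bound $K + K\log(T/K)$.

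The only subtle point — and the main thing to get right — is the bookkeeping that the warm-up phase shifts every arm's count up by exactly one, which is precisely what makes the denominators $\sum_{j=1}^{t-1}\II\{a_t=a_j\}$ strictly positive for all $t > K$ (so the expression is well-defined) and what lets us index the $j$-th post-warm-up pull by the value $j$ rather than $j-1$ in the harmonic sum. This is not really an obstacle so much as a detail to state carefully; the rest is the textbook harmonic-series-plus-concavity estimate. No martingale or probabilistic machinery is needed here since the bound holds deterministically for any action sequence satisfying the warm-up condition.
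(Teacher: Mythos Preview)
Your proposal is correct and is essentially the paper's own argument: group the summands by arm, observe that the $j$-th post-warm-up pull of a given arm contributes $1/j$, bound each arm's partial harmonic sum by $1+\log n_a$, and finish with the concavity of $\log$. The only cosmetic difference is that the paper indexes by the total pull count $m_a = n_a+1$ (so that $\sum_a m_a = T$ exactly and $m_a \ge 1$ for every arm), which sidesteps the edge case $n_a=0$ that you would otherwise need to handle before applying the harmonic bound.
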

\begin{proof}[Proof of \cref{lemma:potential_lemma}]
We prove this by direct algebra,
    \begin{align*}
        \sum_{t=K+1}^{T}\frac{1}{\sum_{j=1}^{t-1}\II\cbr{a_t=a_j}}\le \sum_{a\in\cA}\sum_{i=1}^{\sum_{t=1}^{T}\II\cbr{a_t=a}}\frac{1}{i}\le\sum_{a\in\cA}(1+\log(\sum_{t=1}^{T}\II\cbr{a_t=a}))\le K+K\log(T/K).
    \end{align*}
\end{proof}
\begin{proof}[Proof of \cref{lemma:function_confidence_set}]
We use the statistical guarantee of our offline density estimation oracle. By assumption, with probability $1-\delta/2$, 
\[
\sum_{i=1}^{t-1}\CD^2(g^*_{x_i,a_i}||\hat{g}^t_{x_i,a_i})\le \Est(\cG,t,\frac{\delta }{2t^3}),
\]
uniformly for all $t$. Because we already know about the property of the safe action $a_0$, we have that
$\cT_2(g^*_{x,a_0})=c_0$. Therefore, we know that with probability at least $1-\delta$, $g^*\in \Tilde{C}_{\cG}(t,\delta)$ holds uniformly for all $t$.
\end{proof}
\begin{proof}[Proof of \cref{lemma:optimism_upperbound}]
    First of all, we have with probability at least $1-\delta$, $$\Tilde{V}_\delta^t(x,\pi)\le \Tilde{\cT_2}(g^*_{x,\pi})+\max_{g,g'\in \Tilde{C}_{\cG}(t,\delta)}\cbr{\Tilde{\cT_2}(g_{x,\pi})-\Tilde{\cT_2}(g_{x,\pi})}.$$
This is because $g^*\in\Tilde{C}_{\cG}(t,\delta)$ with high probability. Then,
\begin{align*}
\Tilde{V}_\delta^t(x,\pi)=&\Tilde{\cT_2}(g^*_{x,\pi})+\max_{g\in \Tilde{C}_{\cG}(t,\delta)}\cbr{\Tilde{\cT_2}(g_{x,\pi})-\cT_2(g^*_{x,\pi})}\\
\le& \Tilde{\cT_2}(g^*_{x,\pi})+\max_{g,g'\in \Tilde{C}_{\cG}(t,\delta)}\cbr{\Tilde{\cT_2}(g_{x,\pi})-\Tilde{\cT_2}(g_{x,\pi})}
\end{align*}
    From \cref{lemma:estimation_error}, we also know that with high probability, for any policy $\pi$
    \[
    \EE_{x\sim Q,a\sim\pi}\cT_1(f^*_{x,a})\le \EE_{x\sim Q,a\sim\pi}\cT_1(\hat{f}^t_{x,a})+\EE_{x\sim Q,a\sim\pi,a_i\sim\pi_i}\sbr{\frac{\beta_t}{\sum_{i=1}^{t-1}\II\cbr{a=a_i}}}+\frac{K\beta_t}{t}.
    \]
According to \cref{alg:UCCB_pseudo_code}, our algorithm selects
\begin{align*}
\pi_t\in\argmax_{\pi\in\tilde{\Pi}^t_{\delta}}\Bigg\{&\EE_{x\sim Q,a\sim\pi}\sbr{\cT_1\hat{f}^t_{x,a}}+\EE_{x\sim Q,a\sim\pi,a_i\sim\pi_i}\sbr{\frac{\beta_t}{\sum_{i=1}^{t-1}\II\cbr{a=a_i}}}\\
&+\alpha_r\EE_{x\sim Q,a\sim\pi}\max_{g',g''\in\tilde{C}_{\cG}(t,\delta)}\cT_2(g'_{x_t,a})-\cT_2(g''_{x_t,a})\Bigg\}.
\end{align*}
    Then we prove that with high probability, our adjusted policy upper confidence bound has,
    \begin{align*}
        &\EE_{x\sim Q}\Tilde{\cT_1}(\hat{f}^t_{x,\pi_t})+\EE_{x\sim Q,a\sim\pi,a_i\sim\pi_i}\sbr{\frac{\beta_t}{\sum_{i=1}^{t-1}\II\cbr{a=a_i}}}+\frac{K\beta_t}{t}\\
        &+\alpha_r\EE_{x\sim Q,a\sim\pi}\max_{g',g''\in\tilde{C}_{\cG}(t,\delta)}\cbr{\cT_2(g'_{x,a})-\cT_2(g''_{x,a})}\ge \EE_x\Tilde{\cT_1}(f^*_{x,\pi^*}),
    \end{align*}
    
    $\pi^*$ is the optimal feasible policy, and $\pi_t$ is the optimal upper confidence bound policy that we choose in round $t$.

If $\pi^*(x)\in\Tilde{\Pi}^t_\delta(x)$, then we could directly apply \cref{lemma:estimation_error} to get the result. Now we focus on the case where $\pi^*(x)\notin\Tilde{\Pi}^t_{\delta}(x)$. Let $\pi_0(x)=a_0$ be the policy that always applies safe action. By definition, $\cT_2(g^*_{x,a_0})=c_0$. Consider a mixed policy, given any $x$, $\Tilde{\pi}_t(\cdot|x)=\gamma_t^x\pi^*_t(\cdot|x)+(1-\gamma_t^x)\pi_0(\cdot|x)$. Thus, we define $\gamma_t^x$ to be the smallest number such that under this mixed policy, $\max_{g\in\Tilde{C}_{\cG}(t,\delta)}\Tilde{\cT_2}(g_{x,\Tilde{\pi}_t})=\tau$. 
Recall that $\tilde{V}^t_{\delta}(x,\pi)=\max_{g\in\tilde{C}_{\cG}(t,\delta)}\Tilde{\cT_2}(g_{x,\pi})$, for the optimal stochastic policy $\pi^*$, it is obvious that $\pi^*(\cdot|x)$ is also optimal for every context $x$, let $\tilde{g}^{x,t}=\argmax_{g\in\tilde{C}_{\cG}(t,\delta)}\Tilde{\cT_2}(g_{x,\pi^*})$.

Because when we apply the safe policy $\pi_0=a_0$, we will always encounter a cost $c_0$ regardless of the context, then 
\[
\tilde{g}^{x,t}=\argmax_{g\in\tilde{C}_{\cG}(t,\delta)}\Tilde{\cT_2}(g_{x,\gamma\pi^*+(1-\gamma)\pi_0}).
\]
Also,
\[
\tilde{V}^t_{\delta}(x,\gamma\pi^*+(1-\gamma)\pi_0)=\gamma\Tilde{\cT_2}(\tilde{g}^{x,t}_{x,\pi^*})+(1-\gamma)c_0.
\]
This indicates that there is a number $\gamma_t^x$ such that
\[
\tilde{V}^t_{\delta}(x,\gamma_t^x\pi^*+(1-\gamma_t^x)\pi_0)=\tau.
\]
By definition of $\tilde{V}^t_{\delta}$, we have
\[
\gamma_t^x\tilde{V}^t_{\delta}(x,\pi^*)+(1-\gamma_t^x)c_0=\tau.
\]
\[
\gamma_t^x=\frac{\tau-c_0}{\tilde{V}^t_{\delta}(x,\pi^*)-c_0}\ge \frac{\tau-c_0}{\tau-c_0+\max_{g,g'\in\tilde{C}_{\cG}(t,\delta)}\cbr{\Tilde{\cT_2}(g_{x,\pi^*})-\Tilde{\cT_2}(g'_{x,\pi^*})}}.
\]
Since $\pi_t(\cdot|x)$ and $\tilde{\pi}_t(\cdot|x)$ are both feasible at $x$, we have that 
\begin{align*}
&\Tilde{\cT_1}(\hat{f}^{t}_{x,\pi_t})+\beta_t\EE_{a\sim\pi_t}\sbr{\frac{1}{\sum_{j=K+1}^{t-1}\II\cbr{a=\tilde{a}_{t,j}}+1}}+\alpha_r\max_{g',g''\in\Tilde{C}_{\cG}(t,\delta)}\Tilde{\cT_2}(g'_{x,\pi_t})-\Tilde{\cT_2}(g''_{x,\pi_t})\\
\ge&\Tilde{\cT_1}(\hat{f}^{t}_{x,\tilde{\pi}_t})+\beta_t\EE_{a\sim\tilde{\pi}_t}\sbr{\frac{1}{\sum_{j=K+1}^{t-1}\II\cbr{a=\tilde{a}_{t,j}}+1}}+\alpha_r\max_{g',g''\in\Tilde{C}_{\cG}(t,\delta)}\Tilde{\cT_2}(g'_{x,\tilde{\pi}_t})-\Tilde{\cT_2}(g''_{x,\tilde{\pi}_t})\\
=&\gamma_t^x\rbr{\Tilde{\cT_1}(\hat{f}^t_{x,\pi^*})+\beta_t\EE_{a\sim\pi^*}\sbr{\frac{1}{\sum_{j=K+1}^{t-1}\II\cbr{a=\tilde{a}_{t,j}}+1}}+\alpha_r\max_{g',g''\in\Tilde{C}_{\cG}(t,\delta)}\Tilde{\cT_2}(g'_{x,\pi^*})-\Tilde{\cT_2}(g''_{x,\pi^*})}\\
+&(1-\gamma_x^t)\rbr{r_0+\beta_t\frac{1}{\sum_{j=K+1}^{t-1}\II\cbr{a_0=\tilde{a}_{t,j}}+1}}\\
\ge&\gamma_x^t\rbr{\Tilde{\cT_1}(\hat{f}^t_{x,\pi^*})+\beta_t\EE_{a\sim\pi^*}\sbr{\frac{1}{\sum_{j=K+1}^{t-1}\II\cbr{a=\tilde{a}_{t,j}}+1}}+\alpha_r C_1}+(1-\gamma_x^t)r_0,
\end{align*}
where $C_1=\max_{g',g''\in\Tilde{C}_{\cG}(t,\delta)}\Tilde{\cT_2}(g'_{x,\pi^*})-\Tilde{\cT_2}(g''_{x,\pi^*})$.
Recall that we could set $\gamma_t^x=\frac{\tau-c_0}{\tau-c_0+C_1}$, we want to set our $\alpha_r$ to have,
\[
\gamma_x^t\rbr{\Tilde{\cT_1}(\hat{f}^t_{x,\pi^*})+\beta_t\EE_{a\sim\pi^*}\sbr{\frac{1}{\sum_{j=K+1}^{t-1}\II\cbr{a=\tilde{a}_{t,j}}+1}}+\alpha_r C_1}+(1-\gamma_x^t)r_0\ge \Tilde{\cT_1}(\hat{f}^t_{x,\pi^*}).
\]
By some algebra, we could set $\alpha_r=\frac{1-r_0}{\tau-c_0}$ and it could be satisfied.

Finally, we conclude that, for any $x$, we select $\pi_t$ according to our algorithm, then,
\[
\Tilde{\cT_1}(\hat{f}^{t}_{x,\pi_t})+\beta_t\EE_{a\sim\pi_t}\sbr{\frac{1}{\sum_{j=K+1}^{t-1}\II\cbr{a=\tilde{a}_{t,j}}+1}}+\alpha_r\max_{g',g''\in\Tilde{C}_{\cG}(t,\delta)}\Tilde{\cT_2}(g'_{x,\pi_t})-\Tilde{\cT_2}(g''_{x,\pi_t})\ge \Tilde{\cT_1}(\hat{f}^t_{x,\pi^*}),
\]
where $\pi^*$ is the optimal feasible stochastic policy given $x$ and $\pi_t$ is the stochastic policy at round $t$. So we finish the proof.
\end{proof}

With all these lemmas equipped, now we can prove our main theoretical guarantee.
\begin{proof}[Proof of \cref{thm:main_guarantee}]
First, by \cref{lemma:estimation_error}, we have
\begin{align*}
    &\sum_{t=1}^{T}\EE_{x_t\sim Q,a_t\sim\pi^*(\cdot|x_t)}\sbr{\cT_1(f^*_{x_t,a_t})}-\EE_{x_t\sim Q,a_t\sim\pi_t(\cdot|x_t)}\sbr{\cT_1(f^*_{x_t,a_t})}\\
    \le& \sum_{t=1}^{T}\EE_{x_t\sim Q,a_t\sim \pi^*(\cdot|x_t)}\cT_1(\hat{f}^t_{x,a})+\EE_{x_t\sim Q,a_t\sim\pi_t,a_i\sim\pi_i}\sbr{\frac{\beta_t}{\sum_{i=1}^{t-1}\II\cbr{a=a_i}}}+\frac{K\beta_t}{t}-\EE_{x_t\sim Q,a_t\sim\pi_t(\cdot|x_t)}\sbr{\cT_1(f^*_{x_t,a_t})}.
\end{align*}
Then we apply \cref{lemma:optimism_upperbound} to get,

\begin{align*}
    &\sum_{t=1}^{T}\cbr{\EE_{x_t\sim Q,a_t\sim \pi^*(\cdot|x_t)}\cT_1(\hat{f}^t_{x,a})+\EE_{x_t\sim Q,a_t\sim\pi_t,a_i\sim\pi_i}\sbr{\frac{\beta_t}{\sum_{i=1}^{t-1}\II\cbr{a=a_i}}}-\EE_{x_t\sim Q,a_t\sim\pi_t(\cdot|x_t)}\sbr{\cT_1(f^*_{x_t,a_t})}}\\
 \le&\sum_{t=1}^{T}\cbr{\EE_{x_t\sim Q,a_t\sim \pi_t(\cdot|x)}\cT_1(\hat{f}_{t}(x,a_t,y))-\EE_{x_t\sim Q,a_t\sim\pi_t(\cdot|x_t)}\sbr{\cT_1(f^*_{x_t,a_t})}}+\sum_{t=1}^{T}\EE_{x_t\sim Q,a_t\sim\pi_t,a_i\sim\pi_i}\sbr{\frac{\beta_t}{\sum_{i=1}^{t-1}\II\cbr{a=a_i}}}\\
 &+\alpha_r\sum_{t=1}^{T}\EE_{x\sim Q}\sbr{\max_{g',g''\in\Tilde{C}_{\cG}(t,\delta)}\cT_2(g'_{x,\pi_t})-\cT_2(g''_{x,\pi_t})}\\
\end{align*}
For the first term, again we apply \cref{lemma:estimation_error},
\begin{align*}
&\sum_{t=1}^{T}\cbr{\EE_{x_t\sim Q,a_t\sim \pi_t(\cdot|x)}\cT_1(\hat{f}^{t}_{x,\pi_t})-\EE_{x_t\sim Q,a_t\sim\pi_t(\cdot|x_t)}\sbr{\cT_1(f^*_{x_t,a_t})}}\\
\le& \sum_{t=1}^{T}\EE_{x_t\sim Q,a_t\sim\pi_t,a_i\sim\pi_i}\sbr{\frac{\beta_t}{\sum_{i=1}^{t-1}\II\cbr{a=a_i}}}+\sum_{t=1}^{T}\frac{K\beta_t}{t}.    
\end{align*}

Combining all these parts together, we can upper bound the regret by
\begin{align*}
    2\sum_{t=1}^{T}\EE_{x_t\sim Q,a_t\sim\pi_t,a_i\sim\pi_i}\sbr{\frac{\beta_t}{\sum_{i=1}^{t-1}\II\cbr{a=a_i}}}+2\sum_{t=1}^{T}\frac{K\beta_t}{t}+\alpha_r\sum_{t=1}^{T}\EE_{x\sim Q}\sbr{\max_{g',g''\in\Tilde{C}_{\cG}(t,\delta)}\Tilde{\cT_2}(g'_{x,\pi_t})-\Tilde{\cT_2}(g''_{x,\pi_t})}.
\end{align*}
The first two terms are already well bounded. To bound the third term, we need to use the properties of generalized eluder dimension.

Specifically, we apply \cref{lemma:PDED_bound2} to get
\begin{align*}
    &\EE_{x_t\sim Q,a_t\sim\pi_t}\sbr{\max_{g',g''\in\tilde{C}_{\cG}(t,\delta)}\cT_2(g'_{x,a_t})-\cT_2(g''_{x,a_t})}\\
    \le&\EE_{x_t\sim Q,a_t\sim\pi_t}\sbr{C\min\cbr{\text{dim}_{E}(\cG,\CD,\frac{1}{T^2}),T}+4\sqrt{\text{dim}_{E}(\cG,\CD,1/T^2)T}}+1\\
    =&C\min\cbr{\text{dim}_{E}(\cG,\CD,\frac{1}{T^2}),T}+4\sqrt{\text{dim}_{E}(\cG,\CD,1/T^2)T}+1.
\end{align*}
Plug this bound in, we finish the proof.
\end{proof}

\subsection{Proofs in \cref{subsec:infinite_function_class}}
\begin{proof}[Proof of \cref{cor:extension_infinite_class}]
Since we have two density model classes. We focus on the utility density 
 class $\cF$. The discussion of the constraint model class $\cG$ is similar.

From the result on the covering of $d$-dimensional balls, the covering number of a $d$-dimensional ball with radius $\frac{\Delta}{2}$ and discretization error $\frac{1}{L_2t}$ is bounded by $(1+\Delta L_2t)^d$. So there exists a set $V_t$ of size no more than $(1+\Delta L_2t)^d+1\le (2+\Delta L_2t)^d$ that contains the true $\theta^*$ and satisfies
\[
\forall \theta\in\Theta,\exists v\in V_t, \text{s.t.} ||\theta-v||_2\le \frac{1}{L_2t}.
\]
\end{proof}
$\log|V_t|\le d\log(2+\Delta L_2t)$. For any $f_\theta\in\cF$, for any $x\in\cX$, $a\in\cA$, set $v$ to be the closest point to $\theta$ in $V_t$, we have
\[
\rbr{\cT_1(f_{\theta,x,a})-\cT_1(f^*_{x,a})}^2\le 2\rbr{\cT_1(f_{\theta,x,a})-\cT_1(f_{v,x,a})}^2+2\rbr{\cT_1(f_{v,x,a})-\cT_1(f_{\theta^*,x,a})}^2.
\]
We first use the Lipschitz continuity of the functional $\cT_1$ and then use the parametric assumption of the density class to get
\[
2\rbr{\cT_1(f_{\theta,x,a})-\cT_1(f_{v,x,a})}^2\le 2L_{\CD}^2L_2^2||\theta-v||_2^2.
\]
Then we could prove the following result similar to \cref{lemma:apply_Freedman}. First, we view $V_t$ as a finite density class and apply \cref{lemma:apply_Freedman} to get with probability at least $1-\log_2(t-1)\delta_t$:
\[
2\EE_{x_i,a_i}\sbr{\rbr{\cT_1(f_{v,x,a})-\cT_1(f_{\theta^*,x,a})}^2|H_{i-1}}\le 136\log(|V_t|/\delta_t)+4\sum_{i=1}^{t-1}Y_{f,i}.
\]
Combining all these together, we have with probability $1-\log_2(t-1)\delta_t$,
\[
\EE_{x_i,a_i}\sbr{\rbr{\cT_1(f_{\theta,x,a})-\cT_1(f^*_{x,a})}^2}\le 136\log(|V_t|/\delta_t)+4\sum_{i=1}^{t-1}Y_{f,i}+2L_{\CD}^2,
\]
uniformly over all $f_{\theta}$.

Then, similar to \cref{lemma:apply_Freedman_union_t}, we set $\delta_t=\frac{\delta}{t^3}$ and apply the union bound over $t$ to get with probability at least $1-\delta/2$
\[
\sum_{i=1}^{t-1}\EE_{x_i,a_i}\sbr{(\cT_1(f^t_{x_i,a_i}-\cT_1(f_{x_i,a_i})))^2|H_{i-1}}\le 136\rbr{d\log(2+\Delta L_2t)+\log\frac{2t^3}{\delta}}+2L_{\CD}^2+4\sum_{i=1}^{t-1}Y_{f_t,i},
\]
uniformly over all $t\ge 2$ and sequence $f^2,f^3,\cdots\in\cF$. 
Then similar to the proof of \cref{lemma:estimation_error}, we could show that
\begin{align*}
    &\abr{\EE_{x\sim Q}\sbr{\cbr{\cT_1(\hat{f}^t_{x,\pi}-\cT_1(f^*_{x,\pi})}}}\\
    \le& \sqrt{\EE_{x\sim Q,a\sim\pi,a_i\sim\pi_i} \sbr{\frac{1}{\sum_{i=1}^{t-1}\II\cbr{a=a_i}}}}\sqrt{136\rbr{d\log(2+\Delta L_2t)+\log(\frac{2t^3}{\delta})}+2L_{\CD}^2+4L_1^2\Est(\cF,t,\frac{\delta}{2t^3})}.    
    \end{align*}
Setting $$\beta_t=\sqrt{\frac{\cbr{72\rbr{d\log(2+\Delta L_2t)+\log(\frac{2t^3}{\delta})}+L_{\CD}^2+2L_1^2\Est(\cF,t,\frac{\delta}{2t^3})}t}{K}},$$ 
we have,
\begin{align*}
    &\abr{\EE_{x\sim Q}\sbr{\cbr{\cT_1(\hat{f}^t_{x,\pi})-\cT_1(f^*_{x,\pi})}}}\\
\le&\EE_{x\sim Q,a\sim\pi,a_i\sim\pi_i} \sbr{\frac{\beta_t}{\sum_{i=1}^{t-1}\II\cbr{a=a_i}}}+\frac{K\beta_t}{t}.
\end{align*}
Then we follow the proof of \cref{thm:main_guarantee} to get:
\begin{align*}
    \text{Reg}(T)\le& 2\sum_{t=1}^{T}\EE_{x_t\sim Q,a_t\sim\pi_t,a_i\sim\pi_i}\sbr{\frac{\beta_t}{\sum_{i=1}^{t-1}\II\cbr{a=a_i}}}+2\sum_{t=1}^{T}\frac{K\beta_t}{t}\\
    &+\alpha_r\sbr{2\min\cbr{\text{dim}_{E}(\cG,\CD,\frac{1}{T^2}),T}+4\sqrt{\text{dim}_{E}(\cG,\CD,1/T^2)T}+1}.
\end{align*}
Plugging in the value of $\beta_t$ and notice that $\beta_t\le \beta_T$, we finish the proof.
\section{Proofs in \cref{sec:bounding_PDED}}
\begin{proof}[Proof of \cref{thm:linearclass_PDED_bound}]
First, because we consider $L^2$ divergence, then 
\[
D(f_{x,a}||f'_{x,a})=(\theta-\theta')^T\int_{\Omega}\phi_{x,a}(y)\phi^T_{x,a}(y)dy(\theta-\theta')
\]
if $w_k\ge \varepsilon'$, then by definition, 
\[
w_k\le \max\cbr{\sqrt{(\theta-\theta')^Tm_k(\theta-\theta')}:(\theta-\theta')^TM_k(\theta-\theta')\le(\varepsilon')^2,(\theta-\theta')^TI_d(\theta-\theta')\le (2R)^2}
\]
Denote $\rho=\theta-\theta'$, the right-hand side is then smaller than
\[
\max\cbr{\sqrt{(\theta-\theta')^Tm_k(\theta-\theta')}:(\theta-\theta')^T(M_k+\lambda I_d)(\theta-\theta')\le(\varepsilon')^2+\lambda(2R)^2}.
\]
The solution of this optimization problem is $\sqrt{\lambda_{\max}\rbr{(M_k+\lambda I_d)^{-1}m_k}((\varepsilon')^2+\lambda(2R)^2)}$, thus we get,
\[
\lambda_{\max}\rbr{(M_k+\lambda I_d)^{-1}m_k}\ge \frac{(\varepsilon')^2}{(\varepsilon')^2+4R^2\lambda}.
\]
Then by Weyl's inequality \cref{lemma:Weyl}, we have,
\begin{align*}
    \lambda_{\max}(M_k+\lambda I_d)&=\lambda_{\max}\rbr{(M_{k-1}+\lambda I_d)(I_d+(M_{k-1}+\lambda I_d)^{-1}m_{k-1})}\\
    &\ge\lambda_{\min}(M_{k-1}+\lambda I_d)(1+\frac{(\varepsilon')^2}{(\varepsilon')^2+4R^2\lambda})\\
    \ge &(1+\frac{(\varepsilon')^2}{(\varepsilon')^2+4R^2\lambda})\alpha \lambda_{\max}(M_{k-1}+\lambda I_d)
\end{align*}
We set $\lambda=\frac{(2-1/\alpha)(\varepsilon')^2}{8R^2(1/\alpha-1)}$ to get,
\[
(1+\frac{(\varepsilon')^2}{(\varepsilon')^2+4R^2\lambda})\alpha=3\alpha-2\alpha^2>1,
\]
Thus, we iterate this formula to get,
\[
\lambda_{\max}(M_k+\lambda I_d)\ge (3\alpha-2\alpha^2)^{k-1}\lambda,
\]
On the other hand, 
\[
\lambda_{\max}(M_k+\lambda I_d)\le \lambda+kS^2.
\]
Then we must have,
\[
\lambda+(k-1)S^2\ge (3\alpha-2\alpha^2)^{k-1}\lambda\Leftrightarrow(3\alpha-2\alpha^2)^{k-1}\le 1+\frac{S^2}{\lambda}(k-1).
\]
Manipulating this inequality, we define the following quantity:
$B(x,\beta)=\max\cbr{B:x^B\le \beta B+1}$, where $x>1$.
$B\log x\le \log(1+\beta)+\log B$. Since $\log(x)\ge \frac{x-1}{x}$ for $x>1$, using the transformation of $z=B[\frac{x-1}{x}]$, we have
\begin{align*}
    z\le \log(1+\beta)+\log z+\log \frac{x}{x-1}\le \log(1+\beta)+z/e+\log \frac{x}{x-1}.
\end{align*}
Thus,
$z\le \frac{e}{e-1}\rbr{\log(1+\beta)+\log\frac{x}{x-1}}$.

Therefore, $B\le \frac{x}{x-1}\frac{e}{e-1}\rbr{\log(1+\beta)+\log\frac{x}{x-1}}$.
Plug $x=3\alpha-2\alpha^2$ and $\beta=\frac{S^2}{\lambda}$ in, we shall get the following result.
\[
\text{dim}_{E}(\cF,\CD_{L^2},\varepsilon)\le \frac{3\alpha-2\alpha^2}{3\alpha-2\alpha^2-1}\frac{e}{e-1}(\log(1+\frac{8R^2S^2(1/\alpha-1)}{(2-1/\alpha)\varepsilon^2})+\log(\frac{3\alpha-2\alpha^2}{3\alpha-2\alpha^2-1})).
\]
So we finish the proof.
    
\end{proof}
\begin{proof}[Proof of \cref{thm:Gaussian_PDED_bound}]
    If $w_k\ge \varepsilon'$ then
\[
\varepsilon'\le w_k\le \overline{\Delta}\max\cbr{(\theta-\theta')^T\phi_{x_k,a_k}:(\theta-\theta')^T\sum_{i=1}^{k-1}\phi_{x_i,a_i}\phi^T_{x_i,a_i}(\theta-\theta')\le (\varepsilon')^2 ||\theta-\theta'||_2\le2}.
\]
Then we define $V_k=\sum_{i=1}^{k-1}+\phi_{x_i,a_i}\phi^T_{x_i,a_i}+\lambda I$, $\lambda=(\frac{\varepsilon'}{2})^2$. we solve the optimization problem to get
\[
\overline{\Delta}\sqrt{2(\varepsilon')^2}||\phi_k||_{V_k^{-1}}\ge \varepsilon'.
\]
Thus $\phi_k^TV_k^{-1}\phi_k\ge \frac{1}{2\overline{\Delta}}$.

For the second step, if we have a sequence $\cbr{(x_i,a_i)}_{i=1}^{k}$ such that $w_i\ge \varepsilon'$ for each $i<k$, i.e., $k\le \text{dim}_{E}(\cF,D_{H},\varepsilon)$ , by matrix determinant lemma, we have
\[
\det(V_k)=\det(V_{k-1})(1+\phi^T_kV_k^{-1}\phi_k)\ge (1+\frac{1}{2\overline{\Delta}})\det(V_{k-1})\ge\cdots\ge\lambda^d\rbr{1+\frac{1}{\overline{\Delta}}}^{k-1}.
\]
On the other hand, we know that $\det V_k\le(\frac{trace(V_k)}{d})^d\le \rbr{d+\frac{k-1}{d}}^d.$

Therefore, we must have,
\[
\rbr{1+\frac{1}{2\overline{\Delta}}}^{\frac{k-1}{d}}\le\frac{1}{\lambda}[\frac{k-1}{d}]+1.
\]
Following the previous subsection, let $B(x,\alpha)=\max\cbr{B:(x+1)^{B}\le \alpha B+1}$, from $(x+1)^{B}\le \alpha B+1$, we get
\[
\log(1+x)B\le\log(1+\alpha)+\log B.
\]
By applying change of variable $y=B[\frac{1+x}{x}]$, we have
\[
y\le \log (1+\alpha)+\log(1+x/x)+\log y\le \log(1+\alpha)+\log\frac{1+x}{x}+y/e
\]
Thus,
\[
y\le\frac{e}{e-1}(\log(1+\alpha)+\log(\frac{1+x}{x})),
\]
so $B(x,\alpha)\le \frac{1+x}{x}\rbr{\frac{e}{e-1}(\log(1+\alpha)+\log(\frac{1+x}{x}))}$.
We set $x=\frac{1}{2\overline{\Delta}}$ and $\alpha=\frac{1}{\lambda}=\frac{4}{(\varepsilon')^2}$ and get
\[
(k-1)/d\le (1+2\overline{\Delta})\frac{e}{e-1}\rbr{\log(1+\frac{4}{(\varepsilon')^2})+\log(1+2\overline{\Delta})}.
\]
So
\[
k-1\le d\rbr{(1+2\overline{\Delta})\frac{e}{e-1}\rbr{\log(1+\frac{4}{(\varepsilon')^2})+\log(1+2\overline{\Delta})}}
\]
\end{proof}
\begin{proof}[Proof of \cref{thm:Exponential_PDED_bound}]
    We focus on computing the Hellinger distance. By direct calculation, we have, 
\[
D_H^2(f^1_{x,a}||f^2_{x,a})=1-\exp\cbr{A(\frac{\eta_1+\eta_2}{2})-\frac{1}{2}(A(\eta_1)+A(\eta_2))}.
\]
We apply Taylor expansion of $A(\eta)$ at point $\eta_m=\frac{\eta+\eta'}{2}$.
\[
A(\eta)=A(\frac{\eta_1+\eta_2}{2})+(\eta-\eta_m)^T\nabla A(\eta_m)+\frac{1}{2}(\eta-\eta_m)^T\nabla^2 A(\xi)(\eta-\eta_m),
\]
for some $\xi$.
We plug $\eta=\eta_1$ and $\eta=\eta_2$ in respectively to get
\begin{align*}
    -\frac{1}{8}\overline{\lambda}||\eta_1-\eta_2||_2^2\le A((\eta_1+\eta_2)/2)-\frac{1}{2}A(\eta_1)-\frac{1}{2}A(\eta_2)\le -\frac{1}{8}\underline{\lambda}||\eta_1-\eta_2||_2^2.
\end{align*}
Therefore,
\[
1-\exp\cbr{-\frac{1}{8}\underline{\lambda}||\eta_1-\eta_2||_2^2}\le D_H^2(f^1_{x,a}||f^2_{x,a})\le 1-\exp\cbr{-\frac{1}{8}\overline{\lambda}||\eta_1-\eta_2||_2^2}.
\]
Again, because we know that $W$ is in a compact set and $\phi$ has bounded norm, then we know that there is some constant $c$ such that $cx\le 1-e^{-x}\le x$.

Now, if we have $w_k\ge\varepsilon'$, i.e., 
\begin{align*}
&\frac{\overline{\lambda}}{8}\max_{\rho=(W_1-W_2)\phi_k,||W_1||\le \beta,||W_2||\le \beta}\cbr{\rho^T\rho:\frac{c\underline{\lambda}}{8}\sum_{i=1}^{k-1}\phi_i^T(W_1-W_2)^T(W_1-W_2)\phi_i\le (\varepsilon')^2}\\
\ge& w_k^2=D_H^2(f^1_{x_k,a_k}||f^2_{x_k,a_k})\ge (\varepsilon')^2, 
\end{align*}
where we write $\phi_i=\phi_{x_i,a_i}$.

Furthermore, for the optimization problem
\begin{align*}
&\frac{\overline{\lambda}}{8}\max_{\rho=(W_1-W_2)^T\phi_k,||W_1||\le\beta||W_2||\le \beta}\cbr{\rho^T\rho:\frac{c\underline{\lambda}}{8}\sum_{i=1}^{k-1}\phi_i^T(W_1-W_2)^T(W_1-W_2)\phi_i\le (\varepsilon')^2},\\   
\end{align*}
by denoting $W_1-W_2$ as $W$ and $X=W^TW$, we have $\sum_{i=1}^{k-1}\phi_i^T(W_1-W_2)^T(W_1-W_2)\phi_i=tr(X\sum_{i=1}^{k-1}\phi_i\phi_i^T)$. Similarly, we have $\phi_k^T(W_1-W_2)^T(W_1-W_2)\phi_k=tr(\phi_k\phi_k^TX)$. Therefore,
\begin{align*}
    &\max_{\rho=(W_1-W_2)^T\phi_k,||W_1||\le\beta||W_2||\le \beta}\cbr{\rho^T\rho:\frac{c\underline{\lambda}}{8}\sum_{i=1}^{k-1}\phi_i^T(W_1-W_2)^T(W_1-W_2)\phi_i\le (\varepsilon')^2}\\
 \le&\max_{X\succeq 0}\cbr{tr(\phi_k\phi_k^TX):tr\rbr{X\rbr{\sum_{i=1}^{k-1}\phi_i\phi_i^T+uI}}\le \rbr{\frac{8}{c\underline{\lambda}}+\kappa}(\varepsilon')^2}.
\end{align*}
where $\kappa(\varepsilon')^2=2u\beta$.

By solving this optimization problem, we get that the optimal value of the RHS optimization problem is $\rbr{\frac{8}{c\underline{\lambda}}+\kappa}(\varepsilon')^2\phi_k^{T}\rbr{\sum_{i=1}^{k-1}\phi_i\phi_i^T+uI}^{-1}\phi_k$. Thus, denoting $V_k=\sum_{i=1}^{k-1}\phi_i\phi_i^T+uI$ and combining with $w_k\ge \varepsilon'$, we have,
\[
||\phi_k||_{V_k^{-1}}\ge \frac{1}{\sqrt{\kappa+\frac{8}{c\underline{\lambda}}}}.
\]
Now if we have a sequence $\cbr{(x_i,a_i)}_{i=1}^{k}$ such that $w_i\ge \varepsilon'$ for all $i<k$, then by matrix determinant lemma
\[
\det(V_k)\ge u^d(1+\frac{1}{\sqrt{\kappa+\frac{8}{c\underline{\lambda}}}})^{k-1}.
\]
On the other hand, we have
\[
\det(V_k)\le \rbr{\frac{k-1}{d}+u}^d.
\]
Thus we must have 
\[
u^d(1+\frac{1}{\sqrt{\kappa+\frac{8}{c\underline{\lambda}}}})^{k-1}\le \rbr{\frac{k-1}{d}+u}^d.
\]
Similar to previous section, this implies
\[
\rbr{1+\frac{1}{\sqrt{\kappa+\frac{8}{c\underline{\lambda}}}}}^{\frac{k-1}{d}}\le \frac{1}{u}[\frac{k-1}{d}]+1.
\]
This indicates that
\[
k-1\le d\rbr{\rbr{1+\sqrt{\kappa+\frac{8}{c\underline{\lambda}}}}\frac{e}{e-1}\rbr{\log(1+\frac{2\beta}{\kappa(\varepsilon')^2})}+\log\rbr{1+\sqrt{\kappa+\frac{8}{c\underline{\lambda}}}}}.
\]
We set $u=\frac{(\varepsilon')^2}{4\beta}$ and $\kappa=\frac{1}{2}$ and we finish the proof.
\end{proof}
\section{Proofs in \cref{sec:oracle_examples}}
\begin{proof}[Proofs in \cref{thm:least_square}]
    Recall that in \cref{thm:Gaussian_PDED_bound}, we show that the Hellinger distance between $f_{\theta,x,a}(x,a,y)$ and $f_{\theta',x,a}$ is bounded by $\cO(|(\theta-\theta')\phi(x,a)|^2)$. By some algebra, we have that 
\begin{align*}
    \sum_{i=1}^{n}\rbr{(\theta^*)^T\phi_{x_i,a_i}-(\hat{\theta}_{LS})^T\phi_{x_i,a_i}}^2\le 2\sum_{i=1}^{n}\rbr{(\theta^*)^T\phi_{x_i,a_i}-(\hat{\theta}_{LS})^T\phi_{x_i,a_i}}\varepsilon_i,
\end{align*}
where $\cbr{\varepsilon_i}_{i=1}^{n}\sim \cN(0,\sigma^2)$ are i.i.d. centered Gaussian distribution random variables. This is known as the basic inequality in regression. Denoting $\Phi\in\RR^{n\times d}$ as the design data matrix and $y=(y_1,\cdots,y_n)^T$, $\varepsilon=(\varepsilon_1,\cdots,\varepsilon_n)^T$, we rewrite the model above as $y=\Phi\theta^*+\varepsilon$. Thus, what we have is equivalent to
\[
||\Phi\theta^*-\Phi\hat{\theta}_{LS}||\le 2\inner{\varepsilon}{v(\varepsilon)}.
\]
where $v(\varepsilon)=\frac{\Phi\theta^*-\Phi\hat{\theta}_{LS}}{||\Phi\theta^*-\Phi\hat{\theta}_{LS}||}$.
Let $U$ be a matrix with orthogonal columns such that its column space is the column space of $X$. 
Thus we get
\[
||\Phi\theta^*-\Phi\hat{\theta}_{LS}||\le 2\sup_{||a||\le 1}\inner{\varepsilon}{Ua}=2||U^T\varepsilon||_2.
\]
Because $\varepsilon$ is a Gaussian random vector and $U$ is orthogonal, so $U^T\varepsilon$ is another Gaussian random vector and $U^T\varepsilon\sim\cN(0,\sigma^2 I_r)$ where $r=rank(X)$.
Thus \[
||\Phi\theta^*-\Phi\hat{\theta}_{LS}||_2^2\le 4||U^T\varepsilon||_2^2.
\]
Then we apply the Bernstein tail bound to achieve that with probability at least $1-\delta$,
\[
||U^T\varepsilon||_2^2\le \sigma^2(r+2\sqrt{r\log(1/\delta)}+2\log(1/\delta))\le\sigma^2(d+2\sqrt{d\log(1/\delta)}+2\log(1/\delta)).
\]
Thus we have \[
\sum_{i=1}^{n}\CD_H^2(f^*_{x_i,a_i}||\hat{f}^n_{x_i,a_i})\le \sigma^2(d+2\sqrt{d\log(1/\delta)}+2\log(1/\delta)).
\]
So we finish the proof.
\end{proof}
\begin{proof}[Proof of \cref{lemma:MLE}]
    For any function $f\in\cF$, we define that
    \[
    Z_i=\frac{-1}{2}(\log f^*_{x_i,a_i}(y_i)-\log f_{x_i,a_i}(y_i)).
    \]
    Then we know that from \cref{lemma:exp_martingale}, we know that with probability at least $1-\delta/|\cF|$,
    \[
    \sum_{t=1}^{T}\frac{-1}{2}(\log f^*_{x_i,a_i}(y_i)-\log f_{x_i,a_i}(y_i))\le \sum_{t=1}^{T}\log\rbr{\EE\sbr{\sqrt{\frac{f_{x_i,a_i}(y_i)}{f^*_{x_i,a_i}(y_i)}}}}+\log(|\cF|/\delta).
    \]
    We further have that by the inequality $\log(1+x)\le x$ that
    \[
    \log\rbr{\EE\sbr{\sqrt{\frac{f_{x_i,a_i}(y_i)}{f^*_{x_i,a_i}(y_i)}}}}\le \EE\sbr{\sqrt{\frac{f_{x_i,a_i}(y_i)}{f^*_{x_i,a_i}(y_i)}}-1}.
    \]
    On the other hand, we have
    \[
    \EE\sbr{\sqrt{\frac{f_{x_i,a_i}(y_i)}{f^*_{x_i,a_i}(y_i)}}-1}\le -\CD_{H}^2(f_{x_i,a_i}(y_i)||f^*_{x_i,a_i}(y_i)).
    \]
   Plugging this back and by some algebra, we have
    \[
    \sum_{t=1}^{T}\CD_{H}^2(f_{x_t,a_t}(y_t)||f^*_{x_t,a_t}(y_t))\le \sum_{t=1}^{T}\frac{1}{2}\rbr{\log(f^*_{x_t,a_t}(y_t))-\log(\hat{f}^t_{x_t,a_t}(y_t))}+\log(|\cF|/\delta)\le \log(|\cF|/\delta).
    \]
    The last inequality is by the definition of MLE.
\end{proof}
\end{document}